\newtheorem{assumption}{Assumption}
\newtheorem{remark}{Remark}
\newtheorem{theorem}{Theorem}
\newtheorem{definition}{Definition}
\newtheorem{corollary}{Corollary}
\newtheorem{proposition}{Proposition}
\newtheorem{lemma}{Lemma}
\newcommand{\cS}{\mathcal{S}}
\newcommand{\cA}{\mathcal{A}}
\newcommand{\cP}{\mathcal{P}}
\newcommand{\bE}{\mathbb{E}}
\newcommand{\bP}{\mathbb{P}}
\newcommand{\bR}{\mathbb{R}}
\newcommand{\pit}{{\pi_{\theta}}}
\newcommand{\bI}{\mathbbm{1}}
\newenvironment{customthm}[1]
  {\innercustomthm}
  {\endinnercustomthm}
\newcommand{\dt}{{s\sim d_\mu^{\pi^*}(s), a\sim \pi_t(a|s)}}
\begin{document}

\onecolumn

\title{\textbf{Finite-time analysis of entropy-regularized neural natural actor-critic algorithm}}

\author{Semih \c{C}ayc{\i}\thanks{CSL, University of Illinois at Urbana-Champaign, {\tt scayci@illinois.edu}}~~~~~~Niao He\thanks{Department of Computer Science, ETH Zurich, {\tt niao.he@inf.ethz.ch}}~~~~~~R. Srikant\thanks{ECE and CSL, University of Illinois at Urbana-Champaign, {\tt rsrikant@illinois.edu}}}
\date{\today}

\begingroup
\fontfamily{ppl}\selectfont
\maketitle
\endgroup

\begin{abstract}%
Natural actor-critic (NAC) and its variants, equipped with the representation power of neural networks, have demonstrated impressive empirical success in solving Markov decision problems with large (potentially continuous) state spaces. In this paper, we present a finite-time analysis of NAC with neural network approximation, and identify the roles of neural networks, regularization and optimization techniques (e.g., gradient clipping and averaging) to achieve provably good performance in terms of sample complexity, iteration complexity and overparametrization bounds for the actor and the critic. In particular, we prove that (i) entropy regularization and averaging ensure stability by providing sufficient exploration to avoid near-deterministic and strictly suboptimal policies and (ii) regularization leads to sharp sample complexity and network width bounds in the regularized MDPs, yielding a favorable bias-variance tradeoff in policy optimization. In the process, we identify the importance of uniform approximation power of the actor neural network to achieve global optimality in policy optimization due to distributional shift.
\end{abstract}
{\newpage \tableofcontents}

\section{Introduction}

In reinforcement learning (RL), an agent aims to find an optimal policy that maximizes the expected total reward in a Markov decision process (MDP) by interacting with an unknown and dynamical environment \cite{sutton2018reinforcement, szepesvari2010algorithms, bertsekas1996neuro}. Policy gradient methods~\cite{williams1992simple, sutton1999policy, konda2000actor}, which employ first-order optimization methods to find the best policy within a parametric policy class, have demonstrated impressive success in numerous complicated RL problems. The success largely benefits from the versatility of policy gradient methods in accommodating a rich class of function approximation schemes \cite{mnih2016asynchronous, silver2016mastering, nachum2017trust, duan2016benchmarking}. 

Natural policy gradient (NPG), natural actor-critic (NAC) and their variants, which use Fisher information matrix as a pre-conditioner for the gradient updates \cite{amari1998natural,kakade2001natural,  bhatnagar2007incremental, peters2008natural}, are particularly popular because of their impressive empirical performance in practical applications. In practice, NPG/NAC methods are further combined with (a) neural network approximation for high representation power of both the actor and the critic, and (b) entropy regularization for stability and sufficient exploration, leading to remarkable performance in complicated control tasks that involve large state-action spaces \cite{haarnoja2018soft, nachum2017trust, ahmed2019understanding}.

Despite the empirical successes, a strong theoretical understanding of policy gradient methods, especially when boosted with function approximation and entropy regularization, appears to be in a nascent stage. Recently, there has been a plethora of theoretical attempts to understand the convergence properties of policy gradient methods and the role of entropy regularization \cite{agarwal2020optimality, bhandari2019global, lan2021policy, cen2020fast, mei2020global}. These works predominantly study the tabular setting, where a parallelism between the well-known policy iteration and policy gradient methods can be exploited to establish the convergence results. But for the more intriguing function approximation regime, especially with neural network approximation, little theory is known. Two of the main challenges come from the highly nonconvex nature of the problem when using neural network approximation for both the actor and the critic, and the complex exploration dynamics.

In this paper, we provide the first non-asymptotic analysis of an entropy-regularized natural actor-critic (NAC) method in which we use two separate two-layer neural networks for the actor and critic, and employ a learning scheme based on approximate natural policy gradient updates to achieve optimality. We show that the expressive power of these neural networks provide the ability to achieve optimality within a broad class of policies.

\subsection{Main Contributions}
We elaborate some of our contributions below.
 
\begin{itemize}
    \item \textit{Sharp sample complexity, convergence rate and overparameterization bounds:} We prove sharp convergence guarantees in terms of sample complexity, iteration complexity and network width. Particularly, We prove that the NAC method with an adaptive step-size achieves sharp $\tilde{O}(1/\epsilon)$ iteration complexity and $\tilde{O}(1/\epsilon^5)$ sample complexity to achieve $\epsilon$ gap with the optimal policy of the regularized MDP under mildest distribution mismatch conditions to the best of our knowledge. The required network width for both the actor and critic are  $\tilde{O}(1/\epsilon^4)$ and $\tilde{O}(1/\epsilon^2)$, respectively. Under the standard distribution mismatch assumption as in \cite{wang2019neural}, our sample complexity bound for the unregularized MDP is $\tilde{O}(1/\epsilon^6)$, which improves the existing bounds significantly.
    
    \item \textit{Stable policy optimization:} Existing works on neural policy gradient methods with neural network approximation \emph{assume} that the policies perform sufficient exploration to avoid instability, i.e., convergence to near-deterministic and strictly suboptimal stationary policies. In this paper, we prove that policy optimization is stabilized by incorporating entropy regularization, gradient clipping and averaging. In particular, we show that the combination of these methods leads to \emph{``persistence of excitation"} condition, which ensures sufficient exploration to avoid near-deterministic and strictly suboptimal stationary policies. Consequently, we prove convergence to the globally optimal policy under the mildest concentrability coefficient assumption for on-policy NAC to the best of our knowledge.
    
    \item \textit{Understanding the dynamics of neural network approximation in policy optimization:} Our analysis reveals that the uniform approximation power of the actor network to approximate Q-functions throughout policy optimization steps is crucial to ensure global (near-)optimality, which is a specific feature of reinforcement learning that induces a distributional shift over time in contrast to a static supervised learning problem. To that end, we establish high-probability bounds for a two-layer feedforward actor neural network to uniformly approximate Q-functions of the policy iterates during the training. 
    
\end{itemize}

\subsection{Related Work}

\textit{Policy gradient and actor-critic:} Policy gradient methods use a gradient-based scheme to find the optimal policy \cite{williams1992simple, sutton1999policy}. \cite{kakade2001natural} proposed the natural gradient method, which uses the Fisher information matrix as a pre-conditioner to fit the problem geometry better. Actor-critic method, which learns approximations to both state-action value functions and policies for variance reduction, was introduced in \cite{konda2000actor}. 

 \textit{Neural actor-critic methods:} Recently, there has been a surge of interest in direct policy optimization methods for solving MDPs with large state spaces by exploiting the representation power of deep neural networks. Particularly, deterministic policy gradient \cite{silver2014deterministic}, trust region policy optimization (TRPO) \cite{schulman2015trust}, proximal policy optimization (PPO) \cite{schulman2017proximal}, soft actor-critic (SAC) \cite{haarnoja2018soft, lee2020stochastic} achieved impressive empirical success in solving complicated control tasks.

\textit{Role of regularization:} Entropy regularization is an essential part of policy optimization algorithms (e.g., TRPO, PPO and SAC) to encourage exploration and achieve fast and stable convergence. It has been numerically observed that entropy regularization leads to a smoother optimization landscape, which leads to improved convergence properties in policy optimization \cite{ahmed2019understanding}. For tabular reinforcement learning, the impact of entropy regularization was studied in \cite{agarwal2020optimality, cen2020fast, mei2020global}. On the other hand, the function approximation regime leads to considerably different dynamics compared to the tabular setting mainly because of the generalization over a large state space, complex exploration dynamics and distributional shift. As such, the role of regularization is very different in the function approximation regime, which we study in this paper.

\textit{Theoretical analysis of policy optimization methods:} Despite the vast literature on the practical performance of PG/AC/NAC type algorithms, their theoretical understanding has remained elusive until recently. In the tabular setting, global convergence rates for PG methods were established in \cite{agarwal2020optimality, bhandari2019global, khodadadian2021linear}. 
By incorporating entropy regularization, it was shown in \cite{shani2020adaptive, lan2021policy, cen2020fast, zhan2021policy, yang2020finding} that the convergence rate can be improved significantly in the tabular setting. Finite-time performances of off-policy actor-critic methods in the tabular and linear function approximation regimes were investigated in \cite{khodadadian2021finite, chen2022finite}. In our paper, we consider neural network approximation under entropy regularization with on-policy sampling.  

On the other hand, when the controller employs a function approximator for the purpose of generalization to a large state-action space, the convergence properties of policy optimization methods radically change due to more complicated optimization landscape and distribution mismatch phenomenon in reinforcement learning \cite{agarwal2020optimality}. Under strong assumptions on the exploratory behavior of policies throughout learning iterations, global optimality of NPG with linear function approximation up to a function approximation error was established in \cite{agarwal2020optimality}. For actor-critic and natural actor-critic methods with linear function approximation, there are finite-time analyses in \cite{chen2021finite, xu2020improving, zhang2021convergence}. For general actor schemes with linear critic, convergence to stationary points was investigated in \cite{kumar2019sample, wu2020finite, qiu2021finite}.

By incorporating entropy regularization, it was shown that improved convergence rates under much weaker conditions on the underlying controlled Markov chain can be established in \cite{cayci2021linear} with linear function approximation. Our paper uses results from the drift analysis in \cite{cayci2021linear}, but addresses the complications due to the nonlinearity introduced by ReLU activation functions, and establishes global convergence to the optimal policies. The neural network approximation eliminates the function approximation error, which is a constant in linear function approximation, by employing a sufficiently wide actor neural network.

\textit{Neural network analysis:} The empirical success of neural networks, which have more parameters than the data points, has been theoretically explained in \cite{jacot2018neural, du2018gradient, arora2019fine}, where it was shown that overparameterized neural networks trained by using first-order optimization methods achieve good generalization properties. The need for massive overparameterization was addressed in \cite{ji2019neural, oymak2020toward}, and it was shown that considerably smaller network widths can suffice to achieve good training and generalization results in structured supervised learning problems. Our analysis in this work is mainly inspired by \cite{ji2019neural}. On the other hand, reinforcement learning problem has significantly different and more challenging dynamics than the supervised learning setting as we have a dynamic optimization problem in actor-critic, where distributional shift occurs as the policies are updated. As such, uniform approximation power of the actor network in approximating various functions through policy optimization steps becomes critical, different from the supervised learning setting in \cite{ji2019polylogarithmic}. Our analysis utilizes tools from \cite{ji2019polylogarithmic, ji2019neural}: (i) we consider max-norm geometry to achieve mild overparameterization, (ii) we bound the distance between the neural tangent kernel (NTK) function class and the class of functions realizable by a finite-width neural network by extending the ReLU analysis in \cite{ji2019neural}. 

The most relevant work in the literature is \cite{wang2019neural}, where the convergence of NAC with a two-layer neural network was studied without entropy regularization. It was shown that, under strong assumptions on the exploratory behavior of policies throughout the trajectory, neural-NPG achieves $\epsilon$-optimality with $O(1/\epsilon^{14})$ sample complexity and $O(1/\epsilon^{12})$ network width bounds. In this paper, we incorporate widely-used algorithmic techniques (entropy regularization, averaging and gradient clipping) to NAC with neural network approximation, and prove significantly improved sample complexity and overparameterization bounds under weaker assumptions on the concentrability coefficients. Additionally, our analysis reveals that the uniform approximation power of the actor neural network is critically important to establish global optimality, where distributional shift plays a crucial role (see Section \ref{subsec:app-error}). In another relevant work, \cite{fu2020single} considers a single-timescale actor-critic with neural network approximation, but the function approximation error was not investigated due to the realizability assumption, which assumes that all policies throughout the policy optimization steps are realizable by the neural network. On of the main goals of our work is to study the benefits of employing neural networks in policy optimization, and we explicitly characterize the function class and approximation error that stems from the use of finite-width neural networks.

\subsection{Notation}
For a sequence of numbers $\{x_i:i\in I\}$ where $I$ is an index set, $[x_i]_{i\in I}$ denotes the vector obtained by concatenation of $x_i,i\in I$. For a set $A$, $|A|$ denotes its cardinality. For two distributions $P,Q$ defined over the same probability space, Kullback-Leibler divergence is denoted as follows: $D_{KL}(P\|Q) = \bE_{s\sim P}\Big[\log\frac{P(s)}{Q(s)}\Big].$ For a convex set $C\subset\bR^d$ and $x\in\bR^d$, $\mathcal{P}_C(x)$ denotes the projection of $x$ onto $C$: $\cP_C (x) = \arg\min_{y\in C}\|x-y\|_2$. For $n \in \mathbb{Z}^+$, $[n] = \{1,2,\ldots,n\}$. For $d,m \in \mathbb{N},$ $R > 0$ and $v\in \bR^{m\times d},$ we denote $$\mathcal{B}_{m, R}^d(v) = \Big\{y\in\bR^{m\times d}: \sup_{i\in[m]}\|v_i-y_i\|_2\leq \frac{R}{\sqrt{m}}\Big\},$$ where $v_i$ denotes the $i^{\rm th}$ row of $v.$

\section{Background and Problem Setting}\label{sec:model}
In this section, we introduce basic backgrounds of the problem setting, the natural actor critic method, as well as entropy regularization and neural network approximation that we consider.

\subsection{Markov Decision Processes}
We consider a discounted Markov decision process $(\cS, \cA, P, r, \gamma)$ where $\cS$ and $\cA$ are the state and action spaces, $P$ is a (unknown) transition kernel, $r:\cS\times\cA\rightarrow [0,r_{max}]$, $0<r_{max}<\infty$ is the reward function, and $\gamma\in(0,1)$ is the discount factor. In this work, we consider a state space $\cS$ and a finite action space $\cA$ such that $\cS\times\cA\subset \bR^d.$ Also, we assume that, by appropriate representation of the state and action variables, the following bound holds:
\begin{equation}
    \sup_{(s,a)\in\cS\times\cA}\|(s,a)\|_2\leq 1,
\end{equation}
throughout the paper.

\textbf{Value function:} A  randomized \textit{policy} $\pi:\cS\rightarrow \cA$ assigns some probability of taking an action $a\in\cA$ at a given state $s\in\cS$. A policy $\pi$ introduces a trajectory by specifying $a_t \sim \pi(\cdot|s_t)$ and $s_{t+1}\sim P(\cdot |s_t,a_t)$. For any $s_0\in\cS$, the corresponding value function of a policy $\pi$ is as follows:
\begin{equation}
    V^\pi(s_0) = \bE\Big[ \sum_{t=0}^\infty \gamma^t r(s_t,a_t) | s_0 \Big],
\end{equation}
where $a_t\sim\pi(\cdot|s_t)$ and $s_{t+1}\sim P(\cdot|s_t,a_t)$.

\textbf{Entropy regularization:} 
In policy optimization, in order to encourage exploration and avoid near-deterministic suboptimal policies, entropy regularization is commonly used in practice \cite{silver2016mastering, haarnoja2018soft, nachum2017trust,ahmed2019understanding}. For a policy $\pi$, let
\begin{equation}
    H^{\pi}(s_0) = \bE\Big[ \sum_{t=0}^\infty \gamma^t \mathcal{H}\big(\pi(\cdot|s_t)\big)\Big|s_0 \Big],
\end{equation}
where $\mathcal{H}(\pi(\cdot|s)) = -\sum_{a\in\cA}\pi(a|s)\log\big(\pi(a|s)\big)$ is the entropy functional. Then, for $\lambda > 0$, the entropy-regularized value function is defined as follows:
\begin{equation}
    V_\lambda^\pi(s_0) = V^\pi(s_0) + \lambda H^{\pi}(s_0).
    \label{equation:H-reg-vf}
\end{equation}
Note that $\pi_{\mathbf{0}}(\cdot|\cdot) = 1/|\cA|$ maximizes the regularizer $H^\pi(s_0)$ for any $s_0\in\cS$. Hence, the additional $\lambda H^{\pi}(s_0)$ term in \eqref{equation:H-reg-vf} encourages exploration while introducing some bias controlled by $\lambda > 0$.

\textbf{Entropy-regularized objective: } For a given initial state distribution $\mu$ and for a given regularization parameter $\lambda > 0$, the objective in this paper is to maximize the entropy-regularized value function: 
\begin{equation}
   \max_{\pi} V_\lambda^\pi(\mu) := \bE_{s_0\sim \mu}\left[V_\lambda^\pi(s_0)\right].
    \label{eqn:objective}
\end{equation}
 We denote the optimal policy for the regularized MDP as $\pi^*$ throughout the paper. 
 
 \textbf{Q-function and advantage function: }
The entropy-regularized (or \textit{soft}) Q-function $q_\lambda^{\pi}(s,a)$ is defined as:
\begin{align}
\begin{aligned}
    q_\lambda^{\pi}(s,a) 
    &= \bE\Big[\sum_{k=0}^\infty\gamma^k \big(r(s_k,a_k)-\lambda\log\pi(a_k|s_k)\big)\Big|s_0=s,a_0=a\Big].
    \end{aligned}
    \label{eqn:q-function}
\end{align}
Note that $q_\lambda^\pi$ is the fixed point of the Bellman equation $q(s,a) = \mathcal{T}^\pi q(s,a)$ where the Bellman operator $\mathcal{T}^\pi$ is defined as:
\begin{equation*}
    \mathcal{T}^\pi q(s,a) = r(s,a)-\lambda\log\pi(a|s)+\gamma\bE_{s^\prime\sim P(\cdot|s,a),a^\prime\sim \pi(\cdot|s^\prime)}[q(s^\prime,a^\prime)].
    \label{eqn:bellman-operator}
\end{equation*}
As we will see, for NAC algorithms, the following function, called the soft Q-function under a policy $\pi,$ turns out to be a useful quantity \cite{cen2020fast}:
\begin{equation}
    Q_\lambda^\pi(s, a) = r(s,a) + \gamma \bE_{s^\prime \sim P(\cdot|s,a)}\left[V_\lambda^\pi(s^\prime)\right].
\end{equation}
Note that the two Q-functions are related as follows: $$q_\lambda^\pi(s,a) = Q_\lambda^\pi(s,a)-\lambda\log\pi(a|s).$$

The advantage function under a policy $\pi$ is defined as follows:
\begin{align}
    \begin{aligned}
        A_\lambda^\pi(s, a)
        &= q_\lambda^\pi(s,a)-V_\lambda^\pi(s).
    \end{aligned}
    \label{eqn:adv}
\end{align}
Similarly, the soft advantage function is defined as follows:
\begin{equation}
    \Xi_\lambda^\pi(s,a) = Q_\lambda^\pi(s,a) - \sum_{a^\prime\in\cA}\pi(a^\prime|s)Q_\lambda^\pi(s,a^\prime).
    \label{eqn:soft-adv}
\end{equation}

Lastly, we can bound the entropy-regularized value function as follows: 
\begin{equation}0 \leq V_\lambda^{\pi}(\mu) \leq \frac{r_{max}+\lambda\log|\cA|}{1-\gamma},
\label{eqn:val-bounds}
\end{equation}
for any $\lambda > 0, \pi$ since $r\in[0,r_{max}]$ and $\mathcal{H}(p)\leq \log|\cA|$ for any distribution $p$ over $\cA$ \cite{cen2020fast}.

\subsection{Natural Policy Gradient under Entropy Regularization}

For a given randomized policy $\pi_\theta$ parameterized by $\theta\in\Theta$ where $\Theta$ is a given parameter space, policy gradient methods maximize $V^{\pit}(\mu)$ by using the policy gradient $\nabla_\theta V^\pit(\mu)$. Natural policy gradient, as a quasi-Newton method, adjusts the gradient update to fit problem geometry by using the Fisher information matrix as a pre-conditioner \cite{kakade2001natural, cen2020fast}.

Let $$G^{\pi_\theta}(\mu) = \bE_{s\sim d_\mu^{\pi_\theta},a\sim \pi_\theta(\cdot|s)}\Big[\nabla_\theta\log\pi_\theta(a|s)\nabla_\theta^\top\log\pi_\theta(a|s)\Big],$$ be the Fisher information matrix under policy $\pi_\theta$, where $$d_\mu^\pi(\cdot) = (1-\gamma)\sum_{k=0}^\infty\gamma^k\bP(s_k\in\cdot|s_0\sim\mu),$$ is the discounted state visitation distribution under a policy $\pi$. Then, the update rule under NPG can be expressed as
\begin{equation}
    \theta \leftarrow \theta + \eta \cdot \big[G^{\pit}(\mu)\big]^{-1}\nabla V_\lambda^\pit(\mu),
    \label{eqn:npg-update}
\end{equation}
where $\eta > 0$ is the step-size. Equivalently, the NPG update can be written as follows:
\begin{equation}
    \theta^+ \in \arg\max_{\theta\in\bR^d}\Big\{\nabla^\top_\theta V_\lambda(\pi_{\theta^-})(\theta-\theta^-)-\frac{1}{2\eta}(\theta-\theta^-)^\top G^{\pi_{\theta^-}}(\mu)(\theta-\theta^-)\Big\}.
\end{equation}
The above update scheme is closely related to gradient ascent and policy mirror ascent. Note that the gradient ascent for policy optimization performs the following update:
\begin{equation}
    \theta^+ \in \arg\max_{\theta\in\bR^d}\Big\{\nabla^\top_\theta V_\lambda(\pi_{\theta^-})(\theta-\theta^-)-\frac{1}{2\eta}\|\theta-\theta^-\|_2^2\Big\}.
    \label{eqn:grad-ascent}
\end{equation}
The update in \eqref{eqn:grad-ascent} leads to the policy gradient algorithm \cite{williams1992simple}. Compared to \eqref{eqn:grad-ascent}, the natural policy gradient uses a generalized Mahalanobis distance (i.e., weighted-$\ell_2$ distance) as the Bregman divergence instead of $\ell_2$ distance, which yields significant improvements in policy optimization by avoiding the so-called vanishing gradient problem in the tabular case \cite{cen2020fast, lan2021policy, agarwal2020optimality}. Consequently, state-of-the-art reinforcement learning algorithms such as trust region policy optimization (TRPO) \cite{schulman2015trust}, proximal policy optimization (PPO) \cite{schulman2017proximal} and soft actor-critic \cite{haarnoja2018soft} are variants of the natural policy gradient method.

In the following, we provide necessary tools to compute the policy gradient and the update rule in \eqref{eqn:npg-update} based on \cite{cayci2021linear}.

\begin{proposition}[Policy gradient]
For any $\theta$ and $\lambda > 0$, we have:
\begin{equation}
    \nabla_\theta {V}_\lambda^{\pi_\theta}(\mu) = \frac{1}{1-\gamma}\bE_{s\sim d_\mu^{\pi_\theta},a\sim\pi_\theta(\cdot|s)}\Big[\nabla_\theta\log\pi_\theta(a|s)q_\lambda^\pit(s,a)\Big].
\end{equation}
\label{prop:policy-gradient}
\end{proposition}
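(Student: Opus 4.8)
The plan is to prove this soft policy gradient theorem by the standard unrolling argument, with the one twist that the entropy regularization has already been absorbed into the soft Q-function $q_\lambda^{\pit}$. First I would record two Bellman-type identities that follow from the definitions in Section~\ref{sec:model}: the value decomposition $V_\lambda^{\pit}(s) = \sum_{a\in\cA}\pit(a|s)\,q_\lambda^{\pit}(s,a)$ and the one-step expansion $q_\lambda^{\pit}(s,a) = r(s,a) - \lambda\log\pit(a|s) + \gamma\,\bE_{s'\sim P(\cdot|s,a)}[V_\lambda^{\pit}(s')]$. Both are consequences of the relation $q_\lambda^\pi = Q_\lambda^\pi - \lambda\log\pi$ together with the uniqueness of the fixed point of the Bellman operator $\mathcal{T}^\pi$ defined earlier.

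Next I would differentiate the value decomposition with respect to $\theta$ by the product rule, obtaining $\nabla_\theta V_\lambda^{\pit}(s) = \sum_a [\nabla_\theta\pit(a|s)]\,q_\lambda^{\pit}(s,a) + \sum_a \pit(a|s)\,\nabla_\theta q_\lambda^{\pit}(s,a)$, and then substitute the one-step expansion for $\nabla_\theta q_\lambda^{\pit}$. The step I expect to be the \emph{crux} is handling the explicit $-\lambda\log\pit(a|s)$ term sitting inside $q_\lambda^{\pit}$: its contribution to the gradient is $-\lambda\sum_a\pit(a|s)\nabla_\theta\log\pit(a|s) = -\lambda\,\nabla_\theta\sum_a\pit(a|s) = 0$, since the policy normalizes to one. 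This score-function cancellation is precisely what makes the entropy-regularized gradient take the clean form of an ordinary policy gradient with $q_\lambda^{\pit}$ in place of the usual Q-function, leaving no residual entropy-gradient term; if one instead differentiated $V^{\pit}(\mu)+\lambda H^{\pit}(\mu)$ directly, an explicit entropy-gradient contribution would appear and would have to be reconciled with this cancellation.

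After this cancellation, using $\nabla_\theta\pit(a|s) = \pit(a|s)\nabla_\theta\log\pit(a|s)$, I would be left with the recursion $\nabla_\theta V_\lambda^{\pit}(s) = \phi(s) + \gamma\sum_a\pit(a|s)\,\bE_{s'\sim P(\cdot|s,a)}[\nabla_\theta V_\lambda^{\pit}(s')]$, where $\phi(s) = \sum_a \pit(a|s)\nabla_\theta\log\pit(a|s)\,q_\lambda^{\pit}(s,a)$. Unrolling this geometric recursion along trajectories generated by $\pit$ yields $\nabla_\theta V_\lambda^{\pit}(s_0) = \sum_{t=0}^\infty\gamma^t\,\bE[\phi(s_t)\mid s_0]$. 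Averaging over $s_0\sim\mu$ and recognizing $\sum_{t\geq 0}\gamma^t\,\bP(s_t\in\cdot\mid s_0\sim\mu) = \frac{1}{1-\gamma}d_\mu^{\pit}(\cdot)$ from the definition of the discounted visitation distribution then gives the claimed identity.

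The remaining obstacle is purely technical: justifying the interchange of $\nabla_\theta$ with the infinite sum/expectation defining $V_\lambda^{\pit}$, and the convergence of the unrolled series. Since $r$ is bounded, $\cA$ is finite, $\gamma\in(0,1)$, and $\pit$ is assumed to be smooth in $\theta$, dominated-convergence-type arguments make these exchanges rigorous; I would either invoke these standard regularity conditions directly or defer to the corresponding derivation in \cite{cayci2021linear}.
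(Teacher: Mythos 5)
Your proof is correct: the value decomposition $V_\lambda^{\pit}(s)=\sum_a\pit(a|s)q_\lambda^{\pit}(s,a)$, the one-step Bellman expansion, the score-function cancellation $-\lambda\sum_a\pit(a|s)\nabla_\theta\log\pit(a|s)=-\lambda\nabla_\theta\sum_a\pit(a|s)=0$, and the unrolling of the resulting recursion into the discounted visitation distribution together yield exactly the stated identity, and your identification of the cancellation as the crux is accurate. The paper itself gives no proof of Proposition~\ref{prop:policy-gradient} (it defers to \cite{cayci2021linear}), and the standard derivation given there is the same unrolling argument you present, so your proposal matches the intended approach.
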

Based on Proposition \ref{prop:policy-gradient}, the gradient update of natural policy gradient can be computed by the following lemma, which is an extension of \cite{kakade2001natural, agarwal2020optimality, cayci2021linear}.
\begin{lemma}
Let \begin{equation}
    L(w, \theta) = \bE_{s\sim d_\mu^{\pi_\theta},a\sim \pi_\theta(\cdot|s)}\Big[\Big(\nabla_\theta^\top \log\pi_\theta(a|s)w-q_\lambda^\pit(s,a)\Big)^2\Big],
    \label{eqn:loss}
\end{equation}
be the error for a given policy parameter $\theta$. 
Define
    \begin{equation}w_\lambda^{\pi_\theta} \in \arg\min_{w}L(w, \theta).
    \label{eqn:cfa}
    \end{equation}
    Then, we have:
\begin{equation}
    G^{\pi_\theta}(\mu)w_\lambda^{\pi_\theta} = (1-\gamma)\nabla_\theta V_\lambda^{\pi_\theta}(\mu),
    \end{equation}
    where $G^\pit$ is the Fisher information matrix.
\label{lemma:cfa}
\end{lemma}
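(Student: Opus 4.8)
The plan is to exploit the fact that, for fixed $\theta$, the objective $L(\cdot,\theta)$ in \eqref{eqn:loss} is a convex quadratic in $w$, so that any minimizer $w_\lambda^{\pi_\theta}$ is characterized by the first-order stationarity condition $\nabla_w L(w_\lambda^{\pi_\theta},\theta)=0$. Since the expectation in \eqref{eqn:loss} is taken over $s\sim d_\mu^{\pi_\theta}$ and $a\sim\pi_\theta(\cdot|s)$, distributions that do not depend on $w$, the gradient in $w$ can be pushed inside the expectation and computed by the chain rule on the squared residual, giving
$$\nabla_w L(w,\theta) = 2\,\bE_{s\sim d_\mu^{\pi_\theta},a\sim\pi_\theta(\cdot|s)}\Big[\nabla_\theta\log\pi_\theta(a|s)\big(\nabla_\theta^\top\log\pi_\theta(a|s)\,w - q_\lambda^\pit(s,a)\big)\Big].$$

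First I would set this expression to zero at $w=w_\lambda^{\pi_\theta}$ and rearrange into normal-equation form, separating the $w$-linear term from the residual term:
$$\bE\Big[\nabla_\theta\log\pi_\theta(a|s)\,\nabla_\theta^\top\log\pi_\theta(a|s)\Big]\,w_\lambda^{\pi_\theta} = \bE\Big[\nabla_\theta\log\pi_\theta(a|s)\,q_\lambda^\pit(s,a)\Big],$$
where both expectations are under $s\sim d_\mu^{\pi_\theta},\,a\sim\pi_\theta(\cdot|s)$. The left-hand matrix is, by definition, the Fisher information matrix $G^{\pi_\theta}(\mu)$ acting on $w_\lambda^{\pi_\theta}$, so the left side is precisely $G^{\pi_\theta}(\mu)w_\lambda^{\pi_\theta}$. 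To finish, I would identify the right-hand side with the policy gradient: by Proposition \ref{prop:policy-gradient} we have $\bE_{s\sim d_\mu^{\pi_\theta},a\sim\pi_\theta(\cdot|s)}[\nabla_\theta\log\pi_\theta(a|s)\,q_\lambda^\pit(s,a)] = (1-\gamma)\nabla_\theta V_\lambda^{\pi_\theta}(\mu)$, which yields $G^{\pi_\theta}(\mu)w_\lambda^{\pi_\theta} = (1-\gamma)\nabla_\theta V_\lambda^{\pi_\theta}(\mu)$ as claimed.

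I do not anticipate a serious obstacle here; the argument is essentially the stationarity condition of a least-squares problem combined with the policy gradient identity of Proposition \ref{prop:policy-gradient}. The only points requiring minor care are (i) justifying the interchange of differentiation and expectation, which is immediate because the sampling distribution is independent of $w$ and the integrand is a smooth quadratic in $w$, and (ii) the fact that $G^{\pi_\theta}(\mu)$ may be singular, so that the minimizer need not be unique. However, the convexity of $L(\cdot,\theta)$ guarantees that every element of the $\arg\min$ set satisfies the same normal equations, so the stated identity holds for any choice of $w_\lambda^{\pi_\theta}$, and in particular the product $G^{\pi_\theta}(\mu)w_\lambda^{\pi_\theta}$ is well-defined and equal to $(1-\gamma)\nabla_\theta V_\lambda^{\pi_\theta}(\mu)$ regardless of which minimizer is selected.
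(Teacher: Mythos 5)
Your proof is correct and follows exactly the standard argument the paper relies on (the paper itself states this lemma without a written proof, citing Kakade's compatible function approximation result and its extensions): first-order stationarity of the convex quadratic least-squares objective gives the normal equations, whose Gram matrix is $G^{\pi_\theta}(\mu)$ and whose right-hand side is identified with $(1-\gamma)\nabla_\theta V_\lambda^{\pi_\theta}(\mu)$ via Proposition \ref{prop:policy-gradient}. Your additional care about singular $G^{\pi_\theta}(\mu)$ and non-unique minimizers is a valid refinement consistent with the lemma's statement that any element of the $\arg\min$ set satisfies the identity.
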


The above results for general policy parameterization will provide basis for the entropy-regularized natural actor-critic (NAC) with neural network approximation that we will introduce in the following section, with certain modifications for variance reduction and stability that we will describe; see Remark~\ref{remark: adv function} later.

\section{Natural Actor-Critic with Neural Network Approximation}\label{sec:nac}
In this section, we will introduce the entropy-regularized natural actor critic algorithm, where both the actor and critic are represented by single-hidden-layer neural networks.

Throughout this paper, we make the following assumption, which is standard in policy optimization \cite{agarwal2020optimality}.

\begin{assumption}[Sampling oracle]
For a given initial state distribution $\mu$ and policy $\pi$, we assume that the controller is able to obtain an independent sample from $d_\mu^\pi$ at any time.
\label{assumption:sampling-oracle}
\end{assumption}

The sampling process involves a resetting mechanism and a simulator, which are available in many important application scenarios, and sampling from a state visitation distribution $d_\mu^\pi$ can be performed as described in \cite{konda2003onactor}.

\subsection{Actor Network and Natural Policy Gradient}
For a network width $m \in \mathbb{Z}^+$ and $c_i\in\bR$, $\theta_i\in\bR^{d}$ for $i\in[m]$, the actor network is given by the single-hidden-layer neural network:
\begin{equation}
    f(s,a;(c,\theta)) = \frac{1}{\sqrt{m}}\sum_{i=1}^mc_i\sigma\big(\langle\theta_i,(s,a)\rangle\big),
\end{equation}
where $c=[c_i]_{i\in[m]}, \theta=[\theta_i]_{i\in[m]}$, $\sigma(x) = \max\{0,x\}$ is the ReLU activation function. As a common practice \cite{ji2019neural, oymak2020toward, arora2019fine}, we fix the output layer $c$ after a random initialization, and only train the weights of hidden layer, namely, $\theta\in\Theta\subset\bR^{m\times d}$. Given a (possibly random) parameter $\theta^0\in\bR^{m\times d}$, a design parameter $R > 0$, regularization parameter $\lambda > 0$ and network width $m\in\mathbb{Z}^+$, the parameter space that we consider is as follows:
\begin{equation}
    \Theta = \Big\{\theta\in\bR^{m\times d}: \max_{i\in[m]}\|\theta_i-\theta^0_i\|_2 \leq \frac{R}{\lambda \sqrt{m}}\Big\}.
\end{equation}
For this parameter space $\Theta$, the policy class that we consider is $\Pi = \{\pi_\theta:\theta\in\Theta\}$, where the policy that corresponds to $\theta\in\Theta$ is as follows:
\begin{equation}
    \pi_\theta(a|s) = \frac{\exp(f(s,a;(c,\theta))}{\sum_{a^\prime\in\cA}\exp(f(s,a^\prime;(c,\theta))}.
    \label{eqn:policy-class}
\end{equation}
We randomly initialize the actor neural network by using the symmetric initialization in Algorithm \ref{alg:sym_init}, $(c,\theta(0)) \sim {\tt sym\_init}(m,d)$,   which was introduced in \cite{bai2019beyond}. 
\begin{algorithm}
\label{alg:sym_init}
\caption{${\tt sym\_init}(m,d)$ - Symmetric Initialization}
\begin{algorithmic}
\STATE{\textbf{Inputs: }$m$: network width, $d$: feature dimension}
\FOR{$i = 1,2,\ldots,m/2$}
\STATE{$c_i = -c_{i+m/2}\sim Rademacher$}
\STATE{$\theta_i = \theta_{i+m/2}\sim\mathcal{N}(0,I_d)$}
\ENDFOR
\RETURN{network weights $(c,\theta$)}
\end{algorithmic}
\end{algorithm}
Later, we will employ a similar symmetric initialization scheme for the critic neural network.

We denote the policy at iteration $t < T$ as $\pi_t = \pi_{\theta(t)}$ and neural network output as $f_t(s,a) = f(s,a;(c,\theta(t)))$. In the absence of $\Xi_\lambda^{\pi_t}$ and $d_\mu^{\pi_t}$, we estimate 
\begin{equation}
u_t^\star = \min_{u\in\mathcal{B}_{m,R}^d(0)}\bE[(\nabla^\top\log\pi_t(a|s)u-\Xi_\lambda^{\pi_t}(s,a))^2],
\label{eqn:update-opt}
\end{equation}
by using samples by the following actor-critic method: 
\begin{itemize}
    \item \textbf{Critic: }Temporal difference learning algorithm (Algorithm~\ref{alg:neural-td} in Section \ref{subsec:critic}), which employs a critic neural network, returns a set of neural network weights that yield a sample-based estimate for the soft advantage function $\{\widehat{\Xi}_\lambda^{\pi_t}(s,a):(s,a)\in\cS\times\cA\}$. 
    \item \textbf{Policy update: }Given this, we approximate $u_t^\star$ by using stochastic gradient descent (SGD) with $N$ iterations and step-size $\alpha_A > 0$. Starting with $u_0^{(t)} = 0$, an iteration of SGD is as follows 
\begin{align}
    u_{n+1/2}^{(t)} &= u_{n}^{(t)} - \alpha_A\Big(\nabla_\theta^\top \log{\pi}_t(a_n|s_n) u_n^{(t)}-\widehat{\Xi}_\lambda^{\pi_t}(s_n,a_n)\Big)\nabla_\theta\log {\pi}_t(a_n|s_n), \label{eqn:sgd-update}\\
    u_{n+1}^{(t)} &= \mathcal{P}_{\mathcal{B}_{m,R}^d(0)}\left(u_{n+1/2}^{(t)}\right),
\end{align}
where $s_n\sim d_\mu^{\pi_t}$ and $a_n\sim\pi_t(\cdot|s_n)$ for $n = 0,1,\ldots,N-1$, $\widehat{\Xi}^{\pi_t}_\lambda$ is the output of the critic.  Then, the final estimate is $u_t = \frac{1}{N}\sum_{n=1}^N u_n^{(t)}$. By using $u_t$, we perform the following update: $$\theta(t+1) = \theta(t) + \eta_t \cdot w_t,$$ where $w_t =  u_t -\lambda\big(\theta(t)-\theta(0)\big)$.
\end{itemize} 

The natural actor-critic algorithm is summarized in Algorithm \ref{alg:q-npg}. Below, we summarize the modifications in the algorithm that we consider in this paper with respect to the NPG described in the previous section.

\begin{remark}[Averaging and projection]\normalfont
The update in each iteration of the NAC algorithm described in Algorithm \ref{alg:q-npg} can be equivalently written as follows:
\begin{equation}
    \theta(t+1)-\theta(0) = (1-\eta_t\lambda)\cdot(\theta(t)-\theta(0)) + (1-\eta_t)u_t,
    \label{eqn:averaging}
\end{equation}
where $u_t$ is an approximate solution to the optimization problem \eqref{eqn:update-opt}. As we will see, the projection of $u_t$ onto $\mathcal{B}_{m,R}^d(0)$ (which can be considered as gradient clipping), in conjunction with the averaging in the policy update \eqref{eqn:averaging} enables us to control $\max_{i\in[m]}\|\theta_i(t)-\theta_i(0)\|_2$ while taking (natural) gradient steps towards the optimal policy. Controlling $\max_{i\in[m]}\|\theta_i(t)-\theta_i(0)\|_2$ is critical for two reasons: (i) to ensure sufficient exploration, and (ii) to establish the convergence bounds for the neural networks. 

Alternatively, one may be tempted to project $\theta(t)$ onto a ball around $\theta(0)$ in the $\ell_2$-geometry to control $\max_i\|\theta_i(t)-\theta_i(0)\|_2$. However, as the algorithm follows the \textit{natural} policy gradient, which uses a different Bregman divergence than $\|\cdot\|_2$, projection of $\theta(t)$ with respect to the $\ell_2$-norm may not result in moving the policy in the direction of improvement. Similarly, since we parameterize the policies by using a lower-dimensional vector $\theta\in\bR^{m\times d}$ to avoid storing and computing $|\cS\times\cA|$-dimensional policies, Bregman projection in the probability simplex, which is commonly used in direct parameterization, is not a feasible option for policy optimization with function approximation.

As such, simultaneous use of averaging and projection of the update $u_t$ are critical to control the network weights and policy improvement.
\end{remark}

\begin{remark}[Baseline]\normalfont\label{remark: adv function}
Note that the update $u_t^\star$ in \eqref{eqn:update-opt} uses the soft-advantage function $\Xi_\lambda^\pi$ rather than the state-action value function $q_\lambda^\pi$. The soft-advantage function uses $\sum_{a\sim\pi_t}\pi_t(a|s)Q_\lambda^{\pi_t}(s,a)$ as a baseline for variance reduction, which is a common practice in policy gradient methods \cite{sutton2018reinforcement}.
\end{remark}

In the following subsection, we describe the critic algorithm in detail.

\begin{algorithm}
\caption{Entropy-regularized Neural NAC}
\label{alg:q-npg}
\begin{algorithmic}[1]
\STATE{Initialize $(c,\theta(0))\sim {\tt sym\_init}(m,d)$}
\FOR{$t=0,1,\ldots,T-1$}
\STATE{Critic: $\widehat{\Xi}_\lambda^{\pi_t} = ~${\tt MN-NTD}$(\pi_t, R, m', T', \alpha_C)$}
\label{eqn:sgd-first}\STATE{Initialize: $u^{(t)}_0 = 0$}
\FOR{$n = 0,1,\ldots,N$}
\STATE{Sampling: $s_n\sim d_\mu^{\pi_t},a_n\sim\pi_t(\cdot|s_n).$}
\STATE{$u_{n+1/2}^{(t)} = u_{n}^{(t)} - \alpha_A\Big(\nabla_\theta^\top \log{\pi}_t(a_n|s_n) u_n^{(t)}-\widehat{\Xi}_\lambda^{\pi_t}(s_n,a_n)\Big)\nabla_\theta\log {\pi}_t(a_n|s_n)$} 
\STATE{$u_{n+1}^{(t)} = \mathcal{P}_{\mathcal{B}_{m,R}^d(0)}\left(u_{n+1/2}^{(t)}\right)$}
\ENDFOR
\label{eqn:sgd-last}\STATE{$u_t = \frac{1}{N}\sum_{n=1}^N u_n^{(t)}$}
\STATE{$\theta(t+1) = \theta(t) + \eta_t u_t -\eta_t\lambda[\theta(t)-\theta(0)]$}\label{eqn:update}
\ENDFOR
\end{algorithmic}
\end{algorithm}

\subsection{Critic Network and Temporal Difference Learning}\label{subsec:critic}
We estimate $\Xi_\lambda^{\pi_t}$ by using the neural TD learning algorithm with max-norm regularization \cite{cayci2021sample}. Note that $\Xi_\lambda^{\pi_t}$ can be directly obtained from $q_\lambda^\pit$ via $Q_\lambda^\pit(s,a) = q_\lambda^\pit(s,a)-\lambda\log\pit(a|s)$ and \eqref{eqn:soft-adv}.  Since $q_\lambda^\pit$ is the fixed point of the Bellman equation \eqref{eqn:q-function}, it can be approximated by using temporal difference (TD) learning algorithms.

For the critic, we use a two-layer neural network of width $m'$, which is defined as follows:
\begin{equation}
    \widehat{q}(s,a; (b,W)) = \frac{1}{\sqrt{m'}}\sum_{i=1}^{m'}b_i\sigma\left(\langle W_i,(s,a)\rangle\right).
\end{equation}
The critic network is initialized according to the symmetric initialization scheme in Algorithm \ref{alg:sym_init}. Let $(b, W(0))$ denote the initialization.

We aim to solve the following problem:
\begin{equation}
    W^\star = \arg\min_W~~\bE_{s\sim d_\mu^{\pi_\theta}, a\sim\pit(\cdot|s)}\Big[\Big(\widehat{q}(s,a;(b,W))-\mathcal{T}^\pit \widehat{q}(s,a;(b,W)) \Big)^2\Big].
\end{equation}
where $\mathcal{T}^\pi$ is the Bellman operator in \eqref{eqn:bellman-operator}.

We will consider max-norm regularization in the updates of the critic, which was shown to be effective in supervised learning and reinforcement learning (see \cite{cayci2021sample, goodfellow2016deep}). For a given $w_0\in\bR^d$ and $R > 0$, let
\begin{equation}
    \mathcal{G}_{R}(w_0) = \{w\in\bR^d:\|w-w_0\|_2 \leq R/\sqrt{m'}\}.
\end{equation}
Under max-norm regularization, each hidden unit's weight vector is confined within the set $\mathcal{G}_{R}(W_i(0))$ for a given projection radius $R$.

For $k = 0,1,\ldots,T'-1$, we assume that $(s_k,a_k)$ is sampled from $d_\mu^\pit$, i.e., $s_k\sim d_\mu^\pit,a_k\sim\pit(\cdot|s_k)$. Upon obtaining $(s_k,a_k)$, the next state-action pair is obtained by following $\pit$: $s_k^\prime \sim P(\cdot|s_k,a_k)$, $a_k^\prime\sim\pit(\cdot|s_k^\prime)$. One can replace the i.i.d. sampling here with Markovian sampling at the cost of a more complicated analysis as in \cite{srikant2019finite}. However, since experience replay is used in practice, the actual sampling procedure is neither purely Markovian or i.i.d., and here for simplicity of the analysis, we choose to model it as i.i.d. sampling.

An iteration of {\tt MN-NTD} is as follows:
\begin{align*}
    W_i(t+1) = \mathcal{P}_{\mathcal{G}_{R}(W_i(0))}\left(W_i(t) + \alpha\big(r_k + \gamma \widehat{q}_k(s_k^\prime,a_k^\prime)-\widehat{q}_k(s_k,a_k)\big)\nabla_{W_i}\widehat{q}_k(s_k,a_k) \right), \forall i\in[m'],
\end{align*}
where $\widehat{q}_k(s,a) = \widehat{q}(s,a; (b,W(k)))$, $r_k = r(s_k,a_k)-\lambda\log\pi_\theta(a_k|s_k)$ and $\mathcal{P}_\mathcal{C}$ is the projection operator onto a set $\mathcal{C}\subset\bR^d$. The output of the critic, which approximates $q_\lambda^\pit$, is then obtained as: $$\overline{q}_{T'}^\pit(s,a) = \widehat{q}\Big(s,a; \big(b,\frac{1}{T'}\sum_{k<T'}W(k)\big)\Big),~~(s,a)\in\cS\times\cA,$$
where $T'$ is the number of iterations of {\tt MN-NTD}. 
We obtain an approximation of the soft Q-function as $$\overline{Q}_\lambda^\pit(s,a) = \overline{q}_{T'}^\pit(s,a)-\lambda\log\pit(a|s).$$ The corresponding estimate for the soft advantage function is the following:
\begin{equation}
    \widehat{\Xi}_\lambda^\pit(s,a) = \overline{Q}_\lambda^\pit(s,a)-\sum_{a'\in\cA}\pit(a'|s)\overline{Q}_\lambda^\pit(s,a').
\end{equation}
The critic update  for a given policy $\pi_\theta,\theta\in\Theta$ is summarized in Algorithm \ref{alg:neural-td}.

\begin{algorithm}
\begin{algorithmic}
\STATE{\textbf{Inputs:} Policy $\pi_\theta$, proj. radius $R$, network width $m'$, sample size $T'$, step-size $\alpha_C$}
 \STATE{Initialization: $(b, W(0)) = {\tt sym\_init}(m',d)$}
 \FOR{$k < T'-1$}
  \STATE{Observe $(s_k,a_k)\sim d_\mu^\pit \circ \pit(\cdot|s_k)$, $s_k^\prime\sim P(\cdot|s_k,a_k)$, $a_k^\prime\sim\pi_\theta(\cdot|s_k^\prime)$} \\
  \STATE{Observe reward: $r_k := r(s_k,a_k)-\lambda\log\pit(a_k|s_k)$}
  \STATE{Compute semi-gradient: $g_k = \big(r_k+\gamma \hat{q}_k(s_k^\prime,a_k^\prime)-\hat{q}_k(s_k,a_k)\big)\nabla_\theta\hat{q}_k(s_k,a_k)$}
  \STATE{Take a semi-gradient step: $W(k+1/2) = W(k)+\alpha_C g_k$}
  \STATE{Max-norm regularization: $W_i(k+1)=\mathcal{P}_{\mathcal{G}_{R}(W_i(0))}\left\{W_i(k+1/2)\right\}, \forall i\in[m']$}

 \ENDFOR
 \RETURN{$\overline{q}_{T'}^\pit(s,a) = \hat{q}\Big(s,a;\big(b,\frac{1}{T'}\sum_{k<T'}W(k)\big)\Big)$ for all $(s,a)\in\cS\times\cA$}
 \caption{{\tt MN-NTD} - Max-Norm Regularized Neural TD Learning}
 \label{alg:neural-td}
 \end{algorithmic}
\end{algorithm}

\section{Main Results: Sample Complexity and Overparameterization Bounds for Neural NAC}\label{sec:results}
In this section, we  analyze the convergence of the entropy-regularized neural NAC algorithm and provide sample complexity and overparameterization bounds for both the actor and the critic.

\subsection{Regularization and Persistence of Excitation under Neural NAC}
The following proposition implies that the persistence of excitation condition (see \cite{kumar2015stochastic} for a discussion of the critical role of persistence of excitation in stochastic control problems) is satisfied under Algorithm \ref{alg:q-npg}, which implies sufficient exploration to ensure convergence to global optimality.
\begin{proposition}[Persistence of excitation]
For any regularization parameter $\lambda > 0$, projection radius $R$, the entropy-regularized NAC satisfies the following:
\begin{equation}
    \max_{i\in[m]}~\|\theta_i(t)-\theta_i(0)\|_2 \leq \frac{R\varkappa_t}{\lambda \sqrt{m}},
    \label{eqn:pers-exc1}
    \end{equation}
    where
    \begin{equation}
        \varkappa_t = \begin{cases} 
      1, & \eta_t = \frac{1}{\lambda(t+1)}, \\
      1-(1-\eta\lambda)^t, & \eta_t = \eta \in \left(0,\frac{1}{\lambda}\right),
   \end{cases}
    \end{equation}
    for all $t \geq 0$ almost surely. Consequently,
    \begin{equation}\pi_{min} := \inf_{(s,a)\in\cS\times\cA}\pi_t(a|s) \geq \frac{\exp\left(-2R/\lambda-2\rho_0\left(\frac{R\varkappa_t}{\lambda},m,\delta\right)\right)}{|\cA|} > 0,
\end{equation}
simultaneously for all $t \geq 0$ with probability at least $1-\delta$ over the random initialization of the actor network, where the function $\rho_0$ is given by
\begin{equation}
    \rho_0(R_0,m,\delta) = \frac{16R_0}{\sqrt{m}}\Big(R_0 + \sqrt{\log\Big(\frac{1}{\delta}\Big)} + \sqrt{d\log(m)}\Big).
\label{eqn:rho0}
\end{equation}
\label{prop:ent-reg}
\end{proposition}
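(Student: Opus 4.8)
The plan is to prove the three assertions in turn: first the weight bound \eqref{eqn:pers-exc1} together with the two forms of $\varkappa_t$, and then convert this into the lower bound on $\pi_{min}$.

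\textbf{Weight bound and $\varkappa_t$.} Writing $\Delta(t)=\theta(t)-\theta(0)$, the update $\theta(t+1)=\theta(t)+\eta_t u_t-\eta_t\lambda[\theta(t)-\theta(0)]$ in Algorithm~\ref{alg:q-npg} gives, row by row, $\Delta_i(t+1)=(1-\eta_t\lambda)\Delta_i(t)+\eta_t(u_t)_i$ for each $i\in[m]$. The inner SGD loop projects every iterate onto $\mathcal{B}_{m,R}^d(0)$, and $u_t$ is an average of such iterates; since $\mathcal{B}_{m,R}^d(0)$ is convex, $u_t\in\mathcal{B}_{m,R}^d(0)$, i.e. $\max_i\|(u_t)_i\|_2\le R/\sqrt m$. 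Setting $r_t=\max_i\|\Delta_i(t)\|_2$ and using $\eta_t\lambda\le 1$, the triangle inequality yields the scalar recursion $r_{t+1}\le(1-\eta_t\lambda)r_t+\eta_t R/\sqrt m$ with $r_0=0$. For the constant step-size $\eta_t=\eta\in(0,1/\lambda)$ this is a geometric recursion whose solution is $r_t\le\frac{R}{\lambda\sqrt m}\big(1-(1-\eta\lambda)^t\big)$; for $\eta_t=1/(\lambda(t+1))$ a one-line induction (whose inductive step collapses to $\frac{t}{t+1}+\frac{1}{t+1}=1$) gives $r_t\le\frac{R}{\lambda\sqrt m}$. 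These are exactly the two cases of $\varkappa_t$, and the bound holds deterministically.

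\textbf{From weights to $\pi_{min}$.} For any $(s,a)$, bounding the numerator below by $e^{-\|f_t\|_\infty}$ and the denominator above by $|\cA|e^{\|f_t\|_\infty}$ in the softmax \eqref{eqn:policy-class} gives $\pi_t(a|s)\ge\frac{1}{|\cA|}\exp(-2\|f_t\|_\infty)$, where $\|f_t\|_\infty=\sup_{(s,a)}|f_t(s,a)|$; so it remains to bound $\|f_t\|_\infty$. The symmetric initialization of Algorithm~\ref{alg:sym_init} forces $f_0\equiv 0$ (the paired neurons cancel), so $f_t=f_t-f_0$, and I would split this into its neural-tangent linearization around $\theta(0)$ plus a nonlinear remainder. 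The linear part is $\langle g_0(s,a),\Delta(t)\rangle$ with $g_0$ the initialization gradient features; since $\|g_{0,i}(s,a)\|_2\le 1/\sqrt m$ and, by the weight bound, $\|\Delta(t)\|_F\le\sqrt m\,r_t\le R\varkappa_t/\lambda\le R/\lambda$, Cauchy--Schwarz in the Frobenius norm bounds it by $R/\lambda$ uniformly in $(s,a)$.

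\textbf{The nonlinear remainder and the main obstacle.} The remainder receives contributions only from hidden units whose activation flips between $\theta_i(0)$ and $\theta_i(t)$, each contribution being at most $\|\Delta_i(t)\|_2\le R\varkappa_t/(\lambda\sqrt m)$. The number of such units is controlled, with probability $1-\delta$ over the random initialization, by the anti-concentration of the Gaussian pre-activations $\langle\theta_i(0),(s,a)\rangle$ together with the max-norm movement radius $R_0=R\varkappa_t/\lambda$; extending the ReLU estimate of \cite{ji2019neural} and taking a covering of the continuous set $\cS\times\cA$ to make the bound uniform in $(s,a)$ yields precisely $\rho_0(R_0,m,\delta)$ in \eqref{eqn:rho0}, with the $\sqrt{\log(1/\delta)}$ and $\sqrt{d\log m}$ terms coming from the high-probability and covering steps. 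Combining, $\|f_t\|_\infty\le R/\lambda+\rho_0(R\varkappa_t/\lambda,m,\delta)$, which plugged into the softmax inequality gives the stated $\pi_{min}$ bound. Crucially, the \emph{uniform-in-$t$} claim needs no union bound over iterations: by the weight bound every iterate $\theta(t)$ lies in the \emph{fixed} ball $\mathcal{B}_{m,R/\lambda}^d(\theta(0))$, so a single high-probability event controlling the remainder over that whole ball (and over $\cS\times\cA$) covers all $t$ at once. I expect this uniform control of activation flips over the entire reachable ball and the continuous state-action space --- rather than the elementary recursion --- to be the technical heart of the argument. (A cruder route using only the $1$-Lipschitzness of ReLU already gives the deterministic $\|f_t\|_\infty\le R\varkappa_t/\lambda$, hence a slightly stronger $\pi_{min}\ge|\cA|^{-1}e^{-2R/\lambda}$; the linearization route above is the one that reproduces the stated $\rho_0$ form and dovetails with the neural-tangent analysis used elsewhere in the paper.)
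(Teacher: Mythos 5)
Your proposal is correct and takes essentially the same route as the paper: the same recursion on $\theta(t)-\theta(0)$ with the projected (and, as you correctly justify via convexity, averaged) updates $u_t\in\mathcal{B}_{m,R}^d(0)$ gives the weight bound, and the same decomposition of $f_t$ (with $f_0\equiv 0$ from symmetric initialization) into an NTK-linear part bounded by $R/\lambda$ plus an activation-flip remainder bounded by $\rho_0$ — which the paper packages as Lemma~\ref{lemma:deviation} under the event $A_0$, valid uniformly over the fixed reachable ball and hence over all $t$ without a union bound — yields the $\pi_{min}$ lower bound. Your closing aside is also accurate: a pure Lipschitz argument gives the sharper deterministic bound $\sup_{s,a}|f_t(s,a)|\leq R\varkappa_t/\lambda$, while the paper states (and proves) the $\rho_0$ form because it dovetails with the linearization machinery used in the rest of the analysis, using $\rho_0(R/\lambda,m,\delta)$ in its proof rather than the slightly tighter $\rho_0(R\varkappa_t/\lambda,m,\delta)$ of the statement.
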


Proposition \ref{prop:ent-reg} has two critical implications:
\begin{enumerate}[(i)]
    \item The inequality in \eqref{eqn:pers-exc1} implies that any action $a\in\cA$ is explored with strictly positive probability at any given state $s\in\cS$, which implies that all policies throughout the policy optimization steps satisfy the \textit{``persistence of excitation"} condition with high probability over the random initialization. As we will see in the convergence analysis, this property implies sufficient exploration, which ensures that near-deterministic suboptimal policies are avoided. Sufficient exploration is achieved by entropy regularization, averaging, projection of $u_t$, and large network width $m$ for the policy parameterization.
    \item The inequality \eqref{eqn:pers-exc1} implies that we can control the deviation of the actor network weights by $R$, $\lambda$ and $m$. This property is key for the neural network analysis in the lazy-training regime. 
\end{enumerate}

\subsection{Transportation Mappings and Function Classes}\label{subsec:function-class} We first present a brief discussion on kernel approximations of neural networks, which will be useful to state our convergence results. Consider the following space of mappings:
\begin{equation}
    \mathcal{H}_{{\bar{\nu}}} = \{v:\bR^d\rightarrow\bR^d: \sup_{w\in\bR^d}\|v(w)\|_2 \leq {\bar{\nu}}\},
\end{equation}
and the function class:
\begin{equation}
    \mathcal{F}_{{\bar{\nu}}} = \Big\{g(\cdot) = \bE_{w_0\sim\mathcal{N}(0,I_d)}[\langle v(w_0), \cdot \rangle \mathbbm{1}\{\langle w_0, \cdot \rangle > 0\}]: v\in\mathcal{H}_{{\bar{\nu}}}\Big\}.
\end{equation}
Note that $\mathcal{F}_{{\bar{\nu}}}$ is a provably rich subset of the reproducible kernel Hilbert space (RKHS) induced by the neural tangent kernel, which can approximate continuous functions over a compact space \cite{ji2019neural, jacot2018neural, chizat2019lazy}. For a given class of transportation maps $\mathcal{V} = \{v_k\in\mathcal{H}_{\bar{\nu}}:k\in[K]\}$ for $K \geq 1$, we also consider the following subspace of $\mathcal{F}_{{\bar{\nu}}}$:
\begin{equation}
    \mathcal{F}_{K, {\bar{\nu}}, \mathcal{V}} = \Big\{g(\cdot) = \bE_{w_0\sim\mathcal{N}(0,I_d)}[\big\langle \sum_{k\in[K]}\alpha_k v_k(w_0), \cdot \big\rangle \mathbbm{1}\{\langle w_0, \cdot \rangle > 0\}]: \|\alpha\|_1\leq 1\Big\}.
\end{equation}

Note that the above set depends on the choice of $\{v_k\}_{k\in[K]}$ but these maps can be arbitrary,
The space of continuously differentiable functions $f:\bR^d\rightarrow \bR$ over a compact domain has a countable basis $\{\varphi_k:k=0,1,\ldots\}$ \cite{kreyszig1991introductory}. By \cite[Theorem 4.3]{ji2019neural}, one can find transportation mappings $v_k\in\mathcal{H}_{{\bar{\nu}}}$ such that $g_k(s,a)=\bE[\langle v_k(w_0),(s,a)^\top\cdot\bI\{(s,a)\cdot w_0\geq 0 \}\rangle]$ approximates $\varphi_k$ well. As such, $\mathcal{F}_{K,{\bar{\nu}},\mathcal{V}}$ is able to approximate a function class which contains continuously differentiable functions over a compact space as $K\rightarrow \infty$ with appropriate $V$.

\subsection{Convergence of the Critic}
We make the following realizability assumption for the Q-function.
\begin{assumption}[Realizability of the Q-function]
For any $t\geq 0$, we assume that $q_\lambda^{\pi_t} \in \mathcal{F}_{{\bar{\nu}}}$ for some ${\bar{\nu}} > 0$. 
\label{assumption:realizability}
\end{assumption}
Assumption \ref{assumption:realizability} is a smoothness condition on the class of realizable functions that can be approximated by the critic network, which is dense in the space of continuous functions over $\Omega_d$ (see Section \ref{subsec:function-class}). ${\bar{\nu}}$, which is an upper bound on the RKHS norm, is the measure of smoothness. One can also replace the above condition by a slightly stronger condition which states that $q_\lambda^{\pi_\theta}\in \mathcal{F}_{{\bar{\nu}}}$, $\forall\theta\in\Theta.$ Note that the class of functions $\mathcal{F}_{\bar{\nu}}$ is deterministic and its approximation properties are well-known \cite{ji2019neural}. In \cite{wang2019neural}, it was assumed that the state-action value functions lie in a \textit{random} function class, which is obtained by shifting $\mathcal{F}_{\bar{\nu}}$ with a Gaussian process. By employing a symmetric initialization, we eliminate this Gaussian process noise, and therefore the realizable class of functions is deterministic and provably rich.

\begin{theorem}[Convergence of the Critic, Theorem 2 in \cite{cayci2021sample}]
 Under Assumption \ref{assumption:realizability}, for any error probability $\delta \in (0, 1)$, let $$\ell(m',\delta) = 4\sqrt{\log(2m'+1)}+4\sqrt{\log(T/\delta)},$$ and $R > {\bar{\nu}}$. Then, for any target error $\varepsilon > 0$, number of iterations $T' \in \mathbb{N}$, network width $$m' > \frac{16\Big({\bar{\nu}} + \big(R+\ell(m',\delta)\big)\big({\bar{\nu}}+R\big)\Big)^2}{(1-\gamma)^2\varepsilon^2},$$ and step-size $$\alpha_C = \frac{\varepsilon^2(1-\gamma)}{(1+2R)^2},$$ the critic yields the following bound:
 \begin{equation*}
     \bE\Big[\sqrt{\bE_{s\sim d_\mu^{\pi_t},a\sim\pi_t(\cdot|s)}\Big[\big(\overline{q}_{T'}^{\pi_t}(s,a)-q_\lambda^{\pi_t}(s,a)\big)^2\Big]}\mathbbm{1}_{A_2}\Big] \leq \frac{(1+2R){\bar{\nu}}}{\varepsilon(1-\gamma)\sqrt{T'}} + 3\varepsilon,
 \end{equation*}
 where $A_2$ holds with probability at least $1-\delta$ over the random initializations of the critic network.
 \label{thm:mn-td}
 \end{theorem}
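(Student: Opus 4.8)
The plan is to analyze the projected neural TD recursion in the lazy-training (NTK) regime, where the max-norm projection onto $\mathcal{G}_{R}(W_i(0))$ keeps every hidden weight within $R/\sqrt{m'}$ of its initialization, so the trained network stays close to its linearization about $W(0)$. I would decompose the total error $\overline{q}_{T'}^{\pi_t}-q_\lambda^{\pi_t}$ into three pieces: (i) an \emph{optimization error} from running only $T'$ TD steps, (ii) a \emph{linearization error} from replacing the nonlinear ReLU network by its first-order Taylor expansion at initialization, and (iii) an \emph{approximation error} between the target $q_\lambda^{\pi_t}$ and the best representative in the finite-width random-feature class.

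First, I would set up the linearized surrogate $\widehat{q}_{\mathrm{lin}}(s,a;W) = \langle \nabla_W\widehat{q}(s,a;(b,W(0))), W-W(0)\rangle + \widehat{q}(s,a;(b,W(0)))$, whose gradient feature $\phi_0(s,a)=\nabla_W\widehat{q}(s,a;(b,W(0)))$ is frozen at initialization. For this linear model the semi-gradient TD update is ordinary projected linear TD, and I would run a drift (Lyapunov) argument on $\bE\|W(k)-W^\star\|_2^2$, where $W^\star$ is the parameter realizing the best finite-width approximation. The crucial ingredient is that $\mathcal{T}^{\pi_t}$ is a $\gamma$-contraction in the $d_\mu^{\pi_t}$-weighted norm, which turns the expected TD direction into a negative drift; combining this with a bounded-variance estimate for the stochastic semi-gradient and the iterate averaging $\frac{1}{T'}\sum_{k<T'}W(k)$ yields the $\frac{(1+2R)\bar{\nu}}{\varepsilon(1-\gamma)\sqrt{T'}}$ term.

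Next, I would control the two $O(\varepsilon)$ terms through the width $m'$. For the linearization error, I would bound the deviation of the true activation patterns from those at initialization: since each weight moves at most $R/\sqrt{m'}$ and the symmetric initialization gives anti-concentration of $\langle W_i(0),(s,a)\rangle$, only an $O(R/\sqrt{m'})$ fraction of neurons flip sign, so $|\widehat{q}-\widehat{q}_{\mathrm{lin}}|$ is uniformly $O((R+\ell(m',\delta))/\sqrt{m'})$ on the event $A_2$. For the approximation error, I would use Assumption \ref{assumption:realizability} to write $q_\lambda^{\pi_t}$ as an expectation over $w_0\sim\mathcal{N}(0,I_d)$ against a transportation map $v$ with $\|v\|_\infty\leq\bar{\nu}$, and realize it up to a Monte Carlo error $O(\bar{\nu}/\sqrt{m'})$ by an explicit $W^\star\in\prod_i\mathcal{G}_{R}(W_i(0))$, which also justifies the requirement $R>\bar{\nu}$. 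Summing these with the stated width threshold makes both contributions at most a constant multiple of $\varepsilon$, giving the additive $3\varepsilon$.

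I expect the main obstacle to be handling the interaction between the semi-gradient (non-conservative) TD dynamics and the nonlinearity of the network simultaneously: TD is not a stochastic gradient of any fixed loss, so the negative-drift argument must be carried out directly on the projected Bellman operator, and the linearization error feeds into this drift as a perturbation that must be shown not to destroy the contraction. Keeping the projection onto $\mathcal{G}_{R}(W_i(0))$ compatible with the drift inequality (nonexpansiveness of the projection together with feasibility of $W^\star$) and uniformly bounding the feature perturbation over \emph{all} $(s,a)$ rather than in expectation are the delicate technical points; the max-norm geometry and the high-probability event $A_2$ are precisely what make this uniform control possible.
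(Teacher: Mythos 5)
This theorem is not proved in the paper at all: it is imported wholesale as Theorem 2 of \cite{cayci2021sample}, so there is no internal proof here to compare your attempt against. That said, your sketch reconstructs essentially the argument used in that reference (and in neural TD analyses in this lazy-training line of work): NTK linearization kept valid by the max-norm projection, a Lyapunov drift on $\bE\|W(k)-W^\star\|_2^2$ with $W^\star$ the feasible parameter realizing the transportation-map representation of $q_\lambda^{\pi_t}$, iterate averaging producing the $1/\sqrt{T'}$ term (with the $1/\varepsilon$ factor coming from the step-size choice $\alpha_C \propto \varepsilon^2$), and width-controlled linearization and Monte Carlo approximation errors supplying the $O(\varepsilon)$ terms; your identification of the semi-gradient/non-conservative nature of TD and the feasibility-plus-nonexpansiveness of the projection as the delicate points is also accurate. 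One technical imprecision worth flagging: $\mathcal{T}^{\pi_t}$ is a $\gamma$-contraction in the sup norm, not in the $d_\mu^{\pi_t}$-weighted $L_2$ norm as you state; what the on-policy drift argument actually uses is the visitation identity $d_\mu^{\pi_t} = (1-\gamma)\mu + \gamma\, d_\mu^{\pi_t}P^{\pi_t}$, which gives $d_\mu^{\pi_t}P^{\pi_t} \leq d_\mu^{\pi_t}/\gamma$ and hence, via Jensen's inequality, a contraction modulus $\sqrt{\gamma}$ in the weighted norm, so the negative drift survives with $(1-\sqrt{\gamma})$-type constants rather than $(1-\gamma)$ directly.
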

 
 Note that in order to achieve a target error less than $\varepsilon > 0$, a network width of $m' = \widetilde{O}\Big(\frac{{\bar{\nu}}^4}{\varepsilon^2}\Big)$ and iteration complexity $T' = O\Big(\frac{(1+2{\bar{\nu}})^2{\bar{\nu}}^2}{\varepsilon^4}\Big)$ suffice. The analysis of TD learning algorithm in \cite{cayci2021sample} uses results from \cite{ji2019polylogarithmic}, which was given for classification (supervised learning) problems with logistic loss. On the other hand, TD learning requires a significantly more challenging analysis because of bootstrapping in the updates (i.e., using a stochastic semi-gradient instead of a true gradient) and quadratic loss function. Furthermore, for improved sample complexity and overparameterization bounds, max-norm regularization is employed instead of early stopping \cite{cayci2021sample}.
 
 \subsection{Global Optimality and Convergence of Neural NAC}

In this section, we provide the main convergence result for the entropy-regularized NAC with neural network approximation.

\begin{assumption}[Realizability]
For $K \geq 1$, we assume that for all $\theta\in\Theta$, $Q_\lambda^{\pi_\theta}\in\mathcal{F}_{K,{\bar{\nu}},\mathcal{V}}$, where the function class $\mathcal{F}_{K,{\bar{\nu}},\mathcal{V}}$ is defined in Section \ref{subsec:function-class}.
\label{assumption:realizability-q}
\end{assumption}

Note that $\mathcal{F}_{K,{\bar{\nu}},\mathcal{V}}$ approximates a rich class of functions over a compact space well for large $K$ (see Section \ref{subsec:function-class}). Also, Assumption \ref{assumption:realizability-q} implies that there is a structure among the soft Q-functions in the policy class $\Theta$ since each $Q_\lambda^{\pi_\theta}$ can be written as a linear combination of $K$ functions that correspond to the transportation maps $v_k$. We consider this relatively restricted function class instead of $\mathcal{F}_{{\bar{\nu}}}$ to obtain uniform approximation error bounds to handle the dynamic structure of the policy optimization over time steps. Notably, the actor network features are expected to fit $Q_\lambda^{\pi_t}$ over all iterations, thus an inherent structure in $\{Q_\lambda^{\pi_\theta}:\theta\in\Theta\}$ appears to be necessary. For further discussion, see Section \ref{subsec:app-error} (particularly Remark \ref{remark:app-error}).

\subsubsection{Performance Bounds under a Weak Distribution Mismatch Condition}
First, we establish sample complexity and overparameterization bounds under a weak distribution mismatch condition, which is provided below. This condition is significantly weaker compared to the existing literature (e.g., \cite{wang2019neural, liu2019neural, agarwal2020optimality}) as we proved that the policies achieve sufficient exploration by Proposition \ref{prop:ent-reg} (see Remark \ref{remark:dist-mismatch} for details).

\begin{assumption}[Weak distribution mismatch condition]
There exists a constant $C_\infty < \infty$ such that 
\begin{equation*}\sup_{t \geq 0}~\bE_{s\sim d_\mu^{\pi_t}}\Big[\Big(\frac{d_\mu^{\pi^*}(s)}{d_\mu^{\pi_t}(s)}\Big)^2\Big] \leq C_\infty^2.
\label{eqn:st-dist-mismatch-weak}
\end{equation*}
\label{assumption:dist-mismatch}
\end{assumption}

\begin{remark}[Weak distribution mismatch condition]\normalfont Note that a sufficient condition for Assumption \ref{assumption:dist-mismatch} is an exploratory initial state distribution $\mu$, which covers the support of the state visitation distribution of $d_\mu^{\pi^*}$:
    \begin{equation}
        \sup_{s\in \mathsf{supp}(d_\mu^{\pi^*})}\frac{d_\mu^{\pi^*}(s)}{\mu(s)} < \infty,
    \end{equation}
   since $\sqrt{\bE_{s\sim d_\mu^{\pi_t}}\Big[\Big(\frac{d_\mu^{\pi^*}(s)}{d_\mu^{\pi_t}(s)}\Big)^2\Big]} \leq \frac{1}{1-\gamma}\left\|\frac{d_\mu^{\pi^*}}{\mu}\right\|_\infty$. Hence, if the initial distribution has a sufficiently large support set, then Assumption \ref{assumption:dist-mismatch} is satisfied without any assumptions on $\{\pi_t:t\geq 0\}$. Together with Proposition \ref{prop:ent-reg}, it \textit{ensures} stability of the policy optimization with minimal assumptions on $\mu$. 
   \label{remark:dist-mismatch}
\end{remark}

The following theorem is one of the main results in this paper, which establishes the convergence bounds of the NAC algorithm.
\begin{theorem}[Performance bounds]\normalfont 
Under Assumptions \ref{assumption:sampling-oracle}-\ref{assumption:dist-mismatch}, Algorithm \ref{alg:q-npg} with $R > {\bar{\nu}}$ and regularization coefficient $\lambda > 0$ satisfies the following bounds:
\begin{itemize}
    \item[(1)] with step-size $\eta_t = \frac{1}{\lambda(t+1)},~t \geq 0$, we have
\begin{equation*}
    (1-\gamma)\min_{t\in[T]}~\bE[(V_\lambda^{\pi^*}(\mu)-V_\lambda^{\pi_t}(\mu))\mathbbm{1}_{A}] \leq \frac{2R^2(1+\log T)}{\lambda T} + {2R\sqrt{\rho_0}}+4\rho_0 T\lambda + M_\infty\Big(\rho_1+\varepsilon+\frac{Rq_{max}}{N^{1/4}}\Big),
\end{equation*}
\item[(2)] with step-size $\eta_t = \eta \in (0,1/\lambda)$, we have
\begin{equation*}
    (1-\gamma)\min_{t\in[T]}~\bE[(V_\lambda^{\pi^*}(\mu)-V_\lambda^{\pi_t}(\mu))\mathbbm{1}_{A}] \leq \frac{\lambda e^{-\eta\lambda T}\log|\cA|}{1-e^{-\eta\lambda T}} + 2R\sqrt{\rho_0}+4\rho_0 T\lambda+ M_\infty\Big(\rho_1+\varepsilon+\frac{Rq_{max}}{N^{1/4}}\Big)+2\eta R^2,
\end{equation*}
\end{itemize}
for any $\delta \in (0,1/3)$ where $\bP(A) \geq 1-3\delta$ over the random initialization of the actor and critic networks,
\begin{equation}
    q_{max} = 4\Big(R+\frac{r_{max}}{1-\gamma}+\frac{\lambda\log|\cA|}{1-\gamma}\Big),
\end{equation}
which is an upper bound on the gradient norm in \eqref{eqn:sgd-update},
\begin{align*}
\begin{aligned}
M_\infty &= C_\infty(1+\pi_{min}^{-1}),\\
\rho_0 &= \frac{16R}{\lambda\sqrt{m}}\Big(\frac{R}{\lambda}+\sqrt{\log\Big(\frac{1}{\delta}\Big)}+\sqrt{d\log(m)}\Big),\\
\rho_1 &= \frac{16{\bar{\nu}}}{\sqrt{m}}\Big((d\log(m))^{\frac{1}{4}} + \sqrt{\log\Big(\frac{K}{\delta}\Big)}\Big),
\end{aligned}
\end{align*}
$m' = \widetilde{O}\Big(\frac{{\bar{\nu}}^4}{\varepsilon^2}\Big)$ and $T' = O\Big(\frac{(1+2{\bar{\nu}})^2{\bar{\nu}}^2}{\varepsilon^4}\Big)$ (as specified in Theorem \ref{thm:mn-td}).
\label{thm:npg-main}
\end{theorem}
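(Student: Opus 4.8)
The plan is to run a drift (Lyapunov) analysis on the potential $\Phi_t := \mathbb{E}_{s\sim d_\mu^{\pi^*}}[D_{KL}(\pi^*(\cdot|s)\|\pi_t(\cdot|s))]$, exploiting the fact that the update in line \eqref{eqn:update} is an approximate, neural-parameterized entropy-regularized natural policy gradient step, which in the exact (tabular) case is a mirror-ascent step whose KL potential contracts by $(1-\eta_t\lambda)$ per iteration. First I would establish a \emph{one-step drift inequality} valid on the high-probability event $A$,
$$\Phi_{t+1} \leq (1-\eta_t\lambda)\,\Phi_t - \eta_t(1-\gamma)\big(V_\lambda^{\pi^*}(\mu)-V_\lambda^{\pi_t}(\mu)\big) + \eta_t\cdot\mathrm{err}_t + c\,\eta_t^2 R^2,$$
by writing $\Phi_{t+1}-\Phi_t = \mathbb{E}_{s\sim d_\mu^{\pi^*},a\sim\pi^*}[\log\pi_t(a|s)-\log\pi_{t+1}(a|s)]$, substituting the softmax update, linearizing the actor network around initialization (the lazy/NTK regime, $f_{t+1}-f_t\approx\langle\nabla_\theta f_t,\theta(t+1)-\theta(t)\rangle$), and invoking the regularized performance-difference lemma to convert the resulting soft-advantage term $\mathbb{E}_{\pi^*}[\Xi_\lambda^{\pi_t}]$ into $(1-\gamma)(V_\lambda^{\pi^*}-V_\lambda^{\pi_t})$ together with the $-\lambda\Phi_t$ contraction. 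The compatible-function-approximation property $\langle\nabla_\theta\log\pi_t(a|s),u_t\rangle\approx\Xi_\lambda^{\pi_t}(s,a)$ from \eqref{eqn:update-opt}--\eqref{eqn:cfa} links the algorithmic update $u_t$ to the advantage appearing in that lemma.

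The second step is to decompose $\mathrm{err}_t$ and bound each source. The terms $\rho_1$ (how well the linearized actor features realize $Q_\lambda^{\pi_t}$, via Assumption \ref{assumption:realizability-q} and the transportation-map theory of Section \ref{subsec:function-class} extending \cite{ji2019neural}), $\varepsilon$ (the critic error $\sqrt{\mathbb{E}_{d_\mu^{\pi_t}}[(\overline{q}_{T'}^{\pi_t}-q_\lambda^{\pi_t})^2]}$ from Theorem \ref{thm:mn-td}), and $Rq_{max}/N^{1/4}$ (the actor projected-SGD error for \eqref{eqn:update-opt}, using the gradient-norm bound $q_{max}$ over $\mathcal{B}_{m,R}^d(0)$) together form the $L^2(d_\mu^{\pi_t})$ compatible-function-approximation error. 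Since this is measured on-policy under $d_\mu^{\pi_t}$ while the drift needs it under $d_\mu^{\pi^*}\!\otimes\pi^*$, reweighting via Assumption \ref{assumption:dist-mismatch} and the strictly positive lower bound $\pi_{min}$ from Proposition \ref{prop:ent-reg} produces the multiplicative factor $M_\infty=C_\infty(1+\pi_{min}^{-1})$. Separately, the lazy-training linearization error $\rho_0$, controlled by the persistence-of-excitation bound \eqref{eqn:pers-exc1} on $\max_i\|\theta_i(t)-\theta_i(0)\|_2$, enters the log-ratio directly (not through the $L^2$ fit) and accumulates pathwise into the $2R\sqrt{\rho_0}+4\rho_0 T\lambda$ terms.

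The third step is to sum the drift over $t=0,\ldots,T-1$. For $\eta_t=1/(\lambda(t+1))$ the contraction factor is $t/(t+1)$, so multiplying through by $(t+1)$ telescopes the potential (using $(t+1)\eta_t=1/\lambda$), the $\eta_t^2R^2$ terms sum as a harmonic series to give the $\frac{2R^2(1+\log T)}{\lambda T}$ leading term, and $\min_t$ over the nonnegative gaps discards $\Phi_T\geq 0$. For constant $\eta_t=\eta\in(0,1/\lambda)$ the factor $(1-\eta\lambda)^t$ gives geometric decay of the initial gap; bounding $\Phi_0\leq\log|\cA|$ (symmetric initialization makes $f_0\equiv 0$, so $\pi_0$ is uniform) yields $\frac{\lambda e^{-\eta\lambda T}\log|\cA|}{1-e^{-\eta\lambda T}}$, and the $\eta^2R^2$ contributions sum to $2\eta R^2$.

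The main obstacle I expect is controlling the actor representation and linearization errors \emph{uniformly over all $t$} under distributional shift: unlike supervised learning, the target $\Xi_\lambda^{\pi_t}$ (equivalently $Q_\lambda^{\pi_t}$) changes every iteration, yet the same randomly initialized features must approximate all of them simultaneously. This is precisely why Assumption \ref{assumption:realizability-q} restricts to the structured class $\mathcal{F}_{K,\bar{\nu},\mathcal{V}}$ rather than $\mathcal{F}_{\bar{\nu}}$, and why keeping $\pi_{min}>0$ via Proposition \ref{prop:ent-reg} is indispensable—without it the factor $M_\infty$ would blow up and the on-policy errors could not be transferred to the off-policy measure $d_\mu^{\pi^*}$ that governs the performance gap.
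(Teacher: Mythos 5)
Your proposal follows essentially the same route as the paper: the identical KL potential $\Psi(\pi_t)=\bE_{s\sim d_\mu^{\pi^*}}[D_{KL}(\pi^*(\cdot|s)\|\pi_t(\cdot|s))]$ with the same one-step drift inequality (paper's Lemma \ref{lemma:ly-drift}, built from the log-linear/NTK linearization, smoothness, and the regularized performance-difference lemma), the same decomposition of the bias into critic error $\varepsilon$, SGD error $Rq_{max}/N^{1/4}$, uniform NTK approximation error $\rho_1$, and pathwise linearization terms in $\rho_0$, with the same change-of-measure argument yielding $M_\infty=C_\infty(1+\pi_{min}^{-1})$ (Corollary \ref{corollary:app-error}), and the same telescoping/geometric summation for the two step-size schedules. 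The proposal is correct and matches the paper's proof in both structure and substance.
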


In the following, we characterize the sample complexity, iteration complexity and overparameterization bounds based on Theorem \ref{thm:npg-main}.

\begin{corollary}[Sample Complexity and Overparameterization Bounds]\normalfont For any $\epsilon > 0$ and $\delta \in (0,1/3)$, Algorithm \ref{alg:q-npg} with $R > {\bar{\nu}}$ satisfies:
\begin{equation*}
    \min_{t\in[T]}~\bE[(V_\lambda^{\pi^*}(\mu)-V_\lambda^{\pi_t}(\mu))\mathbbm{1}_{A}] \leq \epsilon,
\end{equation*}
where $\bP(A) \geq 1-3\delta$ over the random initialization of the actor-critic networks for the following parameters: 
\begin{itemize}
    \item \textit{iteration complexity: }$T = \tilde{O}\Big(\frac{R^2}{(1-\gamma)\lambda\epsilon}\Big)$,
    \item \textit{actor network width: }$m = \tilde{O}\Big(\frac{R^8}{(1-\gamma)^4\lambda^4\epsilon^4}+\frac{R^6\log(1/\delta)}{\lambda^2(1-\gamma)^4\epsilon^4}+\frac{M_\infty{\bar{\nu}}^2\log(K/\delta)}{\epsilon^2(1-\gamma)^2}\Big)$,
    \item \textit{critic sample complexity}: $T' = O\Big(\frac{M_\infty^2R^4}{(1-\gamma)^2\epsilon^4}\Big)$,
    \item \textit{critic network width:} $m' = \tilde{O}\Big(\frac{M_\infty^2R^4\log(1/\delta)}{(1-\gamma)^2\epsilon^2}\Big)$,
    \item \textit{actor sample complexity:} $N = O\Big(\frac{M_\infty^4R^4q_{max}^4}{\epsilon^4(1-\gamma)^4}\Big)$.
\end{itemize}
Hence, the overall sample complexity of the Neural NAC algorithm is $\tilde{O}\Big(\frac{1}{\epsilon^5}\Big)$.
\label{cor:nac}
\end{corollary}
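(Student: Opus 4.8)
The plan is to obtain Corollary~\ref{cor:nac} as a direct algebraic consequence of the performance bound in Theorem~\ref{thm:npg-main}. I would work from part~(1) (the adaptive step-size $\eta_t = 1/(\lambda(t+1))$), divide both sides by $(1-\gamma)$ so that the left-hand side becomes exactly $\min_{t\in[T]}\bE[(V_\lambda^{\pi^*}(\mu)-V_\lambda^{\pi_t}(\mu))\bI_A]$, and then force each of the six error terms on the right to contribute at most a constant fraction of the target $\epsilon$. The decomposition is: the optimization error $\frac{2R^2(1+\log T)}{\lambda T}$, the two lazy-training terms $2R\sqrt{\rho_0}$ and $4\rho_0 T\lambda$, the actor-approximation term $M_\infty\rho_1$, the critic error $M_\infty\varepsilon$, and the actor-SGD term $M_\infty\frac{Rq_{max}}{N^{1/4}}$. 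I would impose that each be at most $(1-\gamma)\epsilon/6$ and solve for the governing parameter ($T$, $m$, $\varepsilon$, or $N$) in turn.

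First the iteration complexity: setting $\frac{2R^2(1+\log T)}{\lambda T}\leq (1-\gamma)\epsilon/6$ and absorbing the $\log T$ into $\tilde O(\cdot)$ yields $T=\tilde O\big(\frac{R^2}{(1-\gamma)\lambda\epsilon}\big)$. The actor width is then pinned by two independent constraints. Evaluating $4\rho_0 T\lambda$ at the $T$ just fixed, both it and $2R\sqrt{\rho_0}$ force $\rho_0 = \tilde O\big(\frac{(1-\gamma)^2\epsilon^2}{R^2}\big)$; since $\rho_0=\frac{16R}{\lambda\sqrt m}\big(\frac{R}{\lambda}+\sqrt{\log(1/\delta)}+\sqrt{d\log m}\big)$, inverting for $m$ produces the first two summands $\frac{R^8}{(1-\gamma)^4\lambda^4\epsilon^4}$ and $\frac{R^6\log(1/\delta)}{\lambda^2(1-\gamma)^4\epsilon^4}$. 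Separately, $M_\infty\rho_1\leq(1-\gamma)\epsilon/6$ with $\rho_1=\frac{16\bar{\nu}}{\sqrt m}\big((d\log m)^{1/4}+\sqrt{\log(K/\delta)}\big)$ gives a contribution of order $\frac{M_\infty^2\bar{\nu}^2\log(K/\delta)}{(1-\gamma)^2\epsilon^2}$, the third summand. Taking $m$ to be the larger of these two requirements yields the stated actor network width.

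For the critic, I would set the critic target error $\varepsilon=\Theta\big(\frac{(1-\gamma)\epsilon}{M_\infty}\big)$ so that $M_\infty\varepsilon\leq(1-\gamma)\epsilon/6$, and feed this into the critic guarantee of Theorem~\ref{thm:mn-td}: the width $m'=\tilde O(\bar{\nu}^4/\varepsilon^2)$ and samples $T'=O((1+2\bar{\nu})^2\bar{\nu}^2/\varepsilon^4)$ become, after substitution and using $R>\bar{\nu}$ to absorb $\bar{\nu}$ into $R$, the stated $\tilde O\big(\frac{M_\infty^2 R^4\log(1/\delta)}{(1-\gamma)^2\epsilon^2}\big)$ and $O\big(\frac{M_\infty^2 R^4}{(1-\gamma)^2\epsilon^4}\big)$. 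Finally, $M_\infty\frac{Rq_{max}}{N^{1/4}}\leq(1-\gamma)\epsilon/6$ rearranges to $N=O\big(\frac{M_\infty^4 R^4 q_{max}^4}{(1-\gamma)^4\epsilon^4}\big)$. The event $A$ is the intersection of the high-probability initialization events for the actor (Proposition~\ref{prop:ent-reg}) and the critic (Theorem~\ref{thm:mn-td}), together with the actor-approximation event, each holding with probability at least $1-\delta$, so a union bound gives $\bP(A)\geq 1-3\delta$ for $\delta\in(0,1/3)$.

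The overall sample complexity then follows by bookkeeping: each of the $T=\tilde O(1/\epsilon)$ outer iterations draws $T'=O(1/\epsilon^4)$ critic samples and $N=O(1/\epsilon^4)$ actor samples, so the total is $T\cdot(T'+N)=\tilde O(1/\epsilon)\cdot O(1/\epsilon^4)=\tilde O(1/\epsilon^5)$. There is no genuine analytical obstacle here, since all the content already resides in Theorem~\ref{thm:npg-main}; the only point requiring care is that $m$ must be chosen large enough to satisfy the $\rho_0$ and $\rho_1$ constraints \emph{simultaneously}, while the $\log m$ terms appear implicitly inside both constraints. Resolving this self-referential dependence — verifying that the chosen polynomial-in-$1/\epsilon$ value of $m$ indeed dominates the logarithmic feedback it creates — is the one step where one must be slightly attentive rather than purely mechanical, and it is precisely what justifies stating the width bounds with $\tilde O$ rather than $O$.
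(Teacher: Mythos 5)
Your proposal is correct and follows essentially the only route available, which is the paper's implicit one: the paper states Corollary~\ref{cor:nac} without a separate proof, as a direct parameter-tuning consequence of Theorem~\ref{thm:npg-main} (part (1)), exactly the term-by-term budgeting of $T$, $\rho_0$, $\rho_1$, $\varepsilon$, $N$, $m'$, $T'$ that you carry out, including the attention to the self-referential $\log m$ terms that the $\tilde O$ notation absorbs. The only caveat is bookkeeping you inherit from the paper rather than introduce: strict substitution of $\varepsilon = \Theta\bigl((1-\gamma)\epsilon/M_\infty\bigr)$ gives $T' = O\bigl(M_\infty^4 R^4/((1-\gamma)^4\epsilon^4)\bigr)$ and an actor-width summand with $M_\infty^2$ rather than the stated $M_\infty^2/(1-\gamma)^2$-weaker exponents in the corollary, a discrepancy in constants (with respect to $M_\infty$ and $1-\gamma$, both independent of $\epsilon$) that does not affect any of the $\epsilon$-scalings or the overall $\tilde O(1/\epsilon^5)$ sample complexity.
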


\begin{remark}[Bias-variance tradeoff in policy optimization]\normalfont 
By Proposition \ref{prop:ent-reg}, the network parameters evolve such that $$\sup_{t\geq 0}\max_{i\in[m]}\|\theta_i(0)-\theta_i(t)\|_2 \leq \frac{R}{\lambda\sqrt{m}},$$ and $\sup_{t\geq 0}\sup_{s,a}\pi_t(a|s) < 1$. Hence, the NAC always performs a policy search within the class of randomized policies, which leads to fast and stable convergence under minimal regularity conditions. In particular, Assumption \ref{assumption:dist-mismatch} is the mildest distributional mismatch condition in on-policy NPG/NAC settings to the best of our knowledge, and it suffices to establish convergence results in Theorem \ref{thm:npg-main}. On the other hand, entropy regularization introduces a bias term controlled by $\lambda$, hence the convergence is in the regularized MDP. Another way to see this is that deterministic policies, which require $\lim_t\|\theta(t)\|_2=\infty$, may not be achieved for $\lambda > 0$ since $\theta(t)$ is always contained within a compact set. Letting $\lambda\downarrow 0$ eliminates the bias, but at the same time reduces the convergence speed and may lead to instability due to lack of exploration. Hence, there is a bias-variance tradeoff in policy optimization, controlled by $\lambda > 0$.
\end{remark} 

\begin{remark}[Different network widths for actor and critic] \normalfont
    Corollary \ref{cor:nac} indicates that the actor network requires $\tilde{O}(1/\epsilon^4)$ neurons while the critic network requires $\tilde{O}(1/\epsilon^2)$ although both approximate (soft) state-action value functions. This difference is because the actor network is required to uniformly approximate all state-action value functions over the trajectory, while the critic network approximates (pointwise) a single state-action value function at each iteration.
\end{remark}

\begin{remark}[Fast initial convergence rate under constant step-sizes]\normalfont 
The second part of Theorem \ref{thm:npg-main} indicates that the convergence rate is $e^{-\Omega(T)}$ under a constant step-size $\eta \in (0, 1/\lambda)$, while there is an additional error term $2\eta R^2$. This justifies the common practice of ``halving the step-size" in optimization (see, e.g., \cite{karimi2016linear}) for the specific case of natural actor-critic that we investigate: one achieves a fast convergence rate with a constant step-size until the optimization stalls, then the process is repeated after halving the step-size.
\end{remark}

\subsubsection{Performance Bounds under a Strong Distribution Mismatch Condition}
In the following, we consider the standard distribution mismatch condition (e.g., in \cite{liu2019neural, wang2019neural}) and establish sample complexity and overparameterization bounds based on Theorem \ref{thm:npg-main}, for the unregularized MDP. 
\begin{customthm}{4'}[Strong distribution mismatch condition]
 There exists a constant $\tilde{C}_\infty < \infty$ such that 
\begin{equation}\sup_{t \geq 0}~\bE_{(s, a)\sim d_\mu^{\pi_t}\otimes \pi_t(\cdot|s)}\Big[\Big(\frac{d_\mu^{\pi^*}(s)\pi^*(a|s)}{d_\mu^{\pi_t}(s)\pi_t(a|s)}\Big)^2\Big] \leq \tilde{C}_\infty^2.
\label{eqn:st-dist-mismatch-strong}
\end{equation}
\label{assumption:dist-mismatch-st}
\end{customthm}
Note that Assumption \ref{assumption:dist-mismatch-st} implies Assumption \ref{assumption:dist-mismatch}, and it is a considerably stronger assumption that necessitates our policies $\{\pi_t:t=0,1,\ldots,T-1\}$ being sufficiently exploratory throughout policy optimization.

\begin{corollary}\normalfont Under Assumptions \ref{assumption:sampling-oracle}-\ref{assumption:realizability-q} and \ref{assumption:dist-mismatch-st}, for any $\epsilon > 0$ and $\delta \in (0,1/3)$, Algorithm \ref{alg:q-npg} with $R > {\bar{\nu}}$ and $\lambda = O(1/\sqrt{T})$ satisfies:
\begin{equation*}
    \min_{t\in[T]}~\bE[(\max_\pi V^{\pi}(\mu)-V^{\pi_t}(\mu))\mathbbm{1}_{A}] \leq \epsilon,
\end{equation*}
where $\bP(A) \geq 1-3\delta$ over the random initialization of the actor-critic networks for the following parameters: 
\begin{itemize}
    \item \textit{iteration complexity: }$T = \tilde{O}\Big(\frac{R^2}{(1-\gamma)\epsilon^2}\Big)$,
    \item \textit{actor network width: }$m = \tilde{O}\Big(\frac{R^8}{(1-\gamma)^4\epsilon^8}+\frac{R^6\log(1/\delta)}{(1-\gamma)^4\epsilon^6}+\frac{\tilde{C}_\infty{\bar{\nu}}^2\log(K/\delta)}{\epsilon^2(1-\gamma)^2}\Big)$,
    \item \textit{critic sample complexity}: $T' = O\Big(\frac{\tilde{M}_\infty^2R^4}{(1-\gamma)^2\epsilon^4}\Big)$,
    \item \textit{critic network width:} $m' = \tilde{O}\Big(\frac{\tilde{M}_\infty^2R^4\log(1/\delta)}{(1-\gamma)^2\epsilon^2}\Big)$,
    \item \textit{actor sample complexity:} $N = O\Big(\frac{\tilde{M}_\infty^4R^4q_{max}^4}{\epsilon^4(1-\gamma)^4}\Big)$,
\end{itemize}
where $\tilde{M}_\infty = \tilde{C}_\infty(1+\pi_{min}^{-1})$.

Hence, the overall sample complexity of Neural NAC for finding an $\epsilon$-optimal policy of the \textit{unregularized} MDP is $\tilde{O}\Big(\frac{1}{\epsilon^6}\Big)$.
\label{cor:nac-st}
\end{corollary}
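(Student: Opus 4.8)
The plan is to \emph{reduce} the unregularized optimality gap to the regularized gap already controlled by Theorem~\ref{thm:npg-main}, paying only an additive entropy bias, and then to tie the regularization level $\lambda$ to the horizon $T$ so as to balance this bias against the optimization error. Writing $\pi^\dagger \in \arg\max_\pi V^\pi(\mu)$ for the unregularized optimum and $\pi^*$ for the regularized optimum, I would use nonnegativity of the entropy ($H^\pi \geq 0$, so $V^\pi \leq V_\lambda^\pi$) together with optimality of $\pi^*$ for $V_\lambda$ to get $V^{\pi^\dagger}(\mu) \leq V_\lambda^{\pi^\dagger}(\mu) \leq V_\lambda^{\pi^*}(\mu)$, and then $V_\lambda^{\pi_t}(\mu)-V^{\pi_t}(\mu)=\lambda H^{\pi_t}(\mu)\leq \lambda\log|\cA|/(1-\gamma)$ from $\mathcal{H}(p)\leq\log|\cA|$ and the geometric sum. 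Inserting $V_\lambda^{\pi^*}(\mu)$ yields, for every iterate,
\begin{equation*}
    \max_\pi V^\pi(\mu) - V^{\pi_t}(\mu) \leq \big(V_\lambda^{\pi^*}(\mu) - V_\lambda^{\pi_t}(\mu)\big) + \frac{\lambda\log|\cA|}{1-\gamma}.
\end{equation*}
Multiplying by $\mathbbm{1}_A$, taking expectations and minimizing over $t\in[T]$ reduces the claim to the regularized bound of Theorem~\ref{thm:npg-main} plus the deterministic bias $\lambda\log|\cA|/(1-\gamma)$.

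Next I would invoke part~(1) of Theorem~\ref{thm:npg-main} (diminishing step-size $\eta_t=1/(\lambda(t+1))$), whose leading optimization term is $2R^2(1+\log T)/(\lambda T)$. The key tuning is $\lambda=\Theta(1/\sqrt{T})$: this makes \emph{both} the leading term and the entropy bias of order $\tilde{O}(R^2/\sqrt{T})$, so that $T=\tilde{O}(R^2/((1-\gamma)\epsilon^2))$ drives them below $\epsilon$. This is exactly the source of the extra factor $1/\epsilon$ relative to Corollary~\ref{cor:nac}: there $\lambda$ is a fixed constant and $T=\tilde{O}(1/\epsilon)$ suffices, whereas forcing the bias to vanish couples $\lambda$ to $T$ and doubles the iteration count, so the overall sample budget $T\cdot(T'+N)$ grows from $\tilde{O}(1/\epsilon^5)$ to $\tilde{O}(1/\epsilon^6)$.

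The remaining work is to check that the $\lambda$-dependent error terms in Theorem~\ref{thm:npg-main}, principally $2R\sqrt{\rho_0}$ and $4\rho_0 T\lambda$ with $\rho_0=\Theta\!\big((R/(\lambda\sqrt{m}))(R/\lambda+\cdots)\big)$, remain $O(\epsilon)$ after substituting $\lambda=\Theta(1/\sqrt{T})$. Since $1/\lambda^2=\Theta(T)$, we have $\rho_0=\tilde{\Theta}(R^2 T/\sqrt{m})$, and imposing $2R\sqrt{\rho_0}\leq\epsilon$ and $4\rho_0 T\lambda\leq\epsilon$ (then inserting $T$) produces the stated actor width $m=\tilde{O}(R^8/((1-\gamma)^4\epsilon^8)+\cdots)$; the critic parameters $m',T'$ follow from Theorem~\ref{thm:mn-td} and the actor sample size $N$ from the SGD term $M_\infty R q_{max}/N^{1/4}$, exactly as in Corollary~\ref{cor:nac} but with $C_\infty$ replaced by $\tilde{C}_\infty$. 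Here I would record that Assumption~\ref{assumption:dist-mismatch-st} implies Assumption~\ref{assumption:dist-mismatch}, so Theorem~\ref{thm:npg-main} applies verbatim.

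The main obstacle I anticipate is the tension created by sending $\lambda\downarrow 0$, which is unavoidable here since the entropy bias must be eliminated for the unregularized MDP. Every exploration-dependent quantity degrades as $\lambda$ shrinks: Proposition~\ref{prop:ent-reg} gives $\pi_{min}\geq \exp(-2R/\lambda-\cdots)/|\cA|$, which decays, and $\rho_0\propto 1/\lambda$ grows. The role of the \emph{strong} distribution mismatch condition (Assumption~\ref{assumption:dist-mismatch-st}) is precisely to absorb this: it controls the full state-action importance ratio $d_\mu^{\pi^*}\pi^*/(d_\mu^{\pi_t}\pi_t)$ directly in $L^2(d_\mu^{\pi_t}\otimes\pi_t)$, so the distributional transfer in the performance-difference step need not be routed through the (now exploding) factor $\pi_{min}^{-1}$ that pure entropy-based exploration would otherwise force. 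Making this trade-off quantitative, i.e.\ keeping $\tilde{M}_\infty$ and the network widths polynomial in $1/\epsilon$ while $\lambda=\Theta(1/\sqrt{T})\to 0$, is the delicate part; the rest is bookkeeping that parallels Corollary~\ref{cor:nac}.
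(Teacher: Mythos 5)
Your proposal is correct and is essentially the argument the paper intends: the paper gives no explicit proof of Corollary \ref{cor:nac-st}, leaving it as an application of Theorem \ref{thm:npg-main} via the reduction $\max_\pi V^\pi(\mu)-V^{\pi_t}(\mu)\leq \big(V_\lambda^{\pi^*}(\mu)-V_\lambda^{\pi_t}(\mu)\big)+\lambda\log|\cA|/(1-\gamma)$ together with the tuning $\lambda=\Theta(1/\sqrt{T})$, which is exactly your reduction and bookkeeping (including the observation that Assumption \ref{assumption:dist-mismatch-st} implies Assumption \ref{assumption:dist-mismatch}).

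One point in your last paragraph deserves emphasis, because you treat it more carefully than the paper's own statement does. Under $\lambda=\Theta(1/\sqrt{T})\to 0$, Proposition \ref{prop:ent-reg} only guarantees $\pi_{min}\geq e^{-2R/\lambda-\cdots}/|\cA|$, so $\pi_{min}^{-1}$ can grow exponentially in $1/\epsilon$; read literally, the corollary's definition $\tilde{M}_\infty=\tilde{C}_\infty(1+\pi_{min}^{-1})$ would make the stated $T'$ and $N$ vacuous and contradict the claimed $\tilde{O}(1/\epsilon^6)$ total complexity. The resolution is precisely the mechanism you sketch, and it is a one-line modification of the change-of-measure step behind Corollary \ref{corollary:app-error}: writing $g=\nabla^\top f_0(s,a)u_t-Q_\lambda^{\pi_t}(s,a)$, the $\pi^*$-weighted part of $\epsilon_{bias}^{\pi_t}$ is bounded under Assumption \ref{assumption:dist-mismatch-st} by Cauchy--Schwarz directly,
\begin{equation*}
\bE_{s\sim d_\mu^{\pi^*},a\sim\pi^*(\cdot|s)}\big[|g(s,a)|\big]=\bE_{s\sim d_\mu^{\pi_t},a\sim\pi_t(\cdot|s)}\Big[\tfrac{d_\mu^{\pi^*}(s)\pi^*(a|s)}{d_\mu^{\pi_t}(s)\pi_t(a|s)}\,|g(s,a)|\Big]\leq \tilde{C}_\infty\sqrt{\bE_{s\sim d_\mu^{\pi_t},a\sim\pi_t(\cdot|s)}[g^2(s,a)]},
\end{equation*}
with no detour through $\pi_{min}^{-1}$, so the effective constant is $C_\infty+\tilde{C}_\infty\leq 2\tilde{C}_\infty$ rather than $C_\infty(1+\pi_{min}^{-1})$. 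With that substitution your ``delicate part'' is complete: the only remaining $\lambda$-dependencies ($\rho_0\propto 1/\lambda^2$ inside the actor width, and $q_{max}$, which stays bounded as $\lambda\downarrow 0$) are polynomial and absorbed exactly by the widths and sample sizes you derive.
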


\subsection{Comparison With Prior Works}\label{subsec:comparison}

Among the existing works that theoretically investigate policy gradient methods, the most related one is \cite{wang2019neural}, which considers Neural PG/NPG methods equipped with a two-layer neural network. We point key differences between our work and these previous works:
\begin{itemize}
    \item Prior works do not incorporate entropy regularization. As a result, they need a stronger concentrability coefficient assumption like Assumption~\ref{assumption:dist-mismatch-st} instead of the weaker Assumption~\ref{assumption:dist-mismatch} under which we are able to prove our main results.
    \item In the proofs in the subsequent section, it will become clear that one needs to uniformly bound the function approximation error in the actor part of our algorithm to address the dependencies in the parameter values between iterations and the NTRF features. We propose new techniques to address this point, which was not addressed in the prior work.
    \item While our algorithm is similar in spirit to the algorithms analyzed in the prior works, we also incorporate a number of important algorithmic ideas that are used in practice (e.g., entropy regularization, averaging, gradient clipping). As a result, we have to use different analysis techniques. As a consequence of these algorithmic and analytical techniques, we obtain considerably sharper sample complexity and overparameterization bounds (see Table \ref{table:comparison}). Interestingly, all of these algorithmic improvements to the original NAC algorithms seem to be important to obtain the sharper bounds.
    \item We employ a symmetric initialization scheme proposed in \cite{bai2019beyond} to ensure that $f_0(s,a) = 0$ for all $s,a$ despite the random initialization. As a consequence of symmetric initialization, we eliminate the impact of $f_0$ in the infinite width limit, which is effectively a noise term $\epsilon_0$ in the performance bounds \cite{mjt_dlt}. 
\end{itemize}

\begin{table*}[t]
\begin{center}\begin{tabular}{ |p{1cm}||p{2cm}|p{3.2cm}|p{2cm}|p{1cm}|p{1.5cm}|p{2.5cm}|  }
 \hline
 Paper& Algorithm &  Width of actor, critic  &Sample comp.&Error&Condition&Objective\\
 \hline
 \cite{wang2019neural}   &Neural NPG& $O(1/\epsilon^{12})$, $O(1/\epsilon^{12})$    &$O(1/\epsilon^{14})$&   $\epsilon +\epsilon_0$&Strong&Unregularized \\
 Ours &Neural NAC &$\tilde{O}(1/\epsilon^{4})$, $\tilde{O}(1/\epsilon^{2})$ & $\tilde{O}(1/\epsilon^5)$&  $\epsilon$&Weak&Regularized\\
 Ours &Neural NAC &$\tilde{O}(1/\epsilon^{8})$, $\tilde{O}(1/\epsilon^{2})$ & $\tilde{O}(1/\epsilon^6)$&  $\epsilon$&Strong&Unregularized\\
 \hline
\end{tabular}
\end{center}
 \caption{The overparameterization and sample complexity bounds for variants of natural policy gradient with neural network approximation. 
}
\label{table:comparison}
\end{table*}

\section{Finite-Time Analysis of Neural NAC}\label{sec:analysis}
In this section, we provide the convergence analysis of the algorithm.

\subsection{Analysis of Neural Network at Initialization}
For $\delta \in (0,1)$ and any $R_0 > 0$, let 
\begin{equation}
\rho_0(R_0,m,\delta) = \frac{16R_0}{\sqrt{m}}\Big(R_0 + \sqrt{\log(1/\delta)} + \sqrt{d\log(m)}\Big),
\label{eqn:radius}
\end{equation} 
and define
\begin{equation}
    A_0 = \Big\{\sup_{x:\|x\|_2\leq 1}\frac{R_0}{m}\sum_{i=1}^m\mathbbm{1}\Big\{|\theta_i^\top(0)x|\leq \frac{R_0}{\sqrt{m}}\Big\} \leq \rho_0(R_0,m,\delta)\Big\}.
    \label{eqn:hpe}
\end{equation}

The following lemma bounds the deviation of the neural network from its linear approximation around the initialization, and it will be used throughout the convergence analysis.

\begin{lemma}
Let $\theta_i(0)\sim\mathcal{N}(0,I_d)$ for all $i\in[m]$, $\theta\in\mathcal{B}_{m,R_0}^d(\theta(0))$ and $\theta'\in\mathcal{B}_{m,R_0}^d(0)$ for some $R_0 > 0$. Then,
\begin{align}
    \label{eqn:deviation-0}\sup_{x\in\bR^d:\|x\|_2\leq 1}\frac{1}{\sqrt{m}}\sum_{i=1}^m\Big|\Big(\mathbbm{1}\{\theta_i^\top x \geq 0\}-\mathbbm{1}\{\theta_i^\top(0) x \geq 0\}\Big)\theta_i^\top(0) x\Big| &\leq \rho_0(R_0,m,\delta),\\
    \label{eqn:deviation-1}\sup_{x\in\bR^d:\|x\|_2\leq 1}\frac{1}{\sqrt{m}}\sum_{i=1}^m\Big|\Big(\mathbbm{1}\{\theta_i^\top x \geq 0\}-\mathbbm{1}\{\theta_i^\top(0) x \geq 0\}\Big)\theta_i^\top x\Big| &\leq \rho_0(R_0,m,\delta),\\
    \label{eqn:deviation-2}\sup_{x\in\bR^d:\|x\|_2\leq 1}\frac{1}{\sqrt{m}}\sum_{i=1}^m\Big|\Big(\mathbbm{1}\{\theta_i^\top x \geq 0\}-\mathbbm{1}\{\theta_i^\top(0) x \geq 0\}\Big)x^\top \theta'_i\Big| &\leq \rho_0(R_0,m,\delta),
\end{align}
under the event $A_0$ defined in \eqref{eqn:hpe}, which holds with probability at least $1-\delta$ over the random initialization of the actor.
\label{lemma:deviation}
\end{lemma}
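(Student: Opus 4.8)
The plan is to split the statement into a deterministic implication---all three bounds hold on the event $A_0$---and a concentration bound $\bP(A_0)\geq 1-\delta$. The deterministic part rests on a single observation that disposes of \eqref{eqn:deviation-0}, \eqref{eqn:deviation-1} and \eqref{eqn:deviation-2} simultaneously: each summand is nonzero only when the two indicators disagree, and such a disagreement forces the pre-activation $\theta_i^\top(0)x$ to sit near the decision boundary.

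Concretely, I would fix $x$ with $\|x\|_2\leq 1$ and $i\in[m]$. Since $\theta\in\mathcal{B}_{m,R_0}^d(\theta(0))$, Cauchy--Schwarz gives $|\theta_i^\top x-\theta_i^\top(0)x|=|(\theta_i-\theta_i(0))^\top x|\leq \|\theta_i-\theta_i(0)\|_2\leq R_0/\sqrt{m}$. Hence if $\mathbbm{1}\{\theta_i^\top x\geq 0\}\neq \mathbbm{1}\{\theta_i^\top(0)x\geq 0\}$ the two pre-activations have opposite signs, and a one-line case analysis then yields both $|\theta_i^\top(0)x|\leq R_0/\sqrt{m}$ and $|\theta_i^\top x|\leq R_0/\sqrt{m}$; in particular the indicator difference is dominated by $\mathbbm{1}\{|\theta_i^\top(0)x|\leq R_0/\sqrt{m}\}$. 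I would then bound the three linear factors on these events by $R_0/\sqrt{m}$: the factors $\theta_i^\top(0)x$ and $\theta_i^\top x$ by the case analysis, and $x^\top\theta_i'$ directly from $\theta'\in\mathcal{B}_{m,R_0}^d(0)$, which gives $|x^\top\theta_i'|\leq\|\theta_i'\|_2\leq R_0/\sqrt{m}$. Substituting, each left-hand side is at most $\frac{1}{\sqrt{m}}\cdot\frac{R_0}{\sqrt{m}}\sum_{i=1}^m\mathbbm{1}\{|\theta_i^\top(0)x|\leq R_0/\sqrt{m}\}=\frac{R_0}{m}\sum_{i=1}^m\mathbbm{1}\{|\theta_i^\top(0)x|\leq R_0/\sqrt{m}\}$, whose supremum over $x$ is bounded by $\rho_0(R_0,m,\delta)$ precisely by the definition \eqref{eqn:hpe} of $A_0$. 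This settles all three inequalities on $A_0$.

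It remains to prove $\bP(A_0)\geq 1-\delta$, which is the crux. For a fixed unit vector $x$ the variables $\theta_i^\top(0)x$ are i.i.d.\ $\mathcal{N}(0,1)$, so $S(x):=\sum_{i}\mathbbm{1}\{|\theta_i^\top(0)x|\leq R_0/\sqrt{m}\}$ is Binomial with mean $mp$, where $p=\bP(|\mathcal{N}(0,1)|\leq R_0/\sqrt{m})\leq \sqrt{2/\pi}\,R_0/\sqrt{m}$ by the standard density bound, and a Chernoff/Bernstein inequality controls its upper tail. To make this uniform in $x$ I would first condition on the high-probability event $\max_{i\in[m]}\|\theta_i(0)\|_2\leq B$ with $B=O(\sqrt{d}+\sqrt{\log(m/\delta)})$, then take an $\epsilon$-net $\mathcal{C}_\epsilon$ of the unit sphere with $|\mathcal{C}_\epsilon|\leq(3/\epsilon)^d$. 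For $x$ within $\epsilon$ of a net point $x'$ we have $|\theta_i^\top(0)(x-x')|\leq B\epsilon$, so choosing $\epsilon\leq R_0/(B\sqrt{m})$ dominates $S(x)$ by the count at $x'$ taken with the enlarged threshold $2R_0/\sqrt{m}$; applying the per-point tail bound with failure probability $\delta/|\mathcal{C}_\epsilon|$ and a union bound finishes the argument. With $\epsilon$ at this scale, $\log|\mathcal{C}_\epsilon|=O(d\log m)$, and balancing the mean term (which yields the $R_0$ inside $\rho_0$), the net/union cost (yielding $\sqrt{d\log m}$) and the confidence term (yielding $\sqrt{\log(1/\delta)}$) reproduces the stated form of $\rho_0$ in \eqref{eqn:radius}.

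The deterministic reduction is routine; the main obstacle is the uniform-in-$x$ concentration, because $x\mapsto S(x)$ is discontinuous and the usual Lipschitz-net estimate does not apply. The fix---replacing the threshold $R_0/\sqrt{m}$ by $2R_0/\sqrt{m}$ on the net, which requires the a priori bound on $\max_i\|\theta_i(0)\|_2$ and a net resolution tied to $1/(B\sqrt{m})$---is the delicate step, since the net resolution, the threshold enlargement and the union-bound budget must be traded off against one another to land on the precise constants appearing in $\rho_0$.
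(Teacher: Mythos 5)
Your deterministic reduction is exactly the paper's: when the two indicators disagree, $|\theta_i^\top(0)x|\le|(\theta_i-\theta_i(0))^\top x|\le\|\theta_i-\theta_i(0)\|_2\le R_0/\sqrt{m}$, and likewise $|\theta_i^\top x|\le R_0/\sqrt{m}$ and $|x^\top\theta_i'|\le\|\theta_i'\|_2\le R_0/\sqrt{m}$, so all three sums are dominated by $\frac{R_0}{m}\sum_{i=1}^m\mathbbm{1}\{|\theta_i^\top(0)x|\le R_0/\sqrt{m}\}$, which is at most $\rho_0(R_0,m,\delta)$ on $A_0$ by the definition \eqref{eqn:hpe} (the paper spells this out only for \eqref{eqn:deviation-0} and \eqref{eqn:deviation-1}; your handling of \eqref{eqn:deviation-2} is the obvious missing case). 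The only genuine divergence is the claim $\bP(A_0)\ge 1-\delta$: the paper does not prove it, but cites Lemma 4 of \cite{satpathi2020role}, whereas you sketch a self-contained proof via truncation of $\max_i\|\theta_i(0)\|_2$, an $\epsilon$-net at resolution $R_0/(B\sqrt{m})$, threshold doubling at net points, and binomial tails plus a union bound. That is the standard and correct architecture for uniform control of this discontinuous counting functional, and the threshold-enlargement device is exactly the right fix for the failure of Lipschitz-net arguments; what it buys is a self-contained lemma, at the cost of bookkeeping the paper sidesteps by citation.

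Two caveats on your concentration sketch. First, your union cost is $d\log(3B\sqrt{m}/R_0)$ rather than $d\log m$, so you recover the stated form of $\rho_0$ in \eqref{eqn:rho0} only when $R_0$ is not super-polynomially small in $m$; this is harmless where the lemma is applied (there $R_0=R/\lambda$ is bounded below), but it is a small gap against the literal ``for all $R_0>0$'' statement. Second, your net necessarily lives on the unit sphere, while $A_0$ as written takes the supremum over the unit ball: at $x=0$ every indicator equals one, the expression equals $R_0$, and $R_0\le\rho_0(R_0,m,\delta)$ fails once $\sqrt{m}>16\big(R_0+\sqrt{\log(1/\delta)}+\sqrt{d\log m}\big)$, so the ball version of $A_0$ is impossible for large $m$. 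This defect is the paper's (inherited from citing a sphere-based lemma), not yours, and it is immaterial to the lemma's conclusion: the sign-disagreement event is scale invariant and all three left-hand sides scale linearly in $\|x\|_2$, so controlling the supremum over $\|x\|_2=1$ suffices. Your write-up should, however, state explicitly that the net argument establishes the sphere version and that this is what the reduction requires.
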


\begin{proof}
Let $\Omega_d = \{x\in\bR^d:\|x\|_2\leq 1\}$. For $x\in\Omega_d$, let $$S(x, \theta) = \big\{i\in[m]: \mathbbm{1}\{\theta_i^\top x \geq 0\} \neq \mathbbm{1}\{\theta_i^\top(0)x \geq 0\}\big\}.$$ For any $i\in S(x,\theta)$, the following is true:
\begin{align}
     |\theta^\top_i(0)x| &\leq |\theta^\top_i(0)x - \theta_i^\top x|\leq \|\theta_i-\theta_i(0)\|_2,
    \label{eqn:deviation-a}
\end{align}
where the first inequality is true since $sign(\theta^\top_i(0)x)\neq sign(\theta^\top_ix)$ and the second inequality follows from Cauchy-Schwarz inequality and $x\in\Omega_d$. Therefore,
\begin{equation*}
    S(x,\theta) \subset \{i\in[m]:|\theta_i^\top(0)x| \leq \|\theta_i-\theta_i(0)\|_2\}.
\end{equation*}
Since
\begin{align*}
\frac{1}{\sqrt{m}}\sum_{i=1}^m\Big|\Big(\mathbbm{1}\{\theta_i^\top x \geq 0\}-\mathbbm{1}\{\theta_i^\top(0) x \geq 0\}\Big)\theta_i^\top(0) x\Big| = \frac{1}{\sqrt{m}}\sum_{i\in S(x,\theta)}|\theta_i^\top(0) x|,
\end{align*}
we have:
\begin{multline*}
    \frac{1}{\sqrt{m}}\sum_{i=1}^m\Big|\Big(\mathbbm{1}\{\theta_i^\top x \geq 0\}-\mathbbm{1}\{\theta_i^\top(0) x \geq 0\}\Big)\theta_i^\top(0) x\Big| \leq
\frac{1}{\sqrt{m}}\sum_{i=1}^m\mathbbm{1}\{|\theta_i^\top(0) x| \leq \|\theta_i-\theta_i(0)\|_2\}\|\theta_i-\theta_i(0)\|_2.
\end{multline*}
Since $\max_{i\in[m]}\|\theta_i-\theta_i(0)\|_2 \leq \frac{R_0}{\sqrt{m}},$ the above inequality leads to the following:
\begin{equation*}
    \frac{1}{\sqrt{m}}\sum_{i=1}^m\Big|\Big(\mathbbm{1}\{\theta_i^\top x \geq 0\}-\mathbbm{1}\{\theta_i^\top(0) x \geq 0\}\Big)\theta_i^\top(0) x\Big| \leq \frac{R_0}{m}\sum_{i=1}^m\mathbbm{1}\{|\theta_i^\top(0) x\|\leq \frac{R_0}{\sqrt{m}}\}.
\end{equation*}
Taking supremum over $x\in\Omega_d$, and using Lemma 4 in \cite{satpathi2020role} on the RHS of the above inequality concludes the proof.

In order to prove \eqref{eqn:deviation-1}, similar to \eqref{eqn:deviation-a}, we have the following inequality:
\begin{equation*}
    |\theta_i^\top x|\leq |\theta_i^\top x-\theta_i^\top(0) x| \leq \|\theta_i-\theta_i(0)\|_2.
\end{equation*}
Using this, the proof follows from exactly the same steps. 
\end{proof}
Note that Lemma \ref{lemma:deviation} is an extension of the concentration bounds in \cite{ji2019polylogarithmic, du2018gradient, satpathi2020role} for neural networks. On the other hand, our concentration result provides uniform convergence over $\Omega_d = \{x\in\bR^d:\|x\|_2\leq 1\}$ rather than finitely many points, thus it is a stronger concentration bound compared to the ones in the literature, which are used to analyze neural networks \cite{ji2019polylogarithmic, du2018gradient}. We need these uniform concentration inequalities to address the challenges due to the dynamics policy optimization, e.g., distributional shift.

\subsection{Impact of Entropy Regularization}\label{subsec:ent-reg}
First, we analyze the impact of entropy regularization, which will yield key results in the convergence analysis.

\begin{proof}[Proof of Proposition \ref{prop:ent-reg}]
Recall from Line \ref{eqn:update} in Algorithm \ref{alg:neural-td} that the policy update is as follows: $$\theta(t+1) = \theta(t) + \eta_t u_t - \eta_t\lambda(\theta(t)-\theta(0)).$$ Let $\overline{\theta}(t) = \theta(t)-\theta(0)$ for all $t \geq 0$. Then, the update rule can be written as: $$\overline{\theta}(t+1) = \overline{\theta}(t)(1-\eta_t\lambda) + \eta_t u_t.$$ Since the step-size is $\eta_t\lambda = \frac{1}{t+1}$, we have: $$\overline{\theta}(t+1) = \frac{1}{\lambda(t+1)}\sum_{k=0}^t u_k,$$ by induction. Hence, by triangle inequality:
\begin{equation}
    \|\overline{\theta}_i(t+1)\|_2 = \|\theta_i(t+1)-\theta_i(0)\|_2 \leq \frac{1}{\lambda(t+1)}\sum_{k=0}^t\|u_{i,k}\|_2,
    \label{eqn:deviation}
\end{equation}
for any $i\in[m]$. Note that $u_k \in\mathcal{B}_{m,R}^d(0)$ as a consequence of projection, therefore $\|u_{i,k}\| \leq R/\sqrt{m}$ for all $i\in[m]$. Hence, by \eqref{eqn:deviation}, we conclude that \begin{equation}\max_{i\in[m]}\|\theta_i(t)-\theta_i(0)\|_2 \leq \frac{R}{\lambda\sqrt{m}},\label{eqn:diff-pers-exc}\end{equation} for any $t \geq 0$. Also, since $w_t = u_t -\lambda(\theta(t)-\theta(0))$, we have:
\begin{equation}
    \sup_{t \geq 0}\|w_t\|_2 \leq \|u_t\|_2+\lambda\|\theta(t)-\theta(0)\|_2 \leq 2R.
    \label{eqn:norm-bound}
\end{equation}
Under a constant step-size $\eta \in (0, 1/\lambda)$, we can expand the parameter movement for any $t \geq 1$ as follows:
\begin{align*}
    \overline{\theta}_i(t+1) &= \overline{\theta}_i(t)\cdot(1-\eta\lambda) + \eta\cdot u_{i,t},\\
    &= \overline{\theta}_i(t-1)\cdot(1-\eta\lambda)^2 + \eta(1-\lambda\eta)u_{i,t-1} + \eta u_{i,t},
    &\vdots\\
    &= \overline{\theta}_i(0)(1-\eta\lambda)^t + \eta\sum_{k=0}^{t}(1-\eta\lambda)^ku_{i,t-k} = \eta\sum_{k=0}^{t}(1-\eta\lambda)^ku_{i,t-k},
\end{align*}
for any neuron $i\in[m]$. Then, we have:
\begin{align}
    \nonumber\|\theta_i(t+1)-\theta_i(0)\|_2 \leq \eta\sum_{k=0}^t(1-\eta\lambda)^k\|u_{i,k}\|_2 &\leq \frac{R}{\lambda\sqrt{m}}(1-(1-\eta\lambda)^{t+1}),\\
    &\leq \frac{R}{\lambda\sqrt{m}},\label{eqn:diff-pers-exc-const}
\end{align}
which follows from triangle inequality, $\|u_{i,k}\|_2\leq R/\sqrt{m}$ due to the projection, and the fact that $(1-(1-\eta\lambda)^t)\leq 1$ for any $t\geq 0$.

In order to prove the lower bound for $\inf_{t\geq 0, (s,a)\in\cS\times\cA}\pi_t(a|s)$, first recall that $\pi_t(a|s) \propto \exp(f_t(s,a))$. Hence, a uniform upper bound on $|f_t(s,a)|$ over all $t\geq 0$ and $(s,a)\in\cS\times\cA$ suffices to lower bound $\pi_t(a|s)$. By symmetric initialization, $f_0(s,a) = 0$ for all $(s,a)\in\cS\times\cA$. Hence, 
\begin{multline}f_t(s,a)= \frac{1}{\sqrt{m}}\sum_{i=1}^mc_i\Big([\theta_i(t)-\theta_i(0)]^\top (s,a)\mathbbm{1}\{\theta_i^\top(t)(s,a)\geq 0\}\Big) \\+ \frac{1}{\sqrt{m}}\sum_{i=1}^m c_i(\mathbbm{1}\{\theta_i^\top (s,a) \geq 0\}-\mathbbm{1}\{\theta_i^\top(0) (s,a) \geq 0\})\theta_i^\top(t)(s,a).
\label{eqn:pers-exc}
\end{multline}
First, we bound the first summand on the RHS of \eqref{eqn:pers-exc} by using \eqref{eqn:diff-pers-exc} and triangle inequality:
\begin{equation}
    \sup_{s,a}\Big|\frac{1}{\sqrt{m}}\sum_{i=1}^mc_i\Big([\theta_i(t)-\theta_i(0)]^\top (s,a)\mathbbm{1}\{\theta_i^\top(t)(s,a)\geq 0\}\Big)\Big| \leq \frac{R}{\lambda},
\end{equation}
since $|c_i\mathbbm{1}\{\theta_i^\top(t)(s,a)\geq 0\}(s,a)| \leq 1$. For the last term in \eqref{eqn:pers-exc}, first note that $\max_{i\in[m]}\|\theta_i(t)-\theta_i(0)\|_2\leq \frac{R}{\lambda\sqrt{m}}$, so we can use Lemma \ref{lemma:deviation}. By using triangle inequality and Lemma \ref{lemma:deviation}:
\begin{equation*}
    \frac{1}{\sqrt{m}}\sum_{i=1}^m \Big|(\mathbbm{1}\{\theta_i^\top (s,a) \geq 0\}-\mathbbm{1}\{\theta_i^\top(0) (s,a) \geq 0\})\theta_i^\top(t)(s,a)\Big| \leq \rho_0\Big(\frac{R}{\lambda}, m,\delta\Big),
\end{equation*}
with probability at least $1-\delta$ over the random initialization of the actor network. Hence, with probability at least $1-\delta$, $$\sup_{s,a}|f_t(s,a)| \leq R/\lambda + \rho_0\Big(\frac{R}{\lambda}, m,\delta\Big),$$ and $\pi_t(a|s) \geq \frac{1}{|\cA|}e^{\frac{-2R}{\lambda}-2\rho_0\big(\frac{R}{\lambda}, m,\delta\big)}$.

\end{proof}

\subsection{Lyapunov Drift Analysis}\label{subsec:ly-drift}
First, we present a key lemma which will be used throughout the analysis.

\begin{lemma}[Log-linear approximation error]
Let $$\widetilde{\pi}_t(a|s) = \frac{\exp(\nabla_\theta^\top f_0(s,a)\theta(t))}{\sum_{a^\prime\in\cA}\exp(\nabla_\theta^\top f_0(s,a^\prime)\theta(t))},$$ be log-linear approximation of the policy $\pi_t(a|s)$. Then, for any $\delta \in (0,1)$, we have:
\begin{equation}
    \sup_{t\geq 0}~\sup_{s,a}\Big|\log\frac{\widetilde{\pi}_t(a|s)}{\pi_t(a|s)}\Big| \leq 3\rho_0\Big(\frac{R}{\lambda}, m, \delta\Big),
\end{equation}
over $A_0$.
\label{lemma:log-linear-error}
\end{lemma}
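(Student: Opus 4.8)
The plan is to reduce everything to a uniform bound on the gap between the actor network $f_t$ and its first-order (neural tangent) linearization around the initialization, and then to transfer that bound through the softmax via a Lipschitz argument. First I would make the linearization explicit. Using the ReLU gradient, $\nabla_{\theta_i} f_0(s,a) = \tfrac{1}{\sqrt m}c_i\mathbbm{1}\{\theta_i^\top(0)(s,a)\ge 0\}(s,a)$, so that $g_t(s,a) := \nabla_\theta^\top f_0(s,a)\theta(t) = \tfrac{1}{\sqrt m}\sum_i c_i\mathbbm{1}\{\theta_i^\top(0)(s,a)\ge 0\}\theta_i^\top(t)(s,a)$. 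I would note the sanity check $\nabla_\theta^\top f_0(s,a)\theta(0) = \tfrac{1}{\sqrt m}\sum_i c_i\sigma(\theta_i^\top(0)(s,a)) = f_0(s,a) = 0$ by symmetric initialization, which confirms that $g_t$ is exactly the NTK linearization of $f_t$ around $\theta(0)$ (the $\theta(0)$ component of the logit vanishes), and justifies the definition of $\widetilde{\pi}_t$ in the statement.

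The core estimate is $\sup_{t\ge 0}\sup_{s,a}\big|f_t(s,a)-g_t(s,a)\big|\le \rho_0(R/\lambda,m,\delta)$ on the event $A_0$. Since $f_t(s,a) = \tfrac{1}{\sqrt m}\sum_i c_i\mathbbm{1}\{\theta_i^\top(t)(s,a)\ge 0\}\theta_i^\top(t)(s,a)$, subtracting $g_t$ collapses the difference to $\tfrac{1}{\sqrt m}\sum_i c_i\big(\mathbbm{1}\{\theta_i^\top(t)(s,a)\ge 0\}-\mathbbm{1}\{\theta_i^\top(0)(s,a)\ge 0\}\big)\theta_i^\top(t)(s,a)$. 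With $|c_i|=1$, this is precisely the left-hand side of \eqref{eqn:deviation-1} evaluated at $x=(s,a)$ and $\theta = \theta(t)$. To apply Lemma \ref{lemma:deviation} I only need $\theta(t)\in\mathcal{B}_{m,R/\lambda}^d(\theta(0))$, i.e. $\max_i\|\theta_i(t)-\theta_i(0)\|_2\le \tfrac{R}{\lambda\sqrt m}$, which Proposition \ref{prop:ent-reg} guarantees for every $t\ge 0$. Applying the lemma with $R_0 = R/\lambda$ yields the claimed uniform bound.

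Finally I would push this through the softmax. Writing $\log\tfrac{\widetilde{\pi}_t(a|s)}{\pi_t(a|s)} = \big(g_t(s,a)-f_t(s,a)\big) + \big(\log\sum_{a'}e^{f_t(s,a')}-\log\sum_{a'}e^{g_t(s,a')}\big)$, the first term is bounded by $\rho_0(R/\lambda,m,\delta)$ directly. For the log-partition term I would use that $x\mapsto\log\sum_{a'}e^{x_{a'}}$ is $1$-Lipschitz in the $\ell_\infty$-norm (equivalently, the squeeze $e^{-\rho_0}\sum_{a'}e^{f_t(s,a')}\le \sum_{a'}e^{g_t(s,a')}\le e^{\rho_0}\sum_{a'}e^{f_t(s,a')}$), so it is bounded by $\sup_{a'}|f_t(s,a')-g_t(s,a')|\le\rho_0$. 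Adding the two pieces gives $2\rho_0(R/\lambda,m,\delta)$, which is comfortably below the stated $3\rho_0(R/\lambda,m,\delta)$.

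The main obstacle is not any single estimate but securing \emph{simultaneous} uniformity over all iterations $t$ and all $(s,a)$. This rests on two facts that must be invoked carefully: Proposition \ref{prop:ent-reg} confines every iterate $\theta(t)$ to the same radius-$R/\lambda$ ball around $\theta(0)$, so one fixed choice $R_0 = R/\lambda$ works for all $t$; and the high-probability event $A_0$ in Lemma \ref{lemma:deviation} depends only on the random initialization $\theta(0)$, not on $\theta(t)$, so no union bound over $t$ is incurred and a single event of probability at least $1-\delta$ suffices. The only other point requiring attention is checking that the logit difference lands in the ``$\theta_i^\top x$'' form \eqref{eqn:deviation-1} rather than the ``$\theta_i^\top(0)x$'' form \eqref{eqn:deviation-0}, since the surviving weight multiplying the indicator switch is $\theta_i(t)$.
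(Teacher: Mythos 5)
Your proposal is correct and follows essentially the same route as the paper: the same decomposition of $\log\frac{\widetilde{\pi}_t(a|s)}{\pi_t(a|s)}$ into the logit gap $f_t-g_t$ plus a log-partition gap, with the logit gap controlled uniformly via the ReLU homogeneity identity and Lemma \ref{lemma:deviation} (form \eqref{eqn:deviation-1}, under the parameter confinement of Proposition \ref{prop:ent-reg}). The only difference is cosmetic: where the paper applies the log-sum inequality in both directions and adds the two resulting bounds (yielding $3\rho_0$), you invoke the $\ell_\infty$-Lipschitzness of log-sum-exp, which gives the slightly tighter constant $2\rho_0$ and, a fortiori, the stated bound.
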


\begin{proof}
Note that $f_t(s,a) = \nabla f_t(s,a) \theta(t)$ for a ReLU neural network. By using this, we can write the log-linear approximation error as follows:
\begin{equation}
    \Big|\log\frac{\widetilde{\pi}_t(a|s)}{\pi_t(a|s)}\Big| \leq |(\nabla f_t(s,a)-\nabla f_0(s,a))^\top \theta(t)| + \Big|\log\frac{\sum_{a^\prime}e^{\nabla^\top f_0(s,a^\prime)\theta(t)}e^{(\nabla f_t(s,a^\prime)-\nabla f_0(s,a^\prime))^\top \theta(t)}}{\sum_{a^\prime}e^{\nabla^\top f_0(s,a^\prime)\theta(t)}}\Big|.
    \label{eqn:log-sum}
\end{equation}

By log-sum inequality (Theorem 2.7.1 in \cite{cover2006elements}), for any $x_a,y_a > 0$,
$$\log\frac{\sum_ax_a}{\sum_ay_a} \leq \sum_a\frac{x_a}{\sum_{a^\prime} x_{a^\prime}}\log\frac{x_a}{y_a}.$$ Setting $x_a = e^{\nabla^\top f_0(s,a)\theta(t)}$ and $y_a = e^{\nabla^\top f_0(s,a)\theta(t)}e^{(\nabla f_t(s,a)-\nabla f_0(s,a))^\top \theta(t)}$, we have:
\begin{equation}
    \log\frac{\sum_{a^\prime}e^{\nabla^\top f_0(s,a^\prime)\theta(t)}}{\sum_{a^\prime}e^{\nabla^\top f_t(s,a^\prime) \theta(t)}} \leq \sum_{a^\prime}\widetilde{\pi}_t(a^\prime|s) |(\nabla f_t(s,a^\prime)-\nabla f_0(s,a^\prime))^\top \theta(t)|.
    \label{eqn:lin-error-a}
\end{equation}
Setting $y_a = e^{\nabla^\top f_0(s,a)\theta(t)}$ and $x_a = e^{\nabla^\top f_0(s,a)\theta(t)}e^{(\nabla f_t(s,a)-\nabla f_0(s,a))^\top \theta(t)}$, we have:
\begin{equation}
    \log\frac{\sum_{a^\prime}e^{\nabla^\top f_t(s,a^\prime)\theta(t)}}{\sum_{a^\prime}e^{\nabla^\top f_0(s,a^\prime) \theta(t)}} \leq \sum_{a^\prime}\pi_t(a^\prime|s) |(\nabla f_t(s,a^\prime)-\nabla f_0(s,a^\prime))^\top \theta(t)|.
    \label{eqn:lin-error-b}
\end{equation}
Using \eqref{eqn:lin-error-a} and \eqref{eqn:lin-error-b} to bound the last term in \eqref{eqn:log-sum}, we obtain:
\begin{equation}
    \Big|\log\frac{\widetilde{\pi}_t(a|s)}{\pi_t(a|s)}\Big| \leq |(\nabla f_t(s,a)-\nabla f_0(s,a))^\top \theta(t)| + \sum_{a^\prime}\big[\pi_t(a^\prime|s)+\widetilde{\pi}_t(a^\prime|s) \big]|(\nabla f_t(s,a^\prime)-\nabla f_0(s,a^\prime))^\top \theta(t)|.
\end{equation}
By Lemma \ref{lemma:deviation}, under the event $A_0$, $|(\nabla f_t(s,a)-\nabla f_0(s,a))^\top \theta(t)| \leq \rho_0(R/\lambda, m,\delta)$ for all $t\geq 0, s\in\cS, a\in\cA$. Hence, under the event $A_0$, we have: $$\sup_{t\geq 0}~\sup_{s,a}~\Big|\log\frac{\widetilde{\pi}_t(a|s)}{\pi_t(a|s)}\Big| \leq 3\rho_0(R/\lambda,m,\delta),$$
which concludes the proof.
\end{proof}

The following result is standard in the analysis of policy gradient methods \cite{kakade2002approximately, cayci2021linear}.
\begin{lemma}[Lemma 5, \cite{cayci2021linear}]
 For any $\theta,\theta^\prime\in\bR^d$ and $\mu$, we have:
 \begin{equation}
     V_\lambda^{\pi_\theta}(\mu)-V_\lambda^{\pi_{\theta^\prime}}(\mu) = \frac{1}{1-\gamma}\bE_{s\sim d_\mu^{\pi_\theta},a\sim\pi_\theta(\cdot|s)}\Big[A_\lambda^{\pi_{\theta^\prime}}(s,a)+\lambda\log\frac{\pi_{\theta^\prime}(a|s)}{\pi_\theta(a|s)}\Big],
 \end{equation}
 where $A_\lambda^{\pi_\theta}$ is the advantage function defined in \eqref{eqn:adv}.
\label{lemma:pdl}
\end{lemma}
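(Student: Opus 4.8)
The plan is to prove this entropy-regularized \emph{performance difference lemma} by the standard telescoping argument adapted to the soft value functions. First I would fix $s_0\sim\mu$ and consider a trajectory $(s_0,a_0,s_1,a_1,\ldots)$ generated by following $\pit$, i.e. $a_t\sim\pi_\theta(\cdot|s_t)$ and $s_{t+1}\sim P(\cdot|s_t,a_t)$. Using the trajectory representation of the soft value function implied by \eqref{equation:H-reg-vf} together with $\mathcal{H}(\pi_\theta(\cdot|s_t))=\bE_{a_t\sim\pi_\theta(\cdot|s_t)}[-\log\pi_\theta(a_t|s_t)]$, I would write
$$V_\lambda^{\pit}(\mu)=\bE_{\pit}\Big[\sum_{t=0}^\infty\gamma^t\big(r(s_t,a_t)-\lambda\log\pi_\theta(a_t|s_t)\big)\Big].$$

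Next I would subtract $V_\lambda^{\pitp}(\mu)=\bE_{s_0\sim\mu}[V_\lambda^{\pitp}(s_0)]$ and insert it through the telescoping identity $-V_\lambda^{\pitp}(s_0)=\sum_{t=0}^\infty\big(\gamma^{t+1}V_\lambda^{\pitp}(s_{t+1})-\gamma^tV_\lambda^{\pitp}(s_t)\big)$, which is valid because $\gamma^tV_\lambda^{\pitp}(s_t)\to 0$ by the uniform bound \eqref{eqn:val-bounds} and $\gamma\in(0,1)$. Taking expectations over the $\pit$-trajectory yields
$$V_\lambda^{\pit}(\mu)-V_\lambda^{\pitp}(\mu)=\bE_{\pit}\Big[\sum_{t=0}^\infty\gamma^t\big(r(s_t,a_t)-\lambda\log\pi_\theta(a_t|s_t)+\gamma V_\lambda^{\pitp}(s_{t+1})-V_\lambda^{\pitp}(s_t)\big)\Big].$$
Conditioning on $(s_t,a_t)$ and using the soft Bellman relation $Q_\lambda^{\pitp}(s,a)=r(s,a)+\gamma\bE_{s'\sim P(\cdot|s,a)}[V_\lambda^{\pitp}(s')]$, the inner reward-plus-bootstrap terms collapse to $Q_\lambda^{\pitp}(s_t,a_t)$.

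It then remains to convert $Q_\lambda^{\pitp}$ into the advantage $A_\lambda^{\pitp}$ and expose the log-ratio. Using $q_\lambda^{\pitp}(s,a)=Q_\lambda^{\pitp}(s,a)-\lambda\log\pi_{\theta'}(a|s)$ and $A_\lambda^{\pitp}(s,a)=q_\lambda^{\pitp}(s,a)-V_\lambda^{\pitp}(s)$, the summand becomes
$$Q_\lambda^{\pitp}(s_t,a_t)-\lambda\log\pi_\theta(a_t|s_t)-V_\lambda^{\pitp}(s_t)=A_\lambda^{\pitp}(s_t,a_t)+\lambda\log\frac{\pi_{\theta'}(a_t|s_t)}{\pi_\theta(a_t|s_t)}.$$
Finally I would repackage the discounted sum via $\bE_{\pit}[\sum_t\gamma^t g(s_t,a_t)]=\frac{1}{1-\gamma}\bE_{s\sim d_\mu^{\pit},a\sim\pi_\theta(\cdot|s)}[g(s,a)]$, using the definition of $d_\mu^{\pit}$, which gives exactly the claimed identity.

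The main obstacle is the careful bookkeeping of the two policies' log-probabilities: the $-\lambda\log\pi_\theta(a_t|s_t)$ term originates from the entropy of the \emph{sampling} policy $\pit$, whereas the $+\lambda\log\pi_{\theta'}(a_t|s_t)$ term is produced by the soft Bellman operator for $\pitp$ through the gap between $Q_\lambda^{\pitp}$ and $q_\lambda^{\pitp}$; keeping these straight is what yields the KL-type correction $\lambda\log(\pi_{\theta'}/\pi_\theta)$ that is absent in the unregularized lemma. A secondary technical point is justifying the interchange of expectation with the infinite telescoping sum, which I would handle by dominated convergence using the uniform bound \eqref{eqn:val-bounds} and the geometric decay of $\gamma^t$.
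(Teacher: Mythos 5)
Your proof is correct and follows essentially the same route as the paper's source: the paper does not reprint a proof but defers to Lemma 5 of \cite{cayci2021linear}, whose argument is exactly this telescoping/performance-difference computation, with the soft Bellman identity collapsing $r+\gamma\bE[V_\lambda^{\pi_{\theta'}}]$ to $Q_\lambda^{\pi_{\theta'}}$ and the gap between $Q_\lambda^{\pi_{\theta'}}$ and $q_\lambda^{\pi_{\theta'}}$ producing the $\lambda\log\big(\pi_{\theta'}/\pi_\theta\big)$ correction. Your bookkeeping of the two log-probability terms and the dominated-convergence justification are both sound, so there is no gap.
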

Lemma \ref{lemma:pdl} is an extension of the performance difference lemma in \cite{kakade2002approximately}, and the proof can be found in \cite{cayci2021linear}.
In the following, we provide the main Lyapunov drift, which is central to the proof. This Lyapunov function is widely used in the analysis of natural gradient descent algorithms \cite{peters2008natural, agarwal2020optimality, wang2019neural, cayci2021linear}. 

\begin{definition}[Potential function]
    For any policy $\pi\in\Pi$, the potential function $\Psi$ is defined as follows:
    \begin{equation}
        \Psi(\pi) = \bE_{s\sim d_\mu^{\pi^*}}\Big[D_{KL}\big(\pi^*(\cdot|s)\|\pi(\cdot|s)\big)\Big].
    \end{equation}
\end{definition}

\begin{lemma}[Lyapunov drift]
For any $t \geq 0$, let $\Delta_t = V_\lambda^{\pi^*}(\mu)-V_\lambda^{\pi_t}(\mu)$. Then,
\begin{align}
\begin{aligned}
    \Psi(\pi_{t+1})-\Psi(\pi_t) \leq& -\eta_t\lambda \Psi(\pi_t) - \eta_t(1-\gamma)\Delta_t+2\eta_t^2R^2\\&+ \eta_t \bE_{s\sim d_\mu^{\pi^*},a\sim\pi_t(\cdot|s)}\Big[\nabla^\top f_0(s,a) u_t - Q_\lambda^{\pi_t}(s,a)\Big]\\&- \eta_t \bE_{s\sim d_\mu^{\pi^*},a\sim\pi^*(\cdot|s)}\Big[\nabla^\top f_0(s,a) u_t - Q_\lambda^{\pi_t}(s,a)\Big]\\
     &+ (\eta_t\lambda + 6) \rho_0(R/\lambda,m,\delta) + 2\eta_tR\sqrt{\rho_0(R/\lambda,m,\delta)},
     \end{aligned}
     \label{eqn:drift-inequality-a}
\end{align}
in the event $A_0$ which holds with probability at least $1-\delta$ over the random initialization of the actor.
\label{lemma:ly-drift}
\end{lemma}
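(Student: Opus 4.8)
The plan is to start from the exact one-step identity for the potential. Since $\Psi(\pi)=\bE_{s\sim d_\mu^{\pi^*}}[D_{KL}(\pi^*(\cdot|s)\|\pi(\cdot|s))]$, the increment telescopes to
\[
\Psi(\pi_{t+1})-\Psi(\pi_t)=\bE_{s\sim d_\mu^{\pi^*},a\sim\pi^*(\cdot|s)}\Big[\log\frac{\pi_t(a|s)}{\pi_{t+1}(a|s)}\Big],
\]
so the whole lemma reduces to controlling the log-ratio of consecutive policies under the comparator distribution. Because the ReLU network is nonlinear in $\theta$, one cannot expand $\log(\pi_t/\pi_{t+1})$ directly; instead I would first pass to the log-linear surrogates $\widetilde{\pi}_t,\widetilde{\pi}_{t+1}$, paying an error of at most $6\rho_0(R/\lambda,m,\delta)$ by applying Lemma \ref{lemma:log-linear-error} twice (once at $t$, once at $t+1$) under the event $A_0$.

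Next I would exploit that $\widetilde{\pi}_t$ is genuinely log-linear in $\theta(t)$. Writing the update as $\theta(t+1)-\theta(t)=\eta_t w_t$ with $w_t=u_t-\lambda\bar\theta(t)$ and $\bar\theta(t)=\theta(t)-\theta(0)$, the log-ratio of the surrogates splits into a linear part $-\eta_t\nabla^\top f_0(s,a)w_t$ and a log-partition increment $\log(Z_{t+1}(s)/Z_t(s))$. I would bound the partition increment by its first-order term plus a quadratic remainder, using $\|\nabla f_0(s,a)\|_2\le 1$ and $\|w_t\|_2\le 2R$ (from Proposition \ref{prop:ent-reg}) to produce the $2\eta_t^2R^2$ term. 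Crucially, symmetric initialization forces $f_0\equiv 0$, hence $\nabla^\top f_0(s,a)\theta(0)=0$ and $\nabla^\top f_0(s,a)\bar\theta(t)=\log\widetilde{\pi}_t(a|s)+\log Z_t(s)$; substituting this into $w_t=u_t-\lambda\bar\theta(t)$ makes the $\log Z_t(s)$ contributions cancel and converts the $-\lambda\bar\theta(t)$ piece into entropy/KL terms in $\widetilde{\pi}_t$. At this point the bound reads, up to the $\rho_0$ error, as $\eta_t(\bE_{\widetilde{\pi}_t}-\bE_{\pi^*})[\nabla^\top f_0(s,a)u_t]$ plus $\eta_t\lambda$ times a KL/entropy combination.

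The third step is to recognize these as an advantage. I would use the soft-advantage identity $A_\lambda^{\pi_t}=\Xi_\lambda^{\pi_t}-\lambda\log\pi_t+\lambda\bE_{a'\sim\pi_t}[\log\pi_t]$ together with the compatible-approximation interpretation of $u_t$: the centered feature $\nabla^\top f_0(s,a)u_t-\bE_{a'\sim\pi_t}[\nabla^\top f_0(s,a')u_t]$ plays the role of $\Xi_\lambda^{\pi_t}$, so after adding and subtracting $Q_\lambda^{\pi_t}$ I can produce exactly the two explicit residuals $\eta_t\bE_{s\sim d_\mu^{\pi^*},a\sim\pi_t}[\nabla^\top f_0 u_t-Q_\lambda^{\pi_t}]$ and $-\eta_t\bE_{s\sim d_\mu^{\pi^*},a\sim\pi^*}[\nabla^\top f_0 u_t-Q_\lambda^{\pi_t}]$ appearing in the statement (these are deliberately left unbounded, to be controlled later by the critic and actor-SGD analyses). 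Feeding the remaining genuine advantage and KL pieces into the performance difference lemma (Lemma \ref{lemma:pdl}), which gives $(1-\gamma)\Delta_t=\bE_{d_\mu^{\pi^*},\pi^*}[A_\lambda^{\pi_t}]-\lambda\Psi(\pi_t)$, yields the two principal drift terms $-\eta_t(1-\gamma)\Delta_t$ and $-\eta_t\lambda\Psi(\pi_t)$. Converting the surrogate $\widetilde{\pi}_t$ back to $\pi_t$ inside the $\lambda$-scaled KL terms costs a further $\eta_t\lambda\,O(\rho_0)$, which combines with the earlier $6\rho_0$ into $(\eta_t\lambda+6)\rho_0$.

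The hard part will be the $2\eta_t R\sqrt{\rho_0}$ term, which arises from replacing the fixed NTK features $\nabla f_0$ by the true (moving) softmax score $\nabla\log\pi_t$ defining $u_t$ and by the true centering under $\pi_t$. Bounding $|(\nabla f_t(s,a)-\nabla f_0(s,a))^\top u_t|$ cannot afford the naive ``$\|u_t\|_2\le R$ times $\rho_0$'' estimate; instead I would isolate the set of neurons whose activation pattern has flipped, control its cardinality through the event $A_0$ (the mechanism behind Lemma \ref{lemma:deviation}), and apply Cauchy–Schwarz so that the flipped-neuron count enters as a square root, giving the $\sqrt{\rho_0}$ scaling against $\|u_t\|_2\le R$. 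Making this hold \emph{uniformly} over all $(s,a)\in\cS\times\cA$ and all iterations $t$, rather than at finitely many sampled points, is precisely what the uniform concentration in Lemma \ref{lemma:deviation} was built for, and it is the technically delicate heart of the argument; the rest is careful but routine bookkeeping of the $\rho_0$ and second-order terms.
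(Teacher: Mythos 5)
Your skeleton is essentially the paper's: the exact KL-increment identity, telescoping through the log-linear surrogates $\widetilde{\pi}_t,\widetilde{\pi}_{t+1}$ at a cost of $6\rho_0$ (Lemma \ref{lemma:log-linear-error} applied twice), smoothness of the surrogate giving $\tfrac{\eta_t^2\|w_t\|_2^2}{2}\le 2\eta_t^2R^2$, the identity $\nabla^\top f_0(s,a)w_t=\nabla^\top f_0(s,a)u_t-\lambda\nabla^\top f_0(s,a)\theta(t)$ from $f_0\equiv 0$, and Lemma \ref{lemma:pdl} to extract $-\eta_t\lambda\Psi(\pi_t)-\eta_t(1-\gamma)\Delta_t$ together with the two $Q_\lambda^{\pi_t}$ residuals. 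The genuine gap is your account of the $2\eta_tR\sqrt{\rho_0}$ term. In the paper this term has nothing to do with $(\nabla f_t-\nabla f_0)^\top u_t$ and nothing to do with counting flipped neurons: it is the cost of swapping the \emph{centering distribution}. The surrogate score is $\nabla\log\widetilde{\pi}_t(a|s)=\nabla f_0(s,a)-\bE_{a'\sim\widetilde{\pi}_t(\cdot|s)}[\nabla f_0(s,a')]$, centered under $\widetilde{\pi}_t$, whereas the advantage/performance-difference bookkeeping requires centering under $\pi_t$; the leftover is $\eta_t\bE_{s\sim d_\mu^{\pi^*}}\big[\sum_{a}(\widetilde{\pi}_t(a|s)-\pi_t(a|s))\nabla^\top f_0(s,a)w_t\big]$, which is bounded by $2\eta_tR\sup_s\|\widetilde{\pi}_t(\cdot|s)-\pi_t(\cdot|s)\|_1$ using $|\nabla^\top f_0(s,a)w_t|\le\|w_t\|_2\le 2R$, and the total-variation distance is then controlled by \emph{Pinsker's inequality} combined with Lemma \ref{lemma:log-linear-error}: a KL divergence of order $\rho_0$ yields a TV distance of order $\sqrt{\rho_0}$. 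The square root is intrinsic to the KL-to-TV conversion. You do mention ``the true centering under $\pi_t$'' in passing, but your proposal never invokes Pinsker, so it cannot produce this term.

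Moreover, the mechanism you propose in its place targets a non-problem and has its logic reversed. Terms of the form $(\nabla f_t-\nabla f_0)^\top\theta(t)$ (and likewise with $u_t\in\mathcal{B}_{m,R}^d(0)$ in place of $\theta(t)$) are bounded uniformly over $(s,a)$ by $\rho_0(R/\lambda,m,\delta)$ directly from Lemma \ref{lemma:deviation}: on the flipped set $S(x,\theta)$ each summand is at most $R_0/\sqrt{m}$, so the normalized sum is at most $\tfrac{R_0}{m}|S(x,\theta)|\le\rho_0$ on the event $A_0$ --- no Cauchy--Schwarz is needed, and this is exactly how the paper obtains the $\eta_t\lambda\,\rho_0$ contribution inside $(\eta_t\lambda+6)\rho_0$. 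Since the drift bound is only meaningful when $\rho_0\le 1$, we have $\rho_0\le\sqrt{\rho_0}$ in that regime, so a $\rho_0$-scale estimate is \emph{stronger} than an $R\sqrt{\rho_0}$-scale one; your claim that the linear-in-$\rho_0$ estimate ``cannot be afforded'' and must be replaced by a $\sqrt{\rho_0}$ bound obtained from a flipped-neuron Cauchy--Schwarz argument is backwards. In short, the step you flag as the technically delicate heart of the proof is already handled by Lemma \ref{lemma:deviation} at the better rate, while the step that actually generates $2\eta_tR\sqrt{\rho_0}$ --- Pinsker's inequality applied to the TV distance between $\pi_t$ and its log-linear surrogate --- is missing from your argument, so as written the proposal does not yield the stated inequality.
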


\begin{proof}
First, note that the log-linear approximation of $\pi_\theta$ is smooth \cite{agarwal2020optimality}:
\begin{equation}
    \|\nabla \log\widetilde{\pi}_\theta(a|s) - \nabla \log\widetilde{\pi}_{\theta^\prime}(a|s)\|_2 \leq \|\theta - \theta^\prime\|_2,
\end{equation}
for any $s,a$ since $\|\nabla f_0(s,a)\|_2 \leq 1$. Also,
$$\Psi(\pi_{t+1})-\Psi(\pi_{t})  = \bE_{s\sim d_\mu^{\pi^*},a\sim\pi^*(\cdot|s)}\Big[\log\frac{\pi_t(a|s)}{ \pi_{t+1}(a|s)}\Big].$$
To use the smoothness of log-linear approximation, we use a telescoping sum and obtain:
\begin{align}
    \nonumber \Psi(\pi_{t+1})-\Psi(\pi_{t}) &= \bE_{s\sim d_\mu^{\pi^*},a\sim\pi^*(\cdot|s)}\Big[\log\frac{\widetilde{\pi}_t(a|s)}{ \widetilde{\pi}_{t+1}(a|s)} + \log\frac{\pi_t(a|s)}{\widetilde{\pi}_{t}(a|s)} + \log\frac{\widetilde{\pi}_{t+1}(a|s)}{{\pi}_{t+1}(a|s)}\Big].
\end{align}

By Lemma \ref{lemma:log-linear-error}, the last two terms are bounded by $\rho_0(R/\lambda,m,\delta)$. Let $$D_t = \bE_{s\sim d_\mu^{\pi^*},a\sim\pi^*(\cdot|s)}\Big[\log\frac{\widetilde{\pi}_t(a|s)}{ \widetilde{\pi}_{t+1}(a|s)}\Big].$$
Then, by the smoothness of the log-linear approximation, we have: $$D_t
    \leq -\eta_t\bE_{s\sim d_\mu^{\pi^*}, a\sim\pi^*(\cdot|s)} \nabla_\theta^\top\log\widetilde{\pi}_t(a|s)w_t + \frac{\eta_t^2\|w_t\|_2^2}{2},$$
    
Recall  $\Delta_t = V_\lambda^{\pi^*}(\mu) - V_\lambda^{\pi_t}(\mu)$. Using Lemma \ref{lemma:pdl} and the definition of the advantage function, we obtain:
\begin{align}
    \nonumber D_t &\leq -\eta_t\lambda\Psi(\pi_t) - \eta_t(1-\gamma)\Delta_t - \eta_t\bE_{\dt}[Q_\lambda^{\pi_t}(s,a)-\lambda\log\pi_t(a|s)]\\
    & -\eta_t \bE_{s\sim d_\mu^{\pi^*}, a\sim\pi^*(\cdot|s)}[\nabla^\top \log\widetilde{\pi}_t(a|s)w_t - q_\lambda^{\pi_t}(s,a)] + \frac{\eta_t^2\|w_t\|_2^2}{2}.
\end{align}
Since we have $\nabla\log\widetilde{\pi}_t(a|s) = \nabla f_0(s,a) - \bE_{a^\prime\sim\widetilde{\pi}_t(\cdot|s)}[\nabla f_0(s,a^\prime)]$, we have the following inequality:
\begin{align*}
    D_t &\leq -\eta_t\lambda\Psi(\pi_t) - \eta_t(1-\gamma)\Delta_t \\&+ \eta_t\bE_{\dt}[\nabla^\top f_0(s,a)w_t - Q_\lambda^{\pi_t}(s,a) + \lambda f_t(s,a)]\\
    &- \eta_t\bE_{s\sim d_\mu^{\pi^*}, a\sim\pi^*(\cdot|s)}[\nabla^\top f_0(s,a) w_t - Q_\lambda^{\pi_t}(s,a) + \lambda f_t(s,a)]\\
    &+ \eta_t \bE_{s\sim d_\mu^{\pi^*}}[\sum_{a\in\cA}(\widetilde{\pi}_t(a|s)-\pi_t(a|s))\nabla^\top f_0(s,a) w_t] + \frac{\eta_t^2\|w_t\|_2^2}{2}.
\end{align*}

By the definition of $w_t = u_t - \lambda[\theta(t)-\theta(0)]$ and the fact that $f_0(s,a) = 0$ due to the symmetric initialization, we have: $$\nabla^\top f_0(s,a)w_t = \nabla^\top f_0(s,a) u_t - \lambda \nabla^\top f_0(s,a)\theta(t).$$ Substituting this identity to the above inequality, we have:
\begin{align}
\begin{aligned}
    D_t &\leq -\eta_t\lambda\Psi(\pi_t) - \eta_t(1-\gamma)\Delta_t \\&+ \eta_t\bE_{\dt}[\nabla^\top f_0(s,a)u_t - Q_\lambda^{\pi_t}(s,a)]\\
    &-\eta_t\bE_{s\sim d_\mu^{\pi^*}, a\sim\pi^*(\cdot|s)}[\nabla^\top f_0(s,a)u_t - Q_\lambda^{\pi_t}(s,a)]\\
    &+ 2\eta_tR\bE_{s\sim d_\mu^{\pi^*}}\Big[\sum_a|\widetilde{\pi}_t(a|s)-\pi_t(a|s)|\Big] \\& + \eta_t\lambda\bE_{s\sim d_\mu^{\pi^*}}\Big[\sum_{a\in\cA}|\pi_t(a|s)-\pi^*(a|s)|\cdot(\nabla f_t(s,a)-\nabla f_0(s,a))^\top \theta(t)\Big] + 2\eta_t^2R^2,
    \end{aligned}
    \label{eqn:drift-inequality}
\end{align}
where we used \eqref{eqn:norm-bound} to bound $\|w_t\|_2$. Furthermore, note that $$|(\nabla f_t(s,a)-\nabla f_0(s,a))^\top \theta(t)| \leq \frac{1}{\sqrt{m}}\sum_{i=1}^m\Big|\Big(\mathbbm{1}\{\theta_i^\top x \geq 0\}-\mathbbm{1}\{\theta_i^\top(0) x \geq 0\}\Big)\theta_i^\top(t) x\Big|,$$ for any $x = (s,a)^\top\in\bR^d$. Thus, by Lemma \ref{lemma:deviation}, $$\sup_{s,a}|(\nabla f_t(s,a)-\nabla f_0(s,a))^\top \theta(t)| \leq \rho_0(R/\lambda, m, \delta).$$ This bounds the penultimate term in \eqref{eqn:drift-inequality}. Finally, in order to bound the fifth term in \eqref{eqn:drift-inequality}, we use Pinsker's inequality and then Lemma \ref{lemma:log-linear-error}: $$\sup_{s\in\cS}\|\widetilde{\pi}_t(\cdot|s)-\pi_t(\cdot|s)\|_1 \leq \sup_{s\in\cS}\sqrt{\sum_a \pi_t(a|s) \log\frac{\pi_t(a|s)}{\widetilde{\pi}_t(a|s)}} \leq \sqrt{\rho_0(R/\lambda, m,\delta)}.$$ Substituting these into \eqref{eqn:drift-inequality} and then into \eqref{eqn:drift-inequality-a}, the desired result follows.
\end{proof}

\subsection{Analysis of the Function Approximation Error: How Do Neural Networks Address Distributional Shift in Policy Optimization?}\label{subsec:app-error}
As a specific feature of reinforcement learning, policy optimization in particular, the probability distribution of the underlying system changes over time as a function of the control policy. Consequently, the function approximator (i.e., the actor network in our case) needs to adapt to this distributional shift throughout the policy optimization steps. In this subsection, we analyze the function approximation error, which sheds light on how neural networks in the NTK regime address the distributional shift challenge.  

Now we focus on the approximation error in Lemma \ref{lemma:ly-drift}:
\begin{equation}
    \epsilon_{bias}^{\pi_t} = \bE_{s\sim d_\mu^*}\Big[\sum_{a\in\cA}\big(\pi_t(a|s)-\pi^*(a|s)\big)\big(\nabla^\top f_0(s,a)u_t - Q_\lambda^{\pi_t}(s,a)\big)\Big].
\end{equation}
Note that $\epsilon_{bias}^{\pi_t}$ can be equally expressed as follows:
\begin{multline*}
    \epsilon_{bias}^{\pi_t} = \bE_{s\sim d_\mu^*}\Big[\sum_{a\in\cA}\big(\pi_t(a|s)-\pi^*(a|s)\big)\big(\nabla^\top \log\pi_t(s,a)u_t - \Xi_\lambda^{\pi_t}(s,a)\big)\Big] \\+\bE_{s\sim d_\mu^{\pi^*}}\Big[\sum_{a\in\cA}\big(\pi_t(a|s)-\pi^*(a|s)\big)\Big(\big[\nabla f_0(s,a)-\nabla f_t(s,a)\big]^\top u_t\Big)\Big],
\end{multline*}
where $\Xi_\lambda^\pi$ is the soft advantage function. The above identity provides intuition about the choice of sample-based gradient update $u_t$ in Algorithm \ref{alg:q-npg}, which we will investigate in detail later.

Let $$L_0(u,\theta) = \bE[(\nabla^\top f_0(s,a)u-Q_\lambda^\pit(s,a))^2].$$ In the following, we answer the following question: \textit{given the perfect knowledge of the soft Q-function $Q_\lambda^\pit$, what is the minimum approximation error $\min_{u}L_0(u,{\theta(t)})$?}

\begin{proposition}[Approximation Error]
Under symmetric initialization of the actor network, we have the following results:

    \textbullet ~\textbf{Pointwise approximation error:} For any $\theta\in\Theta$ and $Q_\lambda^\pit\in\mathcal{F}_{\bar{\nu}}$,
    \begin{equation}
        \bE\left[\min_{u\in\bR^{m\times d}}L_0(u,\theta) \right]\leq \frac{4{\bar{\nu}}^2}{m},
        \label{eqn:pointwise-app-error}
    \end{equation}
    where the expectation is over the random initialization of the actor network.
    
    \textbullet ~\textbf{Uniform approximation error: } Let
    \begin{equation*}
        A_1 = \Big\{\sup_{\substack{s,a\\\theta\in\Theta}}~\min_u|\nabla^\top f_0(s,a)u - Q_\lambda^{\pit}(s,a)| \leq \frac{2{\bar{\nu}}}{\sqrt{m}}\Big((d\log(m))^{\frac{1}{4}} + \sqrt{\log\Big(\frac{K}{\delta}\Big)}\Big)\Big\}.
    \end{equation*}
    Then, under Assumption \ref{assumption:realizability-q}, $A_1$ holds with probability at least $1-\delta$ over the random initialization of the actor network. Furthermore,
    \begin{equation}
        \bE\Big[\mathbbm{1}_{A_0\cap A_1}\sup_\theta\min_u L_0(u,\theta)\Big] \leq \frac{4{\bar{\nu}}^2}{m}\Big((d\log(m))^{\frac{1}{4}} + \sqrt{\log\Big(\frac{K}{\delta}\Big)}\Big)^2.
    \end{equation}
    \label{prop:app-error}
\end{proposition}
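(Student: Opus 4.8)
The plan rests on the observation that membership in $\mathcal{F}_{{\bar{\nu}}}$ (resp. $\mathcal{F}_{K,{\bar{\nu}},\mathcal{V}}$) is an integral representation against a standard Gaussian, and that the random first-layer weights $\theta_i(0)\sim\mathcal{N}(0,I_d)$ at initialization turn this integral into something the finite-width gradient features can approximate by Monte-Carlo. Concretely, for the ReLU actor the feature is $\nabla_{\theta_i}f_0(s,a)=\frac{c_i}{\sqrt{m}}\mathbbm{1}\{\langle\theta_i(0),(s,a)\rangle\geq 0\}(s,a)$, so if $Q_\lambda^{\pit}(s,a)=\bE_{w_0}[\langle v(w_0),(s,a)\rangle\mathbbm{1}\{\langle w_0,(s,a)\rangle>0\}]$ with $\sup_w\|v(w)\|_2\leq{\bar{\nu}}$, I would choose $u_i=\frac{c_i}{\sqrt{m}}v(\theta_i(0))$, which (using $c_i^2=1$) gives $\nabla^\top f_0(s,a)u=\frac1m\sum_{i=1}^m\mathbbm{1}\{\langle\theta_i(0),(s,a)\rangle\geq 0\}\langle v(\theta_i(0)),(s,a)\rangle$, an unbiased empirical estimate of $Q_\lambda^{\pit}(s,a)$. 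This $u$ is feasible since $\|u_i\|_2\leq{\bar{\nu}}/\sqrt{m}\leq R/\sqrt{m}$.

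For the pointwise bound I would upper-bound $\min_u L_0(u,\theta)$ by $L_0(u,\theta)$ at this specific $u$, take expectation over the initialization, and swap it with $\bE_{s,a}$ by Fubini. For each fixed $(s,a)$ the error $\nabla^\top f_0(s,a)u-Q_\lambda^{\pit}(s,a)$ is the centered average of i.i.d. terms each bounded in magnitude by ${\bar{\nu}}$ (the symmetric initialization makes the two halves of the sum identical, so there are effectively $m/2$ independent draws), whose mean square is the variance of an empirical mean, i.e. $O({\bar{\nu}}^2/m)$; the stated constant $4{\bar{\nu}}^2/m$ follows after this bookkeeping, giving \eqref{eqn:pointwise-app-error}.

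The uniform bound is the crux. Under Assumption \ref{assumption:realizability-q} I would write, for each $\theta\in\Theta$, $Q_\lambda^{\pit}=\sum_{k\in[K]}\alpha_k^{(\theta)}g_k$ with fixed functions $g_k(\cdot)=\bE_{w_0}[\langle v_k(w_0),\cdot\rangle\mathbbm{1}\{\langle w_0,\cdot\rangle>0\}]$ and $\|\alpha^{(\theta)}\|_1\leq 1$. Setting $u^{(k)}_i=\frac{c_i}{\sqrt{m}}v_k(\theta_i(0))$ and $u^{(\theta)}=\sum_k\alpha_k^{(\theta)}u^{(k)}$, the $\ell_1$--$\ell_\infty$ (Hölder) inequality gives $\sup_{s,a}|\nabla^\top f_0(s,a)u^{(\theta)}-Q_\lambda^{\pit}(s,a)|\leq\max_{k\in[K]}\sup_{s,a}|\nabla^\top f_0(s,a)u^{(k)}-g_k(s,a)|$ uniformly in $\theta$. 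This is exactly where the structured class $\mathcal{F}_{K,{\bar{\nu}},\mathcal{V}}$ (rather than $\mathcal{F}_{{\bar{\nu}}}$) is needed: it decouples the $\theta$-dependence into $K$ fixed targets, reducing the problem to proving, for each fixed $k$, a concentration of $\frac1m\sum_i\mathbbm{1}\{\langle\theta_i(0),x\rangle\geq 0\}\langle v_k(\theta_i(0)),x\rangle$ around $g_k(x)$ that is uniform over the unit ball $\{x:\|x\|_2\leq 1\}$, followed by a union bound over $k\in[K]$ (which produces the $\sqrt{\log(K/\delta)}$ term and defines $A_1$).

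The main obstacle is precisely this per-$k$ uniform concentration: the ReLU indicator $\mathbbm{1}\{\langle\theta_i(0),x\rangle\geq 0\}$ is discontinuous in $x$, so a plain Lipschitz-plus-net argument fails. The plan is to place an $\varepsilon$-net over the unit ball, where Hoeffding together with a union bound over the $\varepsilon^{-O(d)}$ net points controls the smooth part by $O({\bar{\nu}}\sqrt{(d\log(1/\varepsilon)+\log(1/\delta))/m})$, and to control the indicator flips between a point and its net representative via Gaussian anti-concentration, in the spirit of Lemma \ref{lemma:deviation} and \cite{satpathi2020role}; balancing the net resolution $\varepsilon$ against the discontinuity contribution is what produces the $(d\log m)^{1/4}$ scaling, extending the ReLU analysis of \cite{ji2019neural}. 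Finally, the expected uniform bound is immediate: on $A_1$ the deterministic bound $\sup_\theta\min_u L_0(u,\theta)\leq\sup_\theta\sup_{s,a}|\nabla^\top f_0(s,a)u^{(\theta)}-Q_\lambda^{\pit}(s,a)|^2\leq\frac{4{\bar{\nu}}^2}{m}((d\log m)^{1/4}+\sqrt{\log(K/\delta)})^2$ holds, so multiplying by $\mathbbm{1}_{A_0\cap A_1}\leq 1$ and taking expectation (with $\bP(A_0\cap A_1)\leq 1$) gives the claim.
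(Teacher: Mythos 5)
Your overall architecture coincides with the paper's: the same Monte--Carlo witness $u_i=\frac{c_i}{\sqrt m}v(\theta_i(0))$ (so that $\nabla^\top f_0(s,a)u=\frac1m\sum_i Y_i^\theta(s,a)$ is an unbiased average), the same variance bookkeeping using the $m/2$ independent pairs created by the symmetric initialization for the pointwise bound \eqref{eqn:pointwise-app-error}, and, for the uniform bound, the same reduction via Assumption \ref{assumption:realizability-q}: write $v_\theta=\sum_k\alpha_k^\theta v_k$, apply H\"older with $\|\alpha^\theta\|_1\le1$ to decouple the $\theta$-dependence into $K$ fixed targets, and union bound over $k\in[K]$ to produce the $\sqrt{\log(K/\delta)}$ term. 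The pointwise part and the reduction are correct and match the paper.

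The gap is in the one step you left as a plan: the per-$k$ concentration of $\frac1m\sum_{i=1}^m Z_i^k(x)$ around its mean, uniformly over $\{x:\|x\|_2\le1\}$, at rate $\frac{\bar{\nu}}{\sqrt m}\big((d\log m)^{1/4}+\sqrt{\log(1/\delta)}\big)$. Your $\varepsilon$-net + Hoeffding + anti-concentration route cannot deliver this rate: a union bound over an $\varepsilon$-net of the unit ball in $\bR^d$ has $\varepsilon^{-\Theta(d)}$ points, so the fluctuation at the net points alone already costs $\bar{\nu}\sqrt{(d\log(1/\varepsilon)+\log(1/\delta))/m}$, and the indicator-flip term controlled by anti-concentration (of order $\bar{\nu}\varepsilon\sqrt d$) only adds to it. No choice of $\varepsilon$ yields dimension dependence better than $\sqrt d$; balancing gives $\bar{\nu}\sqrt{d\log m/m}$, so your claim that balancing ``produces the $(d\log m)^{1/4}$ scaling'' is incorrect, and the event $A_1$ as stated (and hence $\rho_1$ and the width bounds downstream) would not follow. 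The paper obtains the $1/4$ exponent by a structurally different argument (Lemma \ref{lemma:unif-bound}, proved in the appendix): it bounds the \emph{second moment of the supremum} rather than the supremum itself. Expanding $\sup_x\big|\frac1m\sum_i(Z_i^k(x)-\bE Z_i^k(x))\big|^2$, the diagonal terms contribute $O(\bar{\nu}^2/m)$, while the off-diagonal cross terms form a centered pairwise process which, after symmetrization with Rademacher variables, is controlled by Massart's finite-class lemma over the finitely many sign patterns realizable by the $m$ half-spaces (a growth-function count, $|B|\le m^{2d}$), giving $O(\bar{\nu}^2\sqrt{d\log m}/m)$; Jensen's inequality then converts this into the first-moment rate $\bar{\nu}(d\log m)^{1/4}/\sqrt m$. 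The $1/4$ exponent is literally the square root of the $\sqrt{d\log m}$ in the second-moment bound, a structure that a first-moment covering argument cannot reproduce. Replacing your net argument with this symmetrization/Massart/VC argument repairs the proof; otherwise you only prove a weaker version of the proposition with $(d\log m)^{1/2}$ in place of $(d\log m)^{1/4}$.
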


\begin{proof}
For a given policy parameter $\theta\in\Theta$, let the transportation mapping of $Q_\lambda^\pit$ be $v_\theta$ and let $$Y_i^\theta(s,a) = v_\theta^\top(\theta_i(0))(s,a)\cdot\mathbbm{1}\{\theta_i^\top(0)(s,a)\geq 0\},~i\in[m].$$ Note that $\bE[Y_i^\theta(s,a)] = Q_\lambda^\pit(s,a)$ for any $(s,a)\in\cS\times\cA$. Also, let $$u_\theta^* = \Big[\frac{1}{\sqrt{m}}c_iv_\theta(\theta_i(0))\Big]_{i\in[m]}.$$ 
Since $u_\theta^*\in\mathcal{B}_{m,{\bar{\nu}}}^d(0)$ for all $\theta\in\Theta$, projected risk minimization within $\mathcal{B}_{m,R}^d(0)$ for $R\geq {\bar{\nu}}$ suffices for optimality. We have
$$\nabla^\top f_0(s,a) u_\theta^* = \frac{1}{m}\sum_{i=1}^mY_i^\theta(s,a),$$ and 
\begin{equation}\min_u L_0(u,\theta) \leq \bE_{s\sim d_\mu^\pit,a\sim\pit(\cdot|s)}\Big[\Big(\frac{1}{m}\sum_{i=1}^mY_i^\theta(s,a) - \bE[Y_1^\theta(s,a)]\Big)^2\Big].
\label{eqn:approx-error-bound}
\end{equation}

\textbf{1. Pointwise approximation error:} First we consider a given fixed $\theta\in\Theta$. Taking the expectation in \eqref{eqn:approx-error-bound} and using Fubini's theorem, 
    \begin{align}
       \nonumber \bE[\min_u L_0(u,\theta)] &\leq \bE_{s,a}\bE\Big[\Big(\frac{1}{m}\sum_{i=1}^mY_i^\theta(s,a) - \bE[Y_1^\theta(s,a)]\Big)^2\Big],\\
        \label{eqn:f-app-err-a} &= 2\bE_{s,a}\Big[\frac{4}{m^2}\sum_{i=1}^{m/2}Var(Y_i^\theta(s,a)) + \frac{4}{m^2}\sum_{\substack{i,j=1\\i\neq j}}^{m/2}Cov(Y_i^\theta(s,a),Y_j^\theta(s,a))\Big],\\
        \label{eqn:f-app-err-b} & = 4\bE_{s,a}\Big[\frac{Var(Y_1^\theta(s,a))}{m^2}\Big],\\
        \label{eqn:f-app-err-c} & \leq \frac{4}{m^2}\bE_{s,a}\bE[(Y_1^\theta(s,a))^2],
    \end{align}
    where the identity \eqref{eqn:f-app-err-a} is due to the symmetric initialization, \eqref{eqn:f-app-err-b} holds because $\{Y_i^\theta(s,a):i=1,2,\ldots,m/2\}$ is independent (since $\{\theta_i(0):i\in[m/2]\}$ is independent). By Cauchy-Schwarz inequality and the fact that $v_\theta\in\mathcal{H}_{{\bar{\nu}}}$, we have: $$|Y_i^\theta(s,a)| \leq \|v_\theta(\theta_i(0))\|_2 \leq {\bar{\nu}}.$$ Hence, using this in \eqref{eqn:f-app-err-c}, we obtain:
    \begin{equation}
        \bE\min_u L_0(u,\theta) \leq \frac{4{\bar{\nu}}^2}{m^2}.
    \end{equation}
    
    \textbf{2. Uniform approximation error:} For any $\theta\in\Theta$, since $Q_\lambda^\pit\in\mathcal{F}_{K,{\bar{\nu}},\mathcal{V}}$ there exists $\alpha^\theta = (\alpha_1^\theta,\alpha_2^\theta,\ldots,\alpha_K^\theta)\in\bR^K$ such that $\|\alpha^\theta\|_1\leq 1$ and $v_\theta = \sum_k \alpha_k^\theta v_k$. We consider the following error:
    \begin{equation}
        R_m(\Theta) = \sup_{(s,a)\in\cS\times\cA}~\sup_{\theta\in\Theta}~\Big|\frac{1}{m}\sum_{i=1}^m Y_i^\theta(s,a) - \bE[Y_1^\theta(s,a)] \Big|.
    \end{equation}
    Then, we have the following identity from the definition of $v_\theta$:
    \begin{equation}
        R_m(\Theta) = \sup_{s,a}~\sup_\theta~\Big|\sum_{k=1}^K\alpha_k^\theta\cdot\Big(\frac{1}{m}\sum_{i=1}^mZ_i^k(s,a)-\bE[Z_1^k(s,a)]\Big)\Big|,
    \end{equation}
    where $Z_i^k(s,a) = v_k^\top(\theta_i(0))(s,a)\mathbbm{1}\{\theta_i^\top(0)(s,a) \geq 0\}$. Then, by triangle inequality,
    \begin{align}
        \nonumber R_m(\Theta) &\leq \sup_{s,a}~\sup_{\theta}~\max_{k\in[K]}~\Big|\frac{1}{m}\sum_{i=1}^mZ_i^k(s,a)-\bE[Z_1^k(s,a)]\Big|\cdot \|\alpha^\theta\|_1,\\
        \label{eqn:unif-bound-a} & \leq \max_{k\in[K]}~\sup_{s,a}~\Big|\frac{1}{m}\sum_{i=1}^mZ_i^k(s,a)-\bE[Z_1^k(s,a)]\Big|.
    \end{align}
    By using union bound and \eqref{eqn:unif-bound-a}, for any $z > 0$, we have the following:
    \begin{equation}
        \bP(R_m(\Theta) > z) \leq \sum_{k=1}^K\bP\Big(\sup_{s,a}~\Big|\frac{1}{m}\sum_{i=1}^mZ_i^k(s,a)-\bE[Z_1^k(s,a)]\Big| > z\Big).
        \label{eqn:unif-bound-b}
    \end{equation}
    We utilize the following to obtain a uniform bound for $|\frac{1}{m}\sum_{i=1}^mZ_i^k(s,a)-\bE[Z_1^k(s,a)]|$ over all $(s,a)\in\cS\times\cA$.
    
    \begin{lemma}
    For any $k\in[K]$, for any $\delta\in(0,1)$, the following holds:
    \begin{equation}
        \sup_{s,a}~\Big|\frac{1}{m}\sum_{i=1}^mZ_i^k(s,a)-\bE[Z_1^k(s,a)]\Big| \leq \frac{4{\bar{\nu}}(d\log m)^{1/4}}{\sqrt{m}} + \frac{4{\bar{\nu}}\sqrt{\log(1/\delta)}}{\sqrt{m}},
    \end{equation}
    with probability at least $1-\delta$.
    \label{lemma:unif-bound}
    \end{lemma}
    
    Hence, using Lemma \ref{lemma:unif-bound} and \eqref{eqn:unif-bound-b} with $z = \frac{4{\bar{\nu}}(d\log m)^{1/4}}{\sqrt{m}} + \frac{4{\bar{\nu}}\sqrt{\log(K/\delta)}}{\sqrt{m}}$, we conclude that $$R_m(\Theta) \leq \frac{4{\bar{\nu}}(d\log m)^{1/4}}{\sqrt{m}} + \frac{4{\bar{\nu}}\sqrt{\log(K/\delta)}}{\sqrt{m}},$$ with probability at least $1-\delta$. The expectation result follows from this inequality.
\end{proof}

Now, we have the following result for the approximation error under $\pi_t$.
    
    \begin{corollary}
    Under Assumption \ref{assumption:realizability-q}, we have:
    \begin{equation*}
        \bE[\mathbbm{1}_{A_0\cap A_1}\min_u~L_0(u,{\theta(t)})] \leq \frac{16\bar{\nu}^2}{m}\Big((d\log(m))^{\frac{1}{4}} + \sqrt{\log\Big(\frac{K}{\delta}\Big)}\Big)^2,
    \end{equation*}
    where the event $A_1$, defined in Proposition \ref{prop:app-error}, holds with probability at least $1-\delta$ over the random initialization of the actor. 
    \label{corollary:unif-app-error}
    \end{corollary}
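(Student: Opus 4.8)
The plan is to derive this corollary as the specialization of the uniform approximation bound in Proposition~\ref{prop:app-error} to the \emph{random} policy iterate $\theta(t)$. The whole difficulty is that $\theta(t)$ is a data-dependent random variable — it is produced by the entire sequence of SGD updates and is correlated with the random initialization — so the pointwise bound \eqref{eqn:pointwise-app-error}, which holds only in expectation for a \emph{fixed} $\theta$, cannot be invoked directly. The device that resolves this is the observation that the iterate is nonetheless confined to the deterministic ball $\Theta$: by Proposition~\ref{prop:ent-reg} (persistence of excitation), $\max_{i\in[m]}\|\theta_i(t)-\theta_i(0)\|_2 \leq R\varkappa_t/(\lambda\sqrt{m}) \leq R/(\lambda\sqrt{m})$ almost surely (since $\varkappa_t\leq 1$), which is exactly the defining condition of $\Theta$, so $\theta(t)\in\Theta$ for every $t$.

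First I would record $\theta(t)\in\Theta$ and use it to pass from the random point to the deterministic set. For this iterate I would use the transportation-map weight $u_{\theta(t)}^* = \big[\tfrac{1}{\sqrt{m}}c_i v_{\theta(t)}(\theta_i(0))\big]_{i\in[m]}$ constructed in the proof of Proposition~\ref{prop:app-error}; under Assumption~\ref{assumption:realizability-q} the realizability $Q_\lambda^{\pi_t}\in\mathcal{F}_{K,{\bar{\nu}},\mathcal{V}}$ places $u_{\theta(t)}^*$ inside $\mathcal{B}_{m,{\bar{\nu}}}^d(0)\subseteq\mathcal{B}_{m,R}^d(0)$ (using $R>{\bar{\nu}}$), so this weight is feasible and
\[
    \min_u L_0(u,\theta(t)) \leq L_0(u_{\theta(t)}^*,\theta(t)) = \bE_{s\sim d_\mu^{\pi_t},a\sim\pi_t(\cdot|s)}\Big[\big(\nabla^\top f_0(s,a)u_{\theta(t)}^* - Q_\lambda^{\pi_t}(s,a)\big)^2\Big].
\]
On the event $A_1$ the pointwise residual $|\nabla^\top f_0(s,a)u_{\theta}^* - Q_\lambda^{\pi_\theta}(s,a)| = |R_m(\Theta)|$ is controlled \emph{uniformly} over $(s,a)$ and over $\theta\in\Theta$ — hence for the random $\theta(t)$ — by $\tfrac{4{\bar{\nu}}}{\sqrt{m}}\big((d\log m)^{1/4}+\sqrt{\log(K/\delta)}\big)$, which is exactly what Lemma~\ref{lemma:unif-bound} combined with the union bound \eqref{eqn:unif-bound-b} yields. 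Squaring this uniform bound dominates the integrand above for every $(s,a)$, so the expectation over $d_\mu^{\pi_t}\otimes\pi_t$ inherits the same bound, giving $\mathbbm{1}_{A_0\cap A_1}\min_u L_0(u,\theta(t)) \leq \tfrac{16{\bar{\nu}}^2}{m}\big((d\log m)^{1/4}+\sqrt{\log(K/\delta)}\big)^2$ deterministically; taking expectations finishes the proof.

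The only genuine obstacle is the first step — transferring a concentration statement to a data-dependent iterate — and it is precisely here that the two structural ingredients of the analysis are used together: the confinement $\theta(t)\in\Theta$ coming from entropy regularization, averaging and projection, and the fact that $A_1$ is a \emph{uniform} event over all of $\Theta$. The latter is what forces the restricted, structured class $\mathcal{F}_{K,{\bar{\nu}},\mathcal{V}}$ rather than $\mathcal{F}_{{\bar{\nu}}}$, since only then is the $K$-term union bound in \eqref{eqn:unif-bound-b} finite; this is the reinforcement-learning-specific distributional-shift phenomenon, where both the iterate and the sampling distribution $d_\mu^{\pi_t}$ move with $t$ and the approximation guarantee must be robust to that movement. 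Everything after the uniform transfer is a routine pointwise domination and a single application of monotonicity of expectation.
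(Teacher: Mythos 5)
Your proposal is correct and follows essentially the same route the paper intends: the corollary is Proposition \ref{prop:app-error}'s uniform bound specialized to the iterate, using the fact (from Proposition \ref{prop:ent-reg}) that projection and averaging keep $\theta(t)\in\Theta$ almost surely, so the uniform-over-$\Theta$ event $A_1$ covers the data-dependent $\theta(t)$ and the resulting pointwise bound can simply be squared and integrated over $d_\mu^{\pi_t}\otimes\pi_t$. Your constant $16\bar{\nu}^2/m$, obtained by squaring the $\tfrac{4\bar{\nu}}{\sqrt{m}}\big((d\log m)^{1/4}+\sqrt{\log(K/\delta)}\big)$ bound from Lemma \ref{lemma:unif-bound} plus the union bound, matches the corollary as stated (and reconciles the minor constant mismatch between the definition of $A_1$ and the bound asserted inside Proposition \ref{prop:app-error}).
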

    
    \begin{remark}[\textbf{Why do we need a uniform approximation error bound?}]\normalfont
    
    Note that for a given fixed policy $\pit,\theta\in\Theta$, Proposition \ref{prop:app-error} provides a sharp pointwise approximation error bound as long as $Q_\lambda^\pit\in\mathcal{F}_{\bar{\nu}}$ with a corresponding transportation map $v_\theta$. In order for this result to hold, $v_\theta^\top(\theta_i(0))(s,a)\bI\{\theta_i^\top(0)(s,a)\geq 0\}$ is required to be iid for $i\in[m/2]$, which is the main idea behind the random initialization schemes for the NTK analysis. On the other hand, in policy optimization, $\theta(t)$ depends on the initialization $\theta(0)$, therefore $v_{\theta(t)}^\top(\theta_i(0))(s,a)\bI\{\theta_i^\top(0)(s,a)\geq 0\}$ is \emph{not} independent -- hence, $Cov(Y_i^\theta(s,a),Y_j^\theta(s,a))\neq 0$ for $i\neq j$ in \eqref{eqn:f-app-err-a}. Furthermore, the distribution of $(s,a)$ at time $t>0$ also depends on $\pi_0$. Therefore, the pointwise approximation error cannot be used to provide an approximation bound for $Q_\lambda^{\pi_t}$ under the entropy-regularized NAC. In the existing works, this important issue regarding the temporal correlation and its impact on the NTK analysis was not addressed. In this work, we utilize the uniform approximation error bound provided in Proposition \ref{prop:app-error} to address this issue.
    \label{remark:app-error}
    \end{remark}

    In the absence of $Q_\lambda^\pit$, the critic yields a noisy estimate $\overline{Q}_\lambda^\pit$. Additionally, since $d_\mu^\pit$ is not known a priori, samples $\{(s_n,a_n)\sim d_\mu^\pit\circ \pit(\cdot|s):n\geq 0\}$ are used to obtain the update $u_t$. These two factors are the sources of error in the natural actor-critic method: $\min_uL_0(u,{\theta(t)}) \leq L_0(u_t,{\theta(t)})$. In the following, we quantify this error and show that:
    \begin{enumerate}
        \item Increasing number of SGD iterations, $N$,
        \item Increasing representation power of the actor network in terms of the width $m$,
        \item Low mean-squared Bellman error in the critic (by large $m',T'$),
    \end{enumerate}
    lead to vanishing error.
    
    First, we study the error introduced by using SGD for solving:
    \begin{equation}
        \widehat{L}_t(u,{\theta(t)}) = \bE_{s\sim d_\mu^{\pi_t},a\sim\pi_t(\cdot|s)}\Big[\Big(\nabla_\theta^\top \log\pi_t(s,a)u-\widehat{\Xi}_\lambda^{\pi_t}(s,a)\Big)^2\Big].
        \label{eqn:aux-app-error}
    \end{equation}
    
    \begin{proposition}[Theorem 14.8 in \cite{shalev2014understanding}] Algorithm \ref{alg:q-npg} (Lines \ref{eqn:sgd-first}-\ref{eqn:sgd-last}) with step-size $\alpha_A = R /\sqrt{ q_{max} N}$ yields the following result:
    \begin{equation}
        \bE[\widehat{L}_t(u_t,{\theta(t)})] - \min_u \widehat{L}_t(u,{\theta(t)}) \leq \frac{R q_{max}}{\sqrt{N}},
    \end{equation}
    for any $R \geq {\bar{\nu}}$ where the expectation is over the random samples $\{(s_n,a_n):n\in[N]\}$.
    \label{prop:sgd}
    \end{proposition}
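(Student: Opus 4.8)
The plan is to recognize this proposition as a direct instance of the standard convergence guarantee for projected stochastic gradient descent on a convex, Lipschitz, bounded-domain problem (Theorem 14.8 in \cite{shalev2014understanding}), and then to verify its three hypotheses for the particular objective $\widehat{L}_t(\cdot,\theta(t))$ of \eqref{eqn:aux-app-error}. First I would observe that $\widehat{L}_t(u,\theta(t))$ is convex in $u$: for each fixed $(s,a)$ the integrand $\big(\nabla_\theta^\top\log\pi_t(a|s)u-\widehat{\Xi}_\lambda^{\pi_t}(s,a)\big)^2$ is a squared affine function of $u$, hence convex with Hessian $2\nabla\log\pi_t\nabla^\top\log\pi_t\succeq 0$, and taking the expectation preserves convexity. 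Second, since $s_n\sim d_\mu^{\pi_t}$ and $a_n\sim\pi_t(\cdot|s_n)$ are drawn from exactly the distribution defining $\widehat{L}_t$, the direction $g_n=\big(\nabla_\theta^\top\log\pi_t(a_n|s_n)u_n^{(t)}-\widehat{\Xi}_\lambda^{\pi_t}(s_n,a_n)\big)\nabla_\theta\log\pi_t(a_n|s_n)$ used in \eqref{eqn:sgd-update} is an unbiased estimator of (half of) $\nabla_u\widehat{L}_t(u_n^{(t)},\theta(t))$, the multiplicative constant being absorbed into the step-size.

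The crux is the third hypothesis, a uniform bound on the stochastic gradient norm $\|g_n\|_2\le q_{max}$, which I would obtain as the product of the residual magnitude and the score norm. For the score, $\nabla_\theta\log\pi_t(a|s)=\nabla f_t(s,a)-\sum_{a'}\pi_t(a'|s)\nabla f_t(s,a')$, and for a ReLU network one checks $\|\nabla_\theta f_t(s,a)\|_2\le\|(s,a)\|_2\le 1$ directly from $\nabla_{\theta_i}f_t=\frac{1}{\sqrt{m}}c_i\sigma'(\langle\theta_i,(s,a)\rangle)(s,a)$, so $\|\nabla_\theta\log\pi_t(a|s)\|_2\le 2$. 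For the residual, the projection step $u_{n+1}^{(t)}=\mathcal{P}_{\mathcal{B}_{m,R}^d(0)}(\cdot)$ forces $u_n^{(t)}\in\mathcal{B}_{m,R}^d(0)$, whence $\|u_n^{(t)}\|_2\le R$ and $|\nabla^\top\log\pi_t(a|s)u_n^{(t)}|\le 2R$; the remaining term $|\widehat{\Xi}_\lambda^{\pi_t}(s,a)|$ is controlled by tracing it through $\overline{Q}_\lambda^{\pi_t}$ back to the value-function bound \eqref{eqn:val-bounds}, of order $\frac{r_{max}+\lambda\log|\cA|}{1-\gamma}$. Collecting these constants yields exactly $\|g_n\|_2\le q_{max}=4\big(R+\frac{r_{max}}{1-\gamma}+\frac{\lambda\log|\cA|}{1-\gamma}\big)$.

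Finally I would invoke the SGD theorem with domain radius $B=R$ (the set $\mathcal{B}_{m,R}^d(0)$ has $\ell_2$-radius $R$ about the initialization $u_0^{(t)}=0$) and gradient bound $\rho=q_{max}$. Because the averaged iterate $u_t=\frac{1}{N}\sum_{n=1}^N u_n^{(t)}$ is precisely the Polyak--Ruppert average appearing in the theorem, and convexity lets Jensen's inequality pass the guarantee from the iterates to the averaged point, the prescribed step-size produces the claimed rate $\bE[\widehat{L}_t(u_t,\theta(t))]-\min_u\widehat{L}_t(u,\theta(t))\le \frac{R q_{max}}{\sqrt{N}}$. The hypothesis $R\ge{\bar{\nu}}$ enters here to ensure that the relevant comparison point in the $\min_u$, namely the near-optimal $u_\theta^\star\in\mathcal{B}_{m,{\bar{\nu}}}^d(0)$ identified in Proposition \ref{prop:app-error}, is feasible, so that the constrained minimizer over the ball is no worse than the intended target. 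The only genuinely delicate part is the gradient-norm bound, since it requires chasing the magnitude of the critic's soft-advantage estimate $\widehat{\Xi}_\lambda^{\pi_t}$ through its definition back to \eqref{eqn:val-bounds}; the remaining steps are bookkeeping layered on top of the off-the-shelf SGD guarantee.
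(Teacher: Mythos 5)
Your proposal is correct and takes essentially the same approach as the paper: the paper offers no proof of this proposition at all, simply citing Theorem 14.8 of \cite{shalev2014understanding}, and your write-up supplies precisely the hypothesis-checking (convexity of $\widehat{L}_t$ in $u$, unbiasedness of the sampled gradient, the uniform bound $\|g_n\|_2\le q_{max}$, domain radius $R$ for $\mathcal{B}_{m,R}^d(0)$, and Jensen's inequality for the averaged iterate) that the citation leaves implicit. The only caveat — one shared by the paper itself, so not a gap attributable to you — is the loose constant bookkeeping: the factor of $2$ from using the half-gradient and the fact that the stated step-size $\alpha_A = R/\sqrt{q_{max}N}$ differs in form from the theorem's prescription $B/(\rho\sqrt{N}) = R/(q_{max}\sqrt{N})$ are both glossed over, as is the rigorous tracing of $|\widehat{\Xi}_\lambda^{\pi_t}|$ through the critic network (whose output is bounded by the projection radius, not by \eqref{eqn:val-bounds} directly, since $\overline{Q}_\lambda^{\pi_t}$ is an estimate rather than the true soft Q-function).
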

    
    The following proposition provides an error bound in terms of the statistical error for finding the optimum $u_t$ via SGD as well as TD-learning error in estimating the soft Q-function. Let
    \begin{equation}
        L_t(u,{\theta(t)}) = \bE_{s\sim d_\mu^{\pi_t},a\sim\pi_t(\cdot|s)}\Big[\Big(\nabla_\theta^\top \log\pi_t(s,a)u-{\Xi}_\lambda^{\pi_t}(s,a)\Big)^2\Big].
    \end{equation}
    \begin{proposition}
    Let $A = A_0\cap A_1 \cap A_2$, hence $\bP(A) \geq 1-3\delta$. We have the following inequality:
    \begin{multline}
        \bE[\bI_A L_t(u_t,{\theta(t)})] \leq 8\min_u\Big\{\bI_A L_0(u,{\theta(t)})+\bE\big[\bI_A \big|[\nabla f_0(s,a)-\nabla f_t(s,a)]^\top u\big|^2\big]\Big\} + \frac{2R q_{max}}{\sqrt{N}} \\+ 6\bE[\bI_A|Q_\lambda^{\pi_t}(s,a)-\overline{Q}_\lambda^{\pi_t}(s,a)|^2],
    \end{multline}
    where the expectation is over the samples for critic (TD learning) and actor (SGD) updates. Consequently, we have:
    \begin{multline}
        \bE[\sqrt{L_t(u_t,{\theta(t)})}] \leq 3\sqrt{\bE[\min_{u\in \mathcal{B}_{m,R}^d(0)}L_0(u,{\theta(t)})]} + \frac{2\sqrt{R q_{max}}}{N^{\frac{1}{4}}}+3\rho_0(R/\lambda,m,\delta) \\+3\bE\sqrt{\bE_{s,a}\big[|Q_\lambda^{\pi_t}(s,a)-\overline{Q}_\lambda^{\pi_t}(s,a)|^2\big]},
        \label{eqn:error-fin-a}
    \end{multline}
    under the event $A$.
    \end{proposition}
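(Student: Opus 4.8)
The plan is to treat each of the three losses as a squared $L^2\big(d_\mu^{\pi_t}\otimes\pi_t\big)$-norm of a residual and to move between them with three elementary ``transfer'' estimates, before invoking the SGD guarantee of Proposition \ref{prop:sgd} and the deviation bound of Lemma \ref{lemma:deviation}. Write $\|g\|_t^2 := \bE_{s\sim d_\mu^{\pi_t},\,a\sim\pi_t(\cdot|s)}\big[g(s,a)^2\big]$, so that $\sqrt{L_t(u)}=\|\nabla^\top\log\pi_t\,u-\Xi_\lambda^{\pi_t}\|_t$, $\sqrt{\widehat L_t(u)}=\|\nabla^\top\log\pi_t\,u-\widehat{\Xi}_\lambda^{\pi_t}\|_t$, and $\sqrt{L_0(u)}=\|\nabla^\top f_0\,u-Q_\lambda^{\pi_t}\|_t$. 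The three transfers are: (i) \emph{critic transfer}: since $\widehat{\Xi}_\lambda^{\pi_t}-\Xi_\lambda^{\pi_t}$ is the $\pi_t$-centering of $\overline{Q}_\lambda^{\pi_t}-Q_\lambda^{\pi_t}$ and centering is a contraction in $\|\cdot\|_t$ (it subtracts the conditional mean), $\|\widehat{\Xi}_\lambda^{\pi_t}-\Xi_\lambda^{\pi_t}\|_t\le\|Q_\lambda^{\pi_t}-\overline{Q}_\lambda^{\pi_t}\|_t$; (ii) \emph{advantage-to-$Q$ transfer}: because $\nabla\log\pi_t(a|s)=\nabla f_t(s,a)-\bE_{a'\sim\pi_t}\nabla f_t(s,a')$ and $\Xi_\lambda^{\pi_t}$ is the $\pi_t$-centering of $Q_\lambda^{\pi_t}$, the residual $\nabla^\top\log\pi_t\,u-\Xi_\lambda^{\pi_t}$ is the centering of $\nabla^\top f_t\,u-Q_\lambda^{\pi_t}$, so the same contraction gives $\|\nabla^\top\log\pi_t\,u-\Xi_\lambda^{\pi_t}\|_t\le\|\nabla^\top f_t\,u-Q_\lambda^{\pi_t}\|_t$; and (iii) \emph{feature transfer}: $\nabla^\top f_t\,u-Q_\lambda^{\pi_t}=(\nabla^\top f_0\,u-Q_\lambda^{\pi_t})+[\nabla f_t-\nabla f_0]^\top u$, which swaps the current features for the fixed NTRF features $\nabla f_0$ at the cost of the deviation term.

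For the first (squared) inequality I would repeatedly apply $(a+b)^2\le 2a^2+2b^2$. Transfer (i) gives $L_t(u_t)\le 2\widehat L_t(u_t)+2\|Q_\lambda^{\pi_t}-\overline{Q}_\lambda^{\pi_t}\|_t^2$, and Proposition \ref{prop:sgd} bounds $\bE[\widehat L_t(u_t)]\le\min_u\widehat L_t(u)+Rq_{max}/\sqrt N$ conditionally on the critic output and the initialization. For any admissible $u^*\in\mathcal{B}_{m,R}^d(0)$, applying (i) and then (ii)--(iii) yields $\widehat L_t(u^*)\le 2L_t(u^*)+2\|Q_\lambda^{\pi_t}-\overline{Q}_\lambda^{\pi_t}\|_t^2\le 4L_0(u^*)+4\|[\nabla f_t-\nabla f_0]^\top u^*\|_t^2+2\|Q_\lambda^{\pi_t}-\overline{Q}_\lambda^{\pi_t}\|_t^2$; minimizing over $u^*$, chaining through the SGD bound, and taking $\bE[\mathbbm{1}_A\,\cdot]$ produces the coefficient $8$ on $\min_u\{L_0(u)+\|[\nabla f_0-\nabla f_t]^\top u\|_t^2\}$, the term $2Rq_{max}/\sqrt N$, and the coefficient $4+2=6$ on the critic error, exactly as stated.

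For the second (square-root) inequality I would deliberately \emph{not} take square roots of the first bound: doing so would place the critic error in the form $\sqrt{\bE[\cdots]}$, whereas the statement (and Theorem \ref{thm:mn-td}) controls it as $\bE\sqrt{\bE_{s,a}[\cdots]}$, and $\sqrt{\bE X}\le\bE\sqrt X$ fails. Instead I would rerun the chain using Minkowski's inequality directly in $\|\cdot\|_t$: transfer (i) gives $\sqrt{L_t(u_t)}\le\sqrt{\widehat L_t(u_t)}+\|Q_\lambda^{\pi_t}-\overline{Q}_\lambda^{\pi_t}\|_t$; conditioning on the critic and the initialization and applying Jensen to the SGD bound gives $\bE_{\mathrm{SGD}}\sqrt{\widehat L_t(u_t)}\le\sqrt{\min_u\widehat L_t(u)}+\sqrt{Rq_{max}}/N^{1/4}$; and for the minimizer $u^*=\arg\min_{u\in\mathcal{B}_{m,R}^d(0)}L_0(u)$, which is admissible because Proposition \ref{prop:app-error} places the transportation-map solution in $\mathcal{B}_{m,{\bar{\nu}}}^d(0)\subset\mathcal{B}_{m,R}^d(0)$, transfers (i)--(iii) give $\sqrt{\min_u\widehat L_t(u)}\le\sqrt{\min_u L_0(u)}+\|[\nabla f_t-\nabla f_0]^\top u^*\|_t+\|Q_\lambda^{\pi_t}-\overline{Q}_\lambda^{\pi_t}\|_t$. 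Under $A_0$, Lemma \ref{lemma:deviation} (estimate \eqref{eqn:deviation-2}) with $\theta=\theta(t)$ and $\theta'=u^*$ both inside the radius-$R/\lambda$ ball bounds the deviation uniformly by $\rho_0(R/\lambda,m,\delta)$, exactly as in the proofs of Proposition \ref{prop:ent-reg} and Lemma \ref{lemma:ly-drift}. Taking $\bE[\mathbbm{1}_A\,\cdot]$, using $\mathbbm{1}_A\sqrt{L_t}=\sqrt{\mathbbm{1}_A L_t}$ and Jensen to pull out $\sqrt{\bE[\min_u L_0]}$, yields the four terms with the stated (non-tight) constants $3,2,3,3$; the Minkowski route in fact gives the smaller constants $1,1,3,2$, so the claim holds with room to spare.

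The main obstacle is the careful bookkeeping of the three independent sources of randomness — the actor initialization (on which $A_0,A_1$ depend), the critic initialization and TD samples (on which $A_2$ and $\overline{Q}_\lambda^{\pi_t}$ depend), and the SGD samples producing $u_t$ — together with the indicator $\mathbbm{1}_A$, $A=A_0\cap A_1\cap A_2$. Proposition \ref{prop:sgd} must be applied \emph{conditionally}, with $\widehat{\Xi}_\lambda^{\pi_t}$ (hence $\widehat L_t$) frozen, and $\mathbbm{1}_A$ must be carried inside before the nested expectations are taken; it is precisely the combination of conditional Jensen and the identity $\mathbbm{1}_A\sqrt{L_t}=\sqrt{\mathbbm{1}_A L_t}$ that keeps the critic error in the $\bE\sqrt{\bE_{s,a}[|Q_\lambda^{\pi_t}-\overline{Q}_\lambda^{\pi_t}|^2]}$ form returned by Theorem \ref{thm:mn-td}, rather than the incompatible $\sqrt{\bE[\cdots]}$ form. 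A secondary point is checking admissibility of the minimizer in $\mathcal{B}_{m,R}^d(0)$, which relies on $R>{\bar{\nu}}$, so that transfer (iii) and Lemma \ref{lemma:deviation} apply with radius $R/\lambda$.
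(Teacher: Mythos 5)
Your proof of the first (squared) inequality coincides with the paper's: the same chain of $(x+y)^2\le 2x^2+2y^2$ steps --- the critic transfer between $\widehat L_t$ and $L_t$, the SGD guarantee of Proposition \ref{prop:sgd}, the variance-contraction step relating $L_t(u)$ to $\bE[(\nabla^\top f_t(s,a)u-Q_\lambda^{\pi_t}(s,a))^2]$, and the feature swap $\nabla f_t\to\nabla f_0$ --- with identical bookkeeping ($8=2\cdot2\cdot2$ on the min, $6=2\cdot2+2$ on the critic error, $2Rq_{max}/\sqrt N$ from SGD). For the second (square-root) inequality you genuinely depart from the paper, and your route is the more careful one. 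The paper derives it by taking square roots of \eqref{eqn:error-fin-b} via $\sqrt{x+y+z}\le\sqrt x+\sqrt y+\sqrt z$ and invoking Theorem \ref{thm:mn-td}; read literally, this leaves the critic error in the form $\sqrt{\bE[\cdots]}$ with the critic randomness inside the square root, and converting that to the $\bE\sqrt{\bE_{s,a}[\cdots]}$ form that Theorem \ref{thm:mn-td} actually controls would require Jensen in the invalid direction --- the step is only legitimate if \eqref{eqn:error-fin-b} is first applied conditionally on the critic output, a point the paper leaves implicit. Your rerun of the chain with Minkowski's inequality in $L^2(d_\mu^{\pi_t}\otimes\pi_t)$, with Proposition \ref{prop:sgd} applied conditionally on $\widehat{\Xi}_\lambda^{\pi_t}$ and Jensen applied only to the SGD average, keeps the critic error pathwise as $\|Q_\lambda^{\pi_t}-\overline{Q}_\lambda^{\pi_t}\|_t$ throughout, so it lands directly in the required nested form and in fact yields constants smaller than the stated $3,2,3,3$. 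The side conditions you flag --- admissibility of the $L_0$-minimizer in $\mathcal{B}_{m,R}^d(0)$ via $R>\bar\nu$, and applying \eqref{eqn:deviation-2} with radius $R/\lambda$ to both $\theta(t)-\theta(0)$ and $u^*$ --- are exactly the ones the paper relies on as well. Your proof is correct, and its second half supplies rigor where the paper's own one-line derivation is loose.
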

    
    \begin{proof}
    We extensively use the inequality $(x+y) \leq 2x^2 + 2y^2$ for $x,y\in\bR$. First, note that
    \begin{equation}
        \widehat{L}_t(u,{\theta(t)}) \leq 2L_t(u,{\theta(t)}) + 2\bE_{s,a\sim d_t}\big[|Q_\lambda^{\pi_t}(s,a)-\overline{Q}_\lambda^{\pi_t}(s,a)|^2\big],
        \label{eqn:aux-error-a}
    \end{equation}
    for any $u\in\mathcal{B}_{m,R}^d(0)$. Hence, under $A=A_0\cap A_1 \cap A_2$, we have:
    \begin{align}
        \bE[L_t(u_t,\theta(t))] &\leq 2\bE[\widehat{L}_t(u_t, \theta(t))] + 2\bE\big[|Q_\lambda^{\pi_t}(s,a)-\overline{Q}_\lambda^{\pi_t}(s,a)|^2\big],\\
        &\leq 2\min_{u\in\mathcal{B}_{m,R}^d(0)}\widehat{L}_t(u,\theta(t)) + 2\bE_{s,a}\big[|Q_\lambda^{\pi_t}(s,a)-\overline{Q}_\lambda^{\pi_t}(s,a)|^2\big] + \frac{2{Rq_{max}}}{N^{1/2}},\\
        &\leq 4\min_{u\in\mathcal{B}_{m,R}^d(0)}L_t(u,\theta(t)) + 6\bE_{s,a}\big[|Q_\lambda^{\pi_t}(s,a)-\overline{Q}_\lambda^{\pi_t}(s,a)|^2\big] + \frac{2{Rq_{max}}}{N^{1/2}},
        \end{align}
    where the second line follows from Prop. \ref{prop:sgd} and the last line follows from \eqref{eqn:aux-error-a}. Consequently, we have:
    \begin{multline}
        \bE[L_t(u_t,\theta(t))] \leq 8\min_{u\in\mathcal{B}_{m,R}^d(0)}\Big\{\bE[(\nabla^\top f_0(s,a)u-Q_\lambda^{\pi_t}(s,a))^2]+\bE[|(\nabla f_t(s,a)-\nabla f_0(s,a))^\top u|^2]\Big\} \\+ 6\bE_{s,a}\big[|Q_\lambda^{\pi_t}(s,a)-\overline{Q}_\lambda^{\pi_t}(s,a)|^2\big] + \frac{2{Rq_{max}}}{N^{1/2}},
        \label{eqn:error-fin-b}
    \end{multline}
    where we use $(x+y)^2 \leq 2x^2+2y^2$ and the following inequality:
    \begin{align*}
        \bE[(\nabla^\top \log\pi_t(a|s)u - \Xi_\lambda^{\pi_t}(s,a))^2] &= \bE_{s\sim d_\mu^{\pi_t}}Var(\nabla^\top f_t(s,a)u-Q_\lambda^{\pi_t}(s,a)),\\
        &\leq \bE[(\nabla^\top f_t(s,a)u-Q_\lambda^{\pi_t}(s,a))^2].
    \end{align*}
    Using \eqref{eqn:error-fin-b} and Theorem 2 in \cite{cayci2021sample} together with the inequality $\sqrt{x+y+z}\leq \sqrt{x}+\sqrt{y}+\sqrt{z}$ for $x,y,z > 0$, we obtain \eqref{eqn:error-fin-a}.
    \end{proof}

    Hence, we obtain the following bound on the approximation error $\epsilon_{bias}^{\pi_t}$.
    \begin{corollary}[Approximation Error]
    Under Assumption \ref{assumption:sampling-oracle}-\ref{assumption:dist-mismatch}, we have the following bound on the approximation error:
    \begin{equation*}
        \bE[\bI_A\cdot\epsilon_{bias}^{\pi_t}] \leq M_\infty\Big[ \frac{8{\bar{\nu}}}{\sqrt{m}}\Big((d\log(m))^{\frac{1}{4}} + \sqrt{\log\Big(\frac{K}{\delta}\Big)}\Big) + \frac{2\sqrt{R q_{max}}}{N^{\frac{1}{4}}}+4\varepsilon\Big],
    \end{equation*}
    where $A = A_0\cap A_1\cap A_2$, $m' = \widetilde{O}\Big(\frac{\bar{\nu}^4}{(1-\gamma)^2\varepsilon^2}\Big)$ and $T' = O\Big(\frac{(1+2\bar{\nu})^2\bar{\nu}^2}{\varepsilon^4}\Big)$ and $\bP(A) \geq 1-3\delta$.
    \label{corollary:app-error}
    \end{corollary}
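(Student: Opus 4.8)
The plan is to start from the decomposition of $\epsilon_{bias}^{\pi_t}$ into the two pieces displayed immediately after its definition: the ``soft-advantage'' term $E_1 = \bE_{s\sim d_\mu^{\pi^*}}[\sum_a (\pi_t(a|s)-\pi^*(a|s))(\nabla^\top\log\pi_t(s,a)u_t - \Xi_\lambda^{\pi_t}(s,a))]$ and the ``feature-deviation'' term $E_2 = \bE_{s\sim d_\mu^{\pi^*}}[\sum_a(\pi_t(a|s)-\pi^*(a|s))([\nabla f_0(s,a)-\nabla f_t(s,a)]^\top u_t)]$. The first carries the statistical and approximation error of the actor update and will be controlled by $\sqrt{L_t(u_t,\theta(t))}$, whose on-policy error $L_t$ is precisely the quantity bounded by the preceding proposition; the second is a pure linearization error that Lemma \ref{lemma:deviation} already controls.

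For $E_1$ I would write it as $E_1 = \bE_{s\sim d_\mu^{\pi^*}, a\sim\pi_t}[g_t] - \bE_{s\sim d_\mu^{\pi^*}, a\sim\pi^*}[g_t]$ with $g_t(s,a) = \nabla^\top\log\pi_t(s,a)u_t - \Xi_\lambda^{\pi_t}(s,a)$, and handle each expectation by a change of measure back to the on-policy distribution $d_\mu^{\pi_t}\circ\pi_t$, under which $\bE[g_t^2] = L_t(u_t,\theta(t))$. For the first expectation I import the state likelihood ratio $d_\mu^{\pi^*}(s)/d_\mu^{\pi_t}(s)$ and apply Cauchy--Schwarz together with Assumption \ref{assumption:dist-mismatch}, producing a factor $C_\infty$. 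For the second I import the full ratio $d_\mu^{\pi^*}(s)\pi^*(a|s)/(d_\mu^{\pi_t}(s)\pi_t(a|s))$; here I use the uniform lower bound $\pi_t(a|s)\geq\pi_{min}$ from Proposition \ref{prop:ent-reg} to bound $\pi^*/\pi_t\leq 1/\pi_{min}$, so that Cauchy--Schwarz plus Assumption \ref{assumption:dist-mismatch} yields a factor $C_\infty/\pi_{min}$. Summing gives $E_1 \leq C_\infty(1+\pi_{min}^{-1})\sqrt{L_t(u_t,\theta(t))} = M_\infty\sqrt{L_t(u_t,\theta(t))}$. This is the heart of the argument and the step I expect to be the main obstacle: it is exactly where entropy regularization pays off, converting the \emph{state-action} concentrability that prior works require into the weaker \emph{state-only} concentrability of Assumption \ref{assumption:dist-mismatch}.

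For $E_2$, since $u_t\in\mathcal{B}_{m,R}^d(0)$ I would bound $|[\nabla f_0(s,a)-\nabla f_t(s,a)]^\top u_t|$ uniformly in $(s,a)$ by $\rho_0(R/\lambda,m,\delta)$ via the third estimate \eqref{eqn:deviation-2} of Lemma \ref{lemma:deviation} on the event $A_0$, which applies because $\max_i\|\theta_i(t)-\theta_i(0)\|_2\leq R/(\lambda\sqrt m)$ by Proposition \ref{prop:ent-reg}. Since $\sum_a|\pi_t(a|s)-\pi^*(a|s)|\leq 2$, this gives $|E_2|\leq 2\rho_0(R/\lambda,m,\delta)$, a term of order $1/\sqrt m$ of the same order as, and tracked alongside, the other overparameterization terms.

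Finally I would take $\bI_A$-expectations (with $A=A_0\cap A_1\cap A_2$) and substitute the already-established bound on $\bE[\bI_A\sqrt{L_t(u_t,\theta(t))}]$ from the preceding proposition. Into that bound I plug Corollary \ref{corollary:unif-app-error}, which gives $\sqrt{\bE[\bI_{A_0\cap A_1}\min_u L_0(u,\theta(t))]}\leq (4{\bar{\nu}}/\sqrt m)((d\log m)^{1/4}+\sqrt{\log(K/\delta)})$, Theorem \ref{thm:mn-td} for the critic mean-squared Bellman error (controlled by $\varepsilon$ with the stated $m',T'$), and Proposition \ref{prop:sgd} for the $\sqrt{Rq_{max}}/N^{1/4}$ SGD term. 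Collecting the three error sources---uniform approximation $\propto{\bar{\nu}}/\sqrt m$, actor SGD $\propto N^{-1/4}$, and critic $\propto\varepsilon$---and multiplying by $M_\infty$ yields a bound of the claimed form, with the residual $\rho_0$ contributions ($O(1/\sqrt m)$) absorbed into the overparameterization accounting; I would not grind the precise leading constants.
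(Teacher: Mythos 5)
Your proposal is correct and takes essentially the same route as the paper: the paper's own proof of this corollary is a one-line substitution of Corollary \ref{corollary:unif-app-error} and Theorem \ref{thm:mn-td} into \eqref{eqn:error-fin-a}, and your argument is just the fully expanded version of that chain. The two steps you single out as the heart of the matter --- the change of measure under Assumption \ref{assumption:dist-mismatch} combined with the $\pi_{min}$ lower bound of Proposition \ref{prop:ent-reg} to produce $M_\infty = C_\infty(1+\pi_{min}^{-1})$, and the Lemma \ref{lemma:deviation} control of the feature-deviation term $[\nabla f_0 - \nabla f_t]^\top u_t$ --- are exactly the steps the paper leaves implicit (they are the only way the factor $M_\infty$ and the $\rho_0$-type residuals can arise in Theorem \ref{thm:npg-main}), so spelling them out is faithful to, not divergent from, the paper's intended proof.
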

    
    \begin{proof}
    In order to prove Corollary \ref{corollary:app-error}, we substitute the results of Corollary \ref{corollary:unif-app-error} and Theorem \ref{thm:mn-td} into \eqref{eqn:error-fin-a}.
    \end{proof}
    
    The main message of Corollary \ref{corollary:app-error} is as follows: in order to eliminate the bias introduced by using (i) function approximation, (ii) sample-based estimation for actor and critic, one should employ more representation power in both actor and critic networks (via $m$ and $m'$), and also use more samples in actor and critic updates (via $N$ and $T'$). Furthermore, Corollary \ref{corollary:app-error} quantifies the required network widths and sample complexities to achieve a desired bias $\epsilon > 0$.
    
    In the following subsection, we finally prove Theorem \ref{thm:npg-main} by using the Lyapunov drift result (Lemma \ref{lemma:ly-drift}) and the approximation error bound (Corollary \ref{corollary:app-error}).
    
    \subsection{Convergence of Entropy-Regularized Natural Actor-Critic}\label{subsec:main-proof}
    \begin{proof}[Proof of Theorem \ref{thm:npg-main}]
    In the following, we prove the first part of Theorem \ref{thm:npg-main}, where the second part follows identical steps with a constant step size $\eta\in(0,1/\lambda)$. First, note that Lemma \ref{lemma:ly-drift} implies the following bound:
    \begin{align}
    \begin{aligned}
        \bI_A\Big[\Psi(\pi_{t+1})-\Psi(\pi_{t})\Big] &\leq -\eta_t\lambda \Psi(\pi_t)\bI_A - \eta_t(1-\gamma)\Delta_t\bI_A+2\eta_t^2R^2 +\eta_t\bI_A\epsilon_{bias}^{\pi_t} \\&+ (\eta_t\lambda + 6) \rho_0(R/\lambda,m,\delta) + 2\eta_tR\sqrt{\rho_0(R/\lambda,m,\delta)}.
        \end{aligned}
    \end{align}
    By Corollary \ref{corollary:app-error}, $$\bE[\bI_A\epsilon_{bias}^{\pi_t}] \leq M_\infty\Big[\rho_1 + \frac{2\sqrt{Rq_{max}}}{N^{1/4}} + 4\varepsilon\Big] =: \epsilon_{bias},$$ where $$\rho_1 = \frac{16\bar{\nu}}{\sqrt{m}}\Big((d\log(m))^{\frac{1}{4}} + \sqrt{\log(K/\delta)}\Big).$$ Let $\overline{\Psi}_t := \bE[\Psi(\pi_t)\bI_A]$. Then,
    \begin{align*}
        \overline{\Psi}_{t+1}-\overline{\Psi}_{t} &\leq -\eta_t\lambda\overline{\Psi}_{t} - \eta_t(1-\gamma)\bE[\bI_A\Delta_t]+2\eta_t^2R^2 + \eta_t\epsilon_{bias}\\
        &+ 7\rho_0(R/\lambda,m,\delta) + 2\eta_tR\sqrt{\rho_0(R/\lambda,m,\delta)}.
    \end{align*}
    Since $\eta_t = \frac{1}{\lambda(t+1)}$, by induction,
    \begin{align*}
        \overline{\Psi}_{t+1} &\leq (1-\eta_t\lambda)\overline{\Psi}_{t} + \eta_t(1-\gamma)\bE[\bI_A\Delta_t] + 2\eta_t^2R^2 \\&\hskip 1.5cm+\eta_t\Big(\epsilon_{bias} + 2R\sqrt{\rho_0(R/\lambda,m,\delta)}\Big) + 7\rho_0(R/\lambda,m,\delta),\\
        &\leq -\frac{(1-\gamma)}{\lambda(t+1)}\sum_{k\leq t}\bE[\bI_A\Delta_t] + \frac{1}{\lambda}\Big(\epsilon_{bias} + 2R\sqrt{\rho_0(R/\lambda,m,\delta)}\Big)+\frac{2R^2\log(t+1)}{\lambda^2(t+1)}\\&\hskip 1.5cm +4(t+1)\rho_0(R/\lambda,m,\delta).
    \end{align*}
    Hence,
    \begin{align}
    \begin{aligned}
        \min_{0\leq t < T}\bE[\bI_A\Delta_t] \leq \frac{1}{1-\gamma}\Big(\epsilon_{bias} + 2R\sqrt{\rho_0(R/\lambda,m,\delta)}\Big)&+\frac{2R^2(1+\log T)}{\lambda(1-\gamma)T} +\frac{4T\lambda}{(1-\gamma)}\rho_0(R/\lambda,m,\delta),
        \end{aligned}
        \label{eqn:final-result}
    \end{align}
    which concludes the proof.
    \end{proof}
    
    \section{Conclusion}
    In this paper, we established global convergence of the two-timescale entropy-regularized NAC algorithm with neural network approximation. We observed that entropy regularization led to significantly improved sample complexity and overparameterization bounds under weaker conditions since it (i) encourages exploration, (ii) controls the movement of the neural network parameters. We characterized the bias due to function approximation and sample-based estimation, and showed that overparameterization and increasing sample-size eliminates bias.
    
    In practice, single-timescale natural policy gradient methods are predominantly used in conjunction with entropy regularization and off-policy sampling \cite{haarnoja2018soft}. The analysis techniques that we develop in this paper can be used to analyze these algorithms.
    
    In supervised learning, softmax parameterization is predominantly used for multiclass classification problems, where natural gradient descent is employed for a better adjustment to the problem geometry \cite{goodfellow2016deep, pascanu2013revisiting, zhang2019fast}. The techniques that we developed in this paper can be useful in establishing convergence results and understanding the role of entropy regularization as well.

\section*{Acknowledgements}
S. \c{C}. would like to thank Siddhartha Satpathi for his help in the proof of Lemma \ref{lemma:unif-bound}. This work is supported in part by NSF TRIPODS grant CCF-1934986. 

\bibliographystyle{IEEEtran}
\bibliography{sample.bib}
\appendix
\section{Proof of Lemma \ref{lemma:unif-bound}}
Consider $g\in\mathcal{F}_{\bar{\nu}}$ with a corresponding transportation map $v\in\mathcal{H}_{\bar{\nu}}$. Using Cauchy-Schwarz inequality,
\begin{align*}
        &\bE \sup_{x:\|x\|_2\leq 1}\Big|g(x)-\frac{1}{m}\sum_{i=1}^m v^\top(\theta_i(0))x\bI\{\theta_i^\top(0)x \geq 0\}\Big|^2\\
        &\le \bE \sup_{x:\|x\|_2\leq 1}\Big\|\frac{1}{m}\sum_{i=1}^m v(\theta_i(0))\bI\{\theta_i^\top(0)x \geq 0\} - \bE v(\theta_i(0))\bI\{\theta_i^\top(0)x \geq 0\} \Big\|^2.
\end{align*}
Define $b_i := v(\theta_i(0))\bI\{\theta_i^\top(0)x\ge 0\}.$ Define a class $B$ containing all possible values taken by $b := \{b_i\}_{i=1}^m$ over $\{x:\|x\|_2\le 1\}$ for a fixed $\theta(0).$ Further, using Cauchy-Schwarz inequality,
\begin{align*}
        \bE \sup_{x:\|x\|_2\leq 1}\Big|g(x)-\frac{1}{m}\sum_{i=1}^m v^\top(\theta_i(0))x\bI\{\theta_i^\top(0)x \geq 0\}\Big|^2\\\le \bE \sup_{b\in B}\frac{1}{m^2}\sum_{i\neq j}^m (b_i-\bE b_i)^{\top}(b_j-\bE b_j)+ \frac{4{\bar{\nu}}^2}{m}. 
\end{align*}
Using the symmetrization argument with Rademacher random variables $\sigma_{ij}$'s,
\begin{equation*}
    \bE \sup_{x:\|x\|_2\leq 1}\Big|g(x)-\frac{1}{m}\sum_{i=1}^m v^\top(\theta_i(0))x\bI\{\theta_i^\top(0)x \geq 0\}\Big|^2\le 4 \bE_{\theta(0)}\bE_r \sup_{b\in B}\frac{1}{m^2}\sum_{i\neq j}^m \sigma_{ij}b_i^{\top}b_j+ \frac{4{\bar{\nu}}^2}{m}.
\end{equation*}
Note that given $\theta(0),$ $B$ is a finite set. We apply Massart's Finite Class lemma to have,
\begin{align*}
    \bE_r\Big[ \sup_{b:b\in B}\frac{1}{m^2}\sum_{i\neq j}^m \sigma_{ij}b_i^{\top}b_j|\theta(0)\Big] \le \sqrt{\sum_{i\neq j}^m \|v(\theta_i(0))\|^2\|v(\theta_j(0))\|^2}\frac{\sqrt{2\log|B|}}{m^2}
\end{align*}
We calculate $|B|$ using VC-theory. Each element $b_i$ of $b,$ partitions the space $\{\|x\|\le 1\}\subset \mathbb{R}^d$ into two half planes where one half takes value $v(\theta_i(0))$ and another half takes value $0.$ Hence all possible values taken by $b$ in space $\{\|x\|\le 1\}$ is equal to the number of components in the partition made by $m$ half planes $\{b_i\}_{i=1}^m$. The number of such components is bounded by $m^{d+1}+1$ using the growth function defined in \cite{VC}. Hence $|B|\le m^{2d}$, and the following holds:
\begin{align*}
    &\bE \sup_{x:\|x\|_2\leq 1}\Big|g(x)-\frac{1}{m}\sum_{i=1}^m v^\top(\theta_i(0))x\bI\{\theta_i^\top(0)x \geq 0\}\Big|^2\le  \frac{12{\bar{\nu}}^2\sqrt{d\log m}}{m}.
\end{align*}
The result follows from Jensen's inequality.

\end{document}